\documentclass[10pt,twocolumn,letterpaper]{article}

\usepackage{iccv}
\usepackage{times}
\usepackage{epsfig}
\usepackage{graphicx}
\usepackage{amsmath}
\usepackage{amssymb}
\usepackage{authblk}


\usepackage{float}
\usepackage{mathtools}
\usepackage{framed}
\usepackage{pbox}
\usepackage{afterpage}
\usepackage{url}
\usepackage{array}
\usepackage{amsfonts}
\usepackage[table]{xcolor}

\usepackage{multirow}
\usepackage{booktabs}
\usepackage{relsize}
\usepackage{xspace}
\usepackage{caption}
\usepackage{subcaption}
\DeclareGraphicsRule{*}{pdf}{*}{}

%
%

\newcommand{\ignore}[1]{}

\usepackage{framed}	
\usepackage{url}
\usepackage{enumerate}
\usepackage{color}

\usepackage{changepage}

\usepackage{algpseudocode}
\usepackage{algorithm}

\usepackage{amsthm}
\theoremstyle{definition}

\newtheorem{pro}{Proposition}[section]

\newtheorem*{rmk}{Remark}
\theoremstyle{plain}
\newtheorem{thm}{Theorem}[section]
\newtheorem{lem}{Lemma}[section]

\newcommand{\SKIP}[1]{}
\newcommand{\calV}{\mathcal{V}}

\newcommand{\bfx}{\mathbf{x}}

\newcommand{\bfy}{\mathbf{y}}
\newcommand{\bfz}{\mathbf{z}}

\newcommand{\I}{\mathbbm{1}}

\newcommand{\R}{\rm I\!R}

\newcommand{\calC}{\mathcal{C}}

\newcommand{\calD}{\mathcal{D}}

\newcommand{\calW}{\mathcal{W}}
\newcommand{\bfw}{\mathbf{w}}

\newcommand{\bfu}{\mathbf{u}}

\newcommand{\calQ}{\mathcal{Q}}
\newcommand{\bfq}{\mathbf{q}}
\newcommand{\tbfw}{\tilde{\mathbf{w}}}
\newcommand{\tbfu}{\tilde{\mathbf{u}}}

\newcommand{\calS}{\Delta^{\!m}}
\newcommand{\tbfq}{\tilde{\mathbf{q}}}
\newcommand{\sign}{\operatorname{sign}}

\newcommand{\softmax}{\operatorname{softmax}}
\newcommand{\hardmax}{\operatorname{hardmax}}
\newcommand{\sparsemax}{\operatorname{sparsemax}}
\newcommand{\tL}{\tilde{L}}
\newcommand{\tu}{\tilde{u}}
\newcommand{\tw}{\tilde{w}}

\newcommand{\bfg}{\mathbf{g}}

\newcommand{\hL}{\hat{L}}

\newcommand{\allweights}{\{1,\ldots, m\}}
\newcommand{\alllabels}{\{1,\ldots, d\}}

\newcommand{\projS}{P_{\calS}}

\newcommand{\kl}[2]{\operatorname{KL(#1\|#2)}}
\newcommand{\E}{\mathbb{E}}

\def\amin#1{{\underset{#1}{\operatorname{argmin}}}}%
\def\amax#1{{\underset{#1}{\operatorname{argmax}}}}%

\newcommand{\NOTE}[1]{}
\def\myref#1{{\color{red}{#1}}}%

\setlength{\fboxsep}{-1pt}%
\setlength{\fboxrule}{1pt}%
\usepackage{xcolor}

\usepackage{bbm}

\graphicspath{{images/}}


\usepackage[page]{appendix}


\usepackage{chngcntr}

\usepackage{etoolbox}

\AtBeginEnvironment{subappendices}{%
\section*{Appendix}
\addcontentsline{toc}{section}{Appendix}
}

\usepackage{xspace}
\makeatletter
\DeclareRobustCommand\onedot{\futurelet\@let@token\@onedot}
\def\@onedot{\ifx\@let@token.\else.\null\fi\xspace}

\def\eg{\emph{e.g}\onedot} 
\def\ie{\emph{i.e}\onedot}

\def\wrt{w.r.t\onedot} 

\makeatother

\usepackage{enumitem}
\newenvironment{tight_itemize}{
\begin{itemize}[leftmargin=10pt]
  \setlength{\topsep}{0pt}
  \setlength{\itemsep}{0pt}
  \setlength{\parskip}{0pt}
  \setlength{\parsep}{0pt}
}{\end{itemize}}
\newenvironment{tight_enumerate}{
\begin{enumerate}[leftmargin=10pt]
  \setlength{\topsep}{0pt}
  \setlength{\itemsep}{0pt}
  \setlength{\parskip}{0pt}
  \setlength{\parsep}{0pt}
}{\end{enumerate}}




\def\figref#1{Fig.~\ref{#1}}

\def\secref#1{Sec.~\ref{#1}}
\def\eqref#1{Eq.~(\ref{#1})}

\def\plaineqref#1{(\ref{#1})}
\def\algref#1{Algorithm~\ref{#1}}

\def\thmref#1{Theorem~\ref{#1}}
\def\lemref#1{Lemma~\ref{#1}}
\def\proref#1{Proposition~\ref{#1}}
\def\tabref#1{Table~\ref{#1}}

\algnewcommand\INPUT{\item[\textbf{Input:}]}%
\algnewcommand\OUTPUT{\item[\textbf{Output:}]}%

\usepackage[acronym,smallcaps]{glossaries} 
\makeglossaries 
\newacronym{MRF}{mrf}{Markov Random Field}
\newacronym{SGD}{sgd}{Stochastic Gradient Descent}
\newacronym{GD}{gd}{Gradient Descent}
\newacronym{PGD}{pgd}{Projected Gradient Descent}
\newacronym{ICM}{icm}{Iterative Conditional Modes}
\newacronym{DNN}{dnn}{Deep Neural Networks}
\newacronym{NN}{nn}{Neural Network}
\newacronym{PMF}{pmf}{Proximal Mean-Field}
\newacronym{PICM}{picm}{Proximal Iterative Conditional Modes}
\newacronym{BC}{bc}{BinaryConnect}
\newacronym{BWN}{bwn}{Binary Weight Network}
\newacronym{IP}{ip}{Integer Programming}
\newacronym{fc}{fc}{fully-connected}
\newacronym{REF}{ref}{Reference Network}
\newacronym{KL}{kl}{KL}
\newacronym{LR}{lr}{LR}
\newacronym{MAP}{map}{Maximum a Posteriori}
\newacronym{PQ}{pq}{ProxQuant}

\newcommand{\slenet}[1]{{\small LeNet#1}}

\newcommand{\svgg}[1]{{\small VGG#1}}
\newcommand{\sresnet}[1]{{\small ResNet#1}}

\newcommand{\srelu}{{\small ReLU}}



\newcommand{\mnist}{{\small MNIST}}

\newcommand{\cifar}[1]{{\small CIFAR#1}}
\newcommand{\tinyimagenet}{{TinyImageNet}}

\usepackage{pifont}

\newcommand{\fro}{}

\newif\ifsupp
\suppfalse

\newif\ifarxiv
\arxivfalse
\def\arxivcopy{\global\arxivtrue}



\usepackage[pagebackref=true,breaklinks=true,letterpaper=true,colorlinks,bookmarks=false]{hyperref}

\iccvfinalcopy 
\arxivcopy 

\ifarxiv
\ifsupp
\global\suppfalse
\fi
\fi


\makeatletter
\renewcommand\AB@affilsepx{\quad\quad	\protect\Affilfont}
\makeatother
\ifsupp
\title{Proximal Mean-field for Neural Network Quantization\\ Supplementary Material}
 \setcounter{equation}{18}
 \setcounter{figure}{2}
 \setcounter{table}{2}
 \setcounter{algorithm}{2}
\else
\title{Proximal Mean-field for Neural Network Quantization}
\fi

\author[1]{Thalaiyasingam Ajanthan\thanks{Part of the work was done while at the University of Oxford.}}
\author[2]{Puneet K. Dokania}
\author[1]{Richard Hartley}
\author[2]{Philip H. S. Torr}
\affil[1]{Australian National University}
\affil[2]{University of Oxford}

\ificcvfinal\pagestyle{empty}\fi
\begin{document}

\maketitle

\ifsupp
\appendix

Here, we provide the proofs of propositions and theorems stated in the main
paper and a self-contained overview of the mean-field method.
Later in~\secref{ap:expr}, we give the experimental details to allow
reproducibility, and more empirical analysis for our \gls{PMF} algorithm. 


\SKIP{
\section{Entropy based view of Softmax}\label{ap:sm}
Recall the $\softmax$ update for $\tbfu\in\R^d$ and $\beta>0$:
\begin{align}
\label{eq:sm1}
\bfu &= \softmax({\beta\tbfu})\ ,\quad\mbox{where} \\ \nonumber
u_{\lambda} &= \frac{e^{\beta \tu_{\lambda}}}{\sum_{\mu  \in \calQ} \; e^{\beta
\tu_{\mu}}}\quad \forall\, \lambda\in\alllabels\ .
\end{align}

\begin{lem}\label{lem:sm1}
Let $\bfu = \softmax(\beta\tbfu)$ for some $\tbfu\in\R^d$ and $\beta>0$. Then,
\begin{equation}\label{eq:sm_entropy1}
\bfu = \amax{\bfz\in\Delta}\ \left\langle\tbfu, \bfz\right\rangle\fro +
\frac{1}{\beta}H(\bfz)\ ,
\vspace{-1ex}
\end{equation} 
where ${H(\bfz) = -\sum_{\lambda=1}^{d}z_{\lambda}\,\log z_{\lambda}}$ is the
entropy.
\end{lem}
\begin{proof}
Now, ignoring the condition $u_{\lambda}\ge 0$ for now, the Lagrangian
of~\eqref{eq:sm_entropy1} with dual variable $y$ can be
written as:
\begin{align}
F(\bfz, y) = &\beta\left\langle \tbfu, \bfz\right\rangle\fro + H(\bfz) +
y\left(1-\sum_{\lambda}z_{\lambda}\right)\ .
\end{align}
Note that the objective function is multiplied by $\beta>0$.
Now, differentiating $F(\bfz, y)$ with respect to $\bfz$ and setting the
derivatives to zero:
\begin{align}\label{eq:ujl0}
\frac{\partial F}{z_{\lambda}} &= \beta\,\tu_{\lambda} - 1 - \log z_{\lambda} -
y = 0\ ,\\\nonumber 
\log z_{\lambda} &= -1 - y + \beta\,\tu_{\lambda}\ ,\\\nonumber
z_{\lambda} &= e^{-1 - y}\,e^{\beta \tu_{\lambda}}\ .
\end{align}
Since $\sum_{\mu} z_{\mu} =1$,
\begin{align}
\sum_{\mu} z_{\mu} = 1 &= \sum_{\mu} e^{-1 - y}\,e^{\beta
\tu_{\mu}}\ ,\\\nonumber
e^{-1 - y} &= \frac{1}{\sum_{\mu}e^{\beta
\tu_{\mu}}}\ .
\end{align}
Substituting in~\eqref{eq:ujl0}, 
\begin{equation}
u_{\lambda} = \frac{e^{\beta
\tu_{\lambda}}}{\sum_{\mu} e^{\beta
\tu_{\mu}}}\ .
\end{equation}
Note that, $u_{\lambda}\ge 0$ for all $\lambda\in\alllabels$, and
therefore, $\bfu$ satisfies~\eqref{eq:sm_entropy1} which is exactly the
$\softmax$ update~\plaineqref{eq:sm1}.
Hence, the proof is complete.
Furthermore, this proof trivially extends to the case where $m>1$, \ie, when the
$\softmax$ update is defined for each $\tbfu_j$ (where $\tbfu_j\in\R^d$ for
$j\in\allweights$) independently.
\end{proof}

\SKIP{
Recall the $\softmax$ update for $\tbfu^{k}_j$ for $j\in\allweights$:
\begin{align}
\label{eq:sm1}
\bfu^{k}_j &= \softmax({\beta\tbfu^{k}_j})\ ,\quad\mbox{where} \\ \nonumber
u^{k}_{j:\lambda} &= \frac{e^{\beta \tu^{k}_{j:\lambda}}}{\sum_{\mu  \in \calQ}
\; e^{\beta
\tu^{k}_{j:\mu}}}\quad \forall\, \lambda\in\alllabels\ .
\end{align}

\begin{lem}\label{lem:sm1}
Let $\bfu^k = \softmax(\beta\tbfu^k)$. Then,
\begin{equation}\label{eq:sm_entropy1}
\bfu^k = \amax{\bfu\in\calS}\ \left\langle\tbfu^{k}, \bfu\right\rangle\fro +
\frac{1}{\beta}H(\bfu)\ ,
\end{equation} 
where ${H(\bfu) = -\sum_{j=1}^m\sum_{\lambda=1}^{d}u_{j:\lambda}\,\log
u_{j:\lambda}}$ is the entropy.
\end{lem}
\begin{proof}
Now, ignoring the condition $u_{j:\lambda}\ge 0$ for now, the Lagrangian
of~\eqref{eq:sm_entropy1} with dual variables $z_j$ with $j\in\allweights$ can
be
written as:
\begin{align}
F(\bfu, \bfz) = &\beta\left\langle \tbfu^{k}, \bfu\right\rangle\fro + H(\bfu)
+\sum_{j}z_j\left(1-\sum_{\lambda}u_{j:\lambda}\right)\ .
\end{align}
Note that the objective function is multiplied by $\beta>0$.
Now, differentiating $F(\bfu, \bfz)$ with respect to $\bfu$ and setting the
derivatives to zero:
\begin{align}
\frac{\partial F}{u_{j:\lambda}} &= \beta\,\tu^{k}_{j:\lambda} - 1 - \log
u_{j:\lambda} - z_j = 0\ ,\\\nonumber 
\log u_{j:\lambda} &= -1 - z_j + \beta\,\tu^{k}_{j:\lambda}\ ,\\\nonumber
\label{eq:ujl0}
u_{j:\lambda} &= e^{-1 - z_j}\,e^{\beta \tu^{k}_{j:\lambda}}\ .
\end{align}
Since $\sum_{\mu} u_{j:\mu} =1$,
\begin{align}
\sum_{\mu} u_{j:\mu} = 1 &= \sum_{\mu} e^{-1 - z_j}\,e^{\beta
\tu^{k}_{j:\mu}}\ ,\\\nonumber
e^{-1 - z_j} &= \frac{1}{\sum_{\mu}e^{\beta
\tu^{k}_{j:\mu}}}\ .
\end{align}
Substituting in~\eqref{eq:ujl0}, 
\begin{equation}
u_{j:\lambda} = \frac{e^{\beta
\tu^{k}_{j:\lambda}}}{\sum_{\mu} e^{\beta
\tu^{k}_{j:\mu}}}\ .
\end{equation}
Note that, $u_{j:\lambda}\ge 0$ for all $j\in\allweights$ and $\lambda\in\alllabels$, and
therefore, $\bfu$ satisfies~\eqref{eq:sm_entropy1} which is exactly the
$\softmax$ update~\plaineqref{eq:sm1}.
Hence, the proof is complete.
\end{proof}
}

} 

\section{Mean-field Method}\label{ap:mf}
For completeness we briefly review the underlying theory of the mean-field
method.
For in-depth details, we refer the interested reader to the Chapter~\myref{5}
of~\cite{wainwright2008graphical}.
Furthermore, for background on \acrfull{MRF}, we refer the reader to the
Chapter~\myref{2} of~\cite{ajanthanphdthesis}.
In this section, we use the notations from the main paper and highlight the
similarities wherever possible.

\paragraph{Markov Random Field.}
Let $\calW = \{W_1,\ldots,W_m\}$ be a set of random variables, where each random
variable $W_j$ takes a label $w_j\in\calQ$. 
For a given labelling $\bfw\in\calQ^m$, the energy associated with an \gls{MRF}
can be written as:
\begin{equation}\label{eq:mrfe}
L(\bfw) = \sum_{C\in\calC} L_C(\bfw)\ , 
\end{equation}
where $\calC$ is the set of subsets (cliques) of $\calW$ and $L_C(\bfw)$ is a
positive function (factor or clique potential) that depends only on the values
$w_j$ for $j\in C$.
Now, the joint probability distribution over the random variables can be written
as:
\begin{equation}\label{eq:mrfp}
P(\bfw) = \frac{1}{Z}\exp(-L(\bfw))\ , 
\end{equation}
 where the normalization constant $Z$ is usually referred to as the partition
function.
 From Hammersley-Clifford theorem, for the factorization given
in~\eqref{eq:mrfe}, the joint probability distribution $P(\bfw)$ can be shown to
factorize over each clique $C\in\calC$, which is essentially the Markov
property.
 However, this Markov property is not necessary to write~\eqref{eq:mrfp} and in
turn for our formulation, but since mean-field is usually described in the
context of \gls{MRF}s we provide it here for completeness.
 The objective of mean-field is to obtain the most probable configuration, which
is equivalent to minimizing the energy $L(\bfw)$.

\paragraph{Mean-field Inference.} 
 The basic idea behind mean-field is to approximate the intractable probability
distribution $P(\bfw)$ with a tractable one.
 Specifically, mean-field obtains a fully-factorized distribution (\ie, each
random variable $W_j$ is independent) closest to the true distribution $P(\bfw)$
in terms of \acrshort{KL}-divergence. 
Let $U(\bfw) = \prod_{j=1}^m U_{j}(w_j)$ denote a fully-factorized
distribution.
\SKIP{ 
For simplicity, we introduce variables $u_{j:\lambda}$ to denote the probability
of random variable $W_j$ taking the label $\lambda$, where the vector $\bfu$
satisfy:
\begin{equation}
\bfu\in\calS = \left\{\begin{array}{l|l}
\multirow{2}{*}{$\bfu$} & \sum_{\lambda} u_{j:\lambda} = 1, \quad\forall\,j\\
&u_{j:\lambda} \ge 0,\ \ \ \quad\quad\forall\,j, \lambda \end{array} \right\}\
.
\end{equation}
}
Recall, the variables $\bfu$ introduced in~Sec.~\myref{2.2} represent the
probability of each weight $W_j$ taking a label $q_\lambda$.
Therefore, the distribution $U$ can be represented using the variables
$\bfu\in\calS$, where $\calS$ is defined as:
\begin{equation}
\calS = \left\{\begin{array}{l|l}
\multirow{2}{*}{$\bfu$} & \sum_{\lambda} u_{j:\lambda} = 1, \quad\forall\,j\\
&u_{j:\lambda} \ge 0,\ \ \ \quad\quad\forall\,j, \lambda \end{array} \right\}\
.
\end{equation} 
The \acrshort{KL}-divergence between $U$ and $P$ can be written as:
\begin{align}\label{eq:mfkl}
\kl{U}{P} &= \sum_{\bfw\in\calQ^m} U(\bfw)\log\frac{U(\bfw)}{P(\bfw)}\
,\\\nonumber
 &= \sum_{\bfw\in\calQ^m} U(\bfw)\log U(\bfw) - \sum_{\bfw\in\calQ^m}
U(\bfw)\log P(\bfw)\ ,\\\nonumber
 &= -H(U) - \sum_{\bfw\in\calQ^m} U(\bfw)\log \frac{\exp(-L(\bfw))}{Z}\
,\quad\mbox{\eqref{eq:mrfp}}\ ,\\\nonumber
 &= -H(U) + \sum_{\bfw\in\calQ^m} U(\bfw)L(\bfw) + \log Z\ .\\\nonumber
\end{align}
Here, $H(U)$ denotes the entropy of the fully-factorized distribution. 
Specifically,
\begin{equation}
H(U) = H(\bfu) = -\sum_{j=1}^m\sum_{\lambda=1}^{d}u_{j:\lambda}\,\log
u_{j:\lambda}\ .
\end{equation} 
Furthermore, in~\eqref{eq:mfkl}, since $Z$ is a constant, it can be removed from
the minimization.
Hence the final mean-field objective can be written as:
\begin{align}
\min_{U} F(U) &:= \sum_{\bfw\in\calQ^m} U(\bfw)L(\bfw) - H(U)\ ,\\\nonumber
&= \E_{U}[L(\bfw)] - H(U)\ ,\\\nonumber
\end{align}
where $\E_{U}[L(\bfw)]$ denotes the expected value of the loss $L(\bfw)$ over
the distribution $U(\bfw)$.
Note that, the expected value of the loss can be written as a function of the
variables $\bfu$. 
In particular, 
\begin{align}
E(\bfu) &:= \E_{U}[L(\bfw)] =  \sum_{\bfw\in\calQ^m} U(\bfw)L(\bfw)\
,\\\nonumber
&=  \sum_{\bfw\in\calQ^m} \prod_{j=1}^m u_{j:w_j} L(\bfw)\ .\\\nonumber
\end{align}
Now, the mean-field objective can be written as an optimization over $\bfu$:
\begin{equation}\label{eq:mfobj}
\min_{\bfu\in\calS} F(\bfu) := E(\bfu) - H(\bfu)\ .
\end{equation}
Computing this expectation $E(\bfu)$ in general is intractable as the sum is
over an exponential number of elements ($|\calQ|^m$ elements, where $m$ is
usually in the order millions for an image or  a neural network).
However, for an \gls{MRF}, the energy function $L(\bfw)$ can be factorized
easily as in~\eqref{eq:mrfe} (\eg, unary and pairwise terms) and $E(\bfu)$ can
be computed fairly easily as the distribution $U$ is also fully-factorized.

In mean-field, the above objective~\plaineqref{eq:mfobj} is minimized
iteratively using a fixed point update. 
This update is derived by writing the Lagrangian and setting the derivatives
with respect to $\bfu$ to zero.
At iteration $k$, the mean-field update for each $j\in\allweights$ can be written
as:
\begin{equation}
u^{k+1}_{j:\lambda} = \frac{\exp(-\partial E^k/\partial
u_{j:\lambda})}{\sum_{\mu} \exp(-\partial E^k/\partial u_{j:\mu})}\quad
\forall\,\lambda\in\alllabels\ .
\end{equation}
Here, $\partial E^{k}/\partial u_{j:\lambda}$ denotes the gradient of
$E(\bfu)$ with respect to $u_{j:\lambda}$ evaluated at $u^k_{j:\lambda}$.
This update is repeated until convergence.
Once the distribution $U$ is obtained, finding the most probable configuration
is straight forward, since $U$ is a product of independent distributions over
each random variable $W_j$.
Note that, as most probable configuration is exactly the minimum label
configuration, the mean-field method iteratively minimizes the actual energy
function $L(\bfw)$.

\SKIP{
\paragraph{Similarity to Our Algorithm.}
Note that, the mean-field objective~\plaineqref{eq:mfobj} and our objective at
each iteration~\plaineqref{eq:spgd_mf_p1} are very similar. 
First of all, $\beta=1$ in the mean-field case.
Then, the expectation $E(\bfu)$ is replaced by the first-order approximation of
$\tL(\bfu)$ augmented by the proximal term $\left\langle\bfu^{k},
\bfu\right\rangle$.
Specifically,
\begin{equation}
E(\bfu) = \E_{\bfu}[L(\bfw)] \approx \eta\left\langle \bfg^{k}_{\bfu},
\bfu\right\rangle - \left\langle\bfu^{k}, \bfu\right\rangle\ .
\end{equation}
Note that the exact form of the first-order Taylor approximation of $\tL(\bfu)$
at $\bfu^k$ is:
\begin{align}
\tL(\bfu) &\approx \tL(\bfu^k) + \left\langle \bfg^{k}_{\bfu},
\bfu-\bfu^k\right\rangle\ ,\\\nonumber
&= \left\langle \bfg^{k}_{\bfu}, \bfu\right\rangle + c\ ,
\end{align} 
where $c$ is a constant that does not depend on $\bfu$.
In fact, minimizing the first-order approximation of $\tL(\bfu)$ can be shown to
be equivalent to minimizing the expected first-order approximation of
$L(\bfw)$.
To this end, it is enough to show the relationship between $\left\langle
\bfg^{k}_{\bfu}, \bfu\right\rangle$ and $\E_U\left[\left\langle \bfg^{k}_{\bfw},
\bfw\right\rangle\right]$.
\begin{pro}\label{pro:exp-linear1}
Let $\bfw^k = \bfu^k\bfw$, and $\bfg^k_{\bfw}$ and $\bfg^k_{\bfu}$ be gradients
of $L$ and $\tL$ computed at $\bfw^k$ and $\bfu^k$, respectively. Then,
\begin{equation}
\left\langle \bfg^{k}_{\bfu}, \bfu\right\rangle\frac{\bfq}{\bfq^T\bfq} =
\E_{\bfu}\left[\left\langle \bfg^{k}_{\bfw}, \bfw\right\rangle\right]\ ,
\end{equation} 
where $\bfu\in\calS$, $\bfw\in\calQ^m$. 
\end{pro} 
\begin{proof}
The proof is simply applying the definition of $\bfu$ and the chain rule for
$\bfw=\bfu\bfq$.
\begin{align}
\E_{\bfu}\left[\left\langle \bfg^{k}_{\bfw}, \bfw\right\rangle\right] &=
\left\langle \bfg^{k}_{\bfw}, \sum_{\bfw\in\calQ^m} \prod_{j=1}^m u_{j:w_j}
\bfw\right\rangle\ ,\\\nonumber
&= \left\langle \bfg^{k}_{\bfw}, \bfu\bfq\right\rangle\ ,\quad\mbox{Definition
of $\bfu$}\ ,\\\nonumber
&= \left\langle \bfg^{k}_{\bfu}\frac{\bfq}{\bfq^T\bfq}, \bfu\bfq\right\rangle\
,\quad\mbox{$\bfg^{k}_{\bfu}=\bfg^{k}_{\bfw}\bfq^T$}\ ,\\\nonumber
&= \left\langle \bfg^{k}_{\bfu}, \bfu\right\rangle\frac{\bfq}{\bfq^T\bfq}\ .
\end{align}
\end{proof}
Since, $\bfq/\left(\bfq^T\bfq\right)$ is constant, our \acrfull{PMF} is
analogous to mean-field in the sense that it minimizes the first-order
approximation of the actual loss function $L(\bfw)$ augmented by the proximal
term (\ie, cosine similarity).
Note that, in contrast to the standard mean-field, our iterative updates are
exactly solving the problem~\plaineqref{eq:spgd_mf_p1}.

Specifically, our algorithm first linearizes the non-convex objective $L(\bfw)$,
adds a proximal term, and then performs an exact mean-field update, while the
standard mean-field iteratively minimizes the original function $L(\bfw)$.
In fact, by first linearizing, we discard any higher-order factorizations in the
objective $L(\bfw)$, which makes our approach attractive to neural networks.
Furthermore, it allows us to use any off-the-shelf \gls{SGD} based method. 
\NOTE{to be revised and some stuff in the main paper}
}

\SKIP{
\section{Softmax based PGD as Proximal Mean-field}\label{ap:spgdmf}
Recall the $\softmax$ based \gls{PGD} update ${\bfu^{k+1} =
\softmax(\beta(\bfu^k - \eta\,\bfg^k))}$ for each $j\in\allweights$ can be
written as:
\begin{align}\label{eq:spgdup1}
u^{k+1}_{j:\lambda} &= \frac{e^{\beta
\left(u^{k}_{j:\lambda}-\eta\,g^{k}_{j:\lambda}\right)}}{\sum_{\mu  \in \calQ}
\; e^{\beta
\left(u^{k}_{j:\mu}-\eta\,g^{k}_{j:\mu}\right)}}\quad \forall\, \lambda\in\alllabels\
.
\end{align}
Here, $\eta>0$, and $\beta>0$.

\begin{thm}\label{thm:spgd_mf1}
Let $\bfu^{k+1} = \softmax({\beta(\bfu^k -\eta\,\bfg^k)})$ be the point from the
$\softmax$ based \gls{PGD} update. Then,
\begin{equation}\label{eq:spgd_mf_p1}
\bfu^{k+1} = \amin{\bfu\in\calS}\ \eta\,\E_{\bfu}\left[\hL^k(\bfw)\right] -
\left\langle\bfu^{k}, \bfu\right\rangle\fro - \frac{1}{\beta}H(\bfu)\ ,
\end{equation}
where $\hL^k(\bfw)$ is the first-order Taylor approximation of $L$ at
$\bfw^k=\bfu^k\bfq$ and $\eta>0$ is the learning rate.
\end{thm}
\begin{proof}
We will first prove that $\E_{\bfu}\left[\hL^k(\bfw)\right] = \left\langle
\bfg_{\bfu}^k, \bfu\right\rangle\fro + c$ for some constant $c$.
From the definition of $\hL^k(\bfw)$,
\begin{align}
\hL^k(\bfw) &= L(\bfw^k) + \left\langle \bfg^{k}_{\bfw},
\bfw-\bfw^k\right\rangle\ ,\\\nonumber
&= \left\langle \bfg^{k}_{\bfw}, \bfw\right\rangle + c\ ,
\end{align}
where $c$ is a constant that does not depend on $\bfw$.
Now, the expectation over $\bfu$ can be written as:
\begin{align}
\E_{\bfu}\left[\hL^k(\bfw)\right] &= \E_{\bfu}\left[\left\langle
\bfg^{k}_{\bfw}, \bfw\right\rangle\right] + c\ ,\\\nonumber
&= \left\langle \bfg^{k}_{\bfw}, \E_{\bfu}[\bfw]\right\rangle + c\ ,\\\nonumber
&= \left\langle \bfg^{k}_{\bfw}, \bfu\bfq\right\rangle + c\
,\quad\mbox{Definition of $\bfu$}\ .
\end{align} 
We will now show that $\left\langle \bfg^{k}_{\bfw},
\bfu\bfq\right\rangle=\left\langle \bfg^{k}_{\bfu}, \bfu\right\rangle\fro$.
To see this, let us consider an element $j\in\allweights$,
\begin{align}
g^{k}_{w_j} \langle\bfu_j,\bfq\rangle & = g^{k}_{w_j} \langle\bfq,\bfu_j\rangle
\ ,\\\nonumber
&= g^{k}_{w_j} \bfq^T\bfu_j\ ,\\\nonumber
&= g^{k}_{\bfu_j} \bfu_j\ ,\quad\mbox{$\bfg^k_{\bfu} = \bfg^k_{\bfw}\,\bfq^T$}\
.
\end{align}
From the above equivalence,~\eqref{eq:spgd_mf_p1} can now be written as:
\begin{align}\label{eq:spgd_mf_p2}
\bfu^{k+1} &= \amin{\bfu\in\calS}\ \eta\left\langle \bfg_{\bfu}^k,
\bfu\right\rangle\fro - \left\langle\bfu^{k}, \bfu\right\rangle\fro -
\frac{1}{\beta}H(\bfu)\ ,\\\nonumber
&= \amax{\bfu\in\calS}\ \left\langle \bfu^k - \eta\,\bfg_{\bfu}^k,
\bfu\right\rangle\fro + \frac{1}{\beta}H(\bfu)\ .
\end{align}
Now, from~\lemref{lem:sm1} we can write ${\bfu^{k+1} = \softmax({\beta(\bfu^k
-\eta\,\bfg^k)})}$.
\SKIP{
Now, ignoring the condition $u_{j:\lambda}\ge 0$ for now, the Lagrangian
of~\eqref{eq:spgd_mf_p2} with dual variables $z_j$ with $j\in\allweights$ can
be
written as:\footnote{For notational clarity, we denote the gradient of $\tL$
with respect to $\bfu$ evaluated at $\bfu^k$ as $\bfg^k$, \ie, $\bfg^k :=
\bfg^k_{\bfu}$.}
\begin{align}
F(\bfu, \bfz) = &\beta\eta\left\langle \bfg^{k}, \bfu\right\rangle\fro -
\beta\left\langle\bfu^{k}, \bfu\right\rangle\fro - H(\bfu) +\\\nonumber
&\sum_{j}z_j\left(1-\sum_{\lambda}u_{j:\lambda}\right)\ .
\end{align}
Note that the objective function is multiplied by $\beta>0$.
Now, differentiating $F(\bfu, \bfz)$ with respect to $\bfu$ and setting the
derivatives to zero:
\begin{align}
\frac{\partial F}{u_{j:\lambda}} &= \beta\eta\,g^{k}_{j:\lambda} -
\beta\,u^{k}_{j:\lambda} + 1 + \log u_{j:\lambda} - z_j = 0\
,\\[-0.5cm]\nonumber \log u_{j:\lambda} &= z_j - 1 + \beta\,u^{k}_{j:\lambda} -
\beta\eta\,g^{k}_{j:\lambda}\ ,\\\nonumber
\label{eq:ujl}
u_{j:\lambda} &= e^{z_j - 1}\,e^{\beta
\left(u^{k}_{j:\lambda}-\eta\,g^{k}_{j:\lambda}\right)}\ .
\end{align}
Since $\sum_{\mu} u_{j:\mu} =1$,
\begin{align}
\sum_{\mu} u_{j:\mu} = 1 &= \sum_{\mu} e^{z_j - 1}\,e^{\beta
\left(u^{k}_{j:\mu}-\eta\,g^{k}_{j:\mu}\right)}\ ,\\\nonumber
e^{z_j - 1} &= \frac{1}{\sum_{\mu}e^{\beta
\left(u^{k}_{j:\mu}-\eta\,g^{k}_{j:\mu}\right)}}\ .
\end{align}
Substituting in~\eqref{eq:ujl}, 
\begin{equation}
u_{j:\lambda} = \frac{e^{\beta
\left(u^{k}_{j:\lambda}-\eta\,g^{k}_{j:\lambda}\right)}}{\sum_{\mu} e^{\beta
\left(u^{k}_{j:\mu}-\eta\,g^{k}_{j:\mu}\right)}}\ .
\end{equation}
Note that, $u_{j:\lambda}\ge 0$ for all $j\in\allweights$ and $\lambda\in\alllabels$, and
therefore, $\bfu$ is a fixed point of~\eqref{eq:spgd_mf_p1}. Furthermore,
this is exactly the same formula as the softmax based \gls{PGD}
update~\plaineqref{eq:spgdup1}.
}
Hence, the proof is complete.
\end{proof}
} 

\section{BinaryConnect as Proximal ICM}\label{ap:bc_vs_pgdh}
\begin{pro}\label{pro:bc_vs_pgdh1}
%
Consider \gls{BC} and \gls{PICM} with $\bfq = [-1,1]^T$ and $\eta_{\bfw} > 0$.
For an iteration $k>0$, if $\tbfw^k= \tbfu^k\bfq$ then, 
\begin{enumerate}
  \item the projections in \acrshort{BC}: ${\bfw^k = \sign(\tbfw^k)}$ and\\ 
   \acrshort{PICM}: ${\bfu^k = \hardmax(\tbfu^k)}$  
  satisfy  ${\bfw^k = \bfu^k\bfq}$.
  
  \item let the learning rate of \acrshort{PICM} be  $\eta_{\bfu} = \eta_{\bfw}/2$, then the updated points after the
gradient descent step in \acrshort{BC} and \acrshort{PICM} satisfy
  ${\tbfw^{k+1} = \tbfu^{k+1}\bfq}$.
\end{enumerate}
\end{pro}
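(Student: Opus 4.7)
The plan is to verify the two assertions in turn, both of which essentially reduce to direct computations once the dictionary between the two representations is set up. Throughout, write $\bfq = [-1, 1]^T$, so each $\tbfu_j^k \in \R^2$ has components $\tu_{j:1}^k$ and $\tu_{j:2}^k$, and the hypothesis $\tbfw^k = \tbfu^k \bfq$ reads componentwise as $\tw_j^k = -\tu_{j:1}^k + \tu_{j:2}^k$.

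For part (1), the plan is to split into the two natural cases according to the sign of $\tw_j^k$ (equivalently, the argmax of $\tbfu_j^k$). In \gls{BC}, $w_j^k = \sign(\tw_j^k) = \sign(\tu_{j:2}^k - \tu_{j:1}^k)$. In \gls{PICM}, $\bfu_j^k = \hardmax(\tbfu_j^k)$ is the one-hot vector picking out the larger of the two components, so $\bfu_j^k \bfq$ evaluates to $+1$ when $\tu_{j:2}^k > \tu_{j:1}^k$ and to $-1$ otherwise. Reading off the two cases shows $w_j^k = \bfu_j^k \bfq$ for every $j$, which is the required identity. (The tied case $\tu_{j:1}^k = \tu_{j:2}^k$ is handled by fixing any consistent tie-breaking rule in $\sign$ and $\hardmax$.)

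For part (2), the key ingredient is the chain-rule identity $\bfg_{\bfu}^k = \bfg_{\bfw}^k \bfq^T$ already used elsewhere in the excerpt (it arises because, under $\bfw = \bfu\bfq$, the auxiliary loss $\tL(\bfu)$ satisfies $\tL(\bfu) = L(\bfu\bfq)$). Using this, I compute
\begin{align}
\tbfu^{k+1}\bfq &= (\tbfu^k - \eta_{\bfu}\,\bfg_{\bfu}^k)\,\bfq \nonumber\\
&= \tbfu^k\bfq - \eta_{\bfu}\,\bfg_{\bfw}^k\,\bfq^T\bfq \nonumber\\
&= \tbfw^k - \eta_{\bfu}\,(\bfq^T\bfq)\,\bfg_{\bfw}^k .
\end{align}
Since $\bfq^T\bfq = (-1)^2 + 1^2 = 2$, setting $\eta_{\bfu} = \eta_{\bfw}/2$ gives $\eta_{\bfu}(\bfq^T\bfq) = \eta_{\bfw}$, and the right-hand side becomes $\tbfw^k - \eta_{\bfw}\,\bfg_{\bfw}^k = \tbfw^{k+1}$, as claimed.

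The hardest part is really just bookkeeping: being explicit about the chain-rule relation $\bfg_{\bfu}^k = \bfg_{\bfw}^k \bfq^T$ (and verifying it is evaluated at matching points, which is exactly what part (1) guarantees since $\bfw^k = \bfu^k\bfq$), and handling tie-breaking in the $\sign/\hardmax$ projections consistently. There is no genuine obstacle beyond these conventions.
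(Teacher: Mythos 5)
Your part (1) is correct and follows essentially the same route as the paper: a case analysis on which component of $\tbfu^k_j$ is larger, combined with the hypothesis $\tw^k_j = \tbfu^k_j\bfq$, gives $\sign(\tbfw^k) = \hardmax(\tbfu^k)\bfq$ directly.

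Part (2), however, has a genuine gap. In both \acrshort{BC} and \acrshort{PICM} the gradient step is taken on the \emph{auxiliary} variables using the gradient backpropagated \emph{through the projection}: the updates are $\tbfw^{k+1} = \tbfw^{k} - \eta_{\bfw}\,\bfg_{\tbfw}^k$ with $\bfg_{\tbfw}^k = \bfg_{\bfw}^k\,\partial\bfw/\partial\tbfw$ (straight-through estimator for $\sign$), and $\tbfu^{k+1} = \tbfu^{k} - \eta_{\bfu}\,\bfg_{\tbfu}^k$ with $\bfg_{\tbfu}^k = \bfg_{\bfu}^k\,\partial\bfu/\partial\tbfu$ (the analogous surrogate derivative for $\hardmax$). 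Your computation writes the \acrshort{PICM} update as $\tbfu^{k+1} = \tbfu^k - \eta_{\bfu}\,\bfg_{\bfu}^k$ and the \acrshort{BC} update as $\tbfw^{k+1} = \tbfw^k - \eta_{\bfw}\,\bfg_{\bfw}^k$, silently dropping the Jacobians of the two projections on both sides. With those Jacobians restored, the identity actually needed is $\bfg_{\tbfu_j}^k\,\bfq = 2\,g_{\tw_j}^k$, and it holds only because
\begin{equation*}
\frac{\partial \hardmax}{\partial \tbfu_j}\,\bfq \;=\; \frac{\partial \sign}{\partial \tw_j}\,\bfq\ ,
\end{equation*}
which in turn depends on how $\partial\hardmax/\partial\tbfu$ is \emph{defined} (the paper rewrites $\hardmax$ in terms of $\sign(\tu^k_{j:-1}-\tu^k_{j:1})$ and applies the same straight-through estimator; see Appendix~B.1). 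Establishing this compatibility of the two surrogate derivatives is the real content of part (2); your ingredients $\bfg_{\bfu}^k = \bfg_{\bfw}^k\bfq^T$ and $\bfq^T\bfq = 2$ are necessary but not sufficient. As written, your argument proves the equivalence only for idealized updates that descend along the gradients at the projected points, which is not what either algorithm does.
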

\begin{proof}
\begin{enumerate}
  \item In the binary case ($\calQ = \{-1,1\}$), for each $j\in\allweights$,
  the $\hardmax$ projection can be written as:
  \begin{align}\label{eq:puarg}
  u^k_{j:-1} &= \left\{\begin{array}{ll}
1 & \mbox{if $\tu^k_{j:-1} \ge \tu^k_{j:1}$}\\\nonumber
0 & \mbox{otherwise} \end{array} \right.\ ,\\
u^k_{j:1} &= 1- u^k_{j:-1}\ .
  \end{align}
  Now, multiplying both sides by $\bfq$, and substituting $\tw_j^k =  
  \tbfu^k_j\bfq$,
  \begin{align}
   \bfu^k_j\bfq &= \left\{\begin{array}{ll}
-1 & \mbox{if $\tw_j^k = -1\,\tu^k_{j:-1} + 1\,\tu^k_{j:1} \le 0$}\\\nonumber
1 & \mbox{otherwise} \end{array} \right.\ ,\\
w_j^k &= \sign(\tw_j^k)\ .
  \end{align}
  Hence, $\bfw^k = \sign(\tbfw^k) =  \hardmax(\tbfu^k)\bfq$.
  
  \item Since $\bfw^k = \bfu^k\bfq$ from case (1) above, by chain rule the
  gradients $\bfg_{\bfw}^k$ and $\bfg_{\bfu}^k$ satisfy,
  \begin{equation}\label{eq:guw}
  \bfg_{\bfu}^k = \bfg_{\bfw}^k\,\frac{\partial \bfw}{\partial \bfu} =
  \bfg_{\bfw}^k\,\bfq^T\ .
  \end{equation}
  Similarly, from case (1) above, for each $j\in\allweights$,
  \begin{align}
  w^k_j &= \sign(\tw^k_j) = \sign(\tbfu^k_j\,\bfq) = 
  \hardmax(\tbfu^k_j)\,\bfq\ ,\\[-0.5cm]\nonumber
  \frac{\partial w_j}{\partial \tbfu_j} &= \frac{\partial \sign}{\partial
  \tbfu_j} = \frac{\partial \sign}{\partial \tw_j}
  \frac{\partial \tw_j}{\partial \tbfu_j} = \frac{\partial \hardmax}{\partial
  \tbfu_j}\, \bfq\ .
  \end{align}
  Here, the partial derivatives are evaluated at $\tbfu=\tbfu^k$ but
  omitted for notational clarity.
  Moreover, $\frac{\partial w_j}{\partial \tbfu_j}$ is a $d$-dimensional column
  vector, $\frac{\partial \sign}{\partial \tw_j}$ is a scalar, and
  $\frac{\partial \hardmax}{\partial \tbfu_j}$ is a $d \times d$ matrix.
  Since, $\frac{\partial \tw_j}{\partial \tbfu_j} = \bfq$ (similar
  to~\eqref{eq:guw}),
  \begin{equation}\label{eq:pdwu}
  \frac{\partial w_j}{\partial \tbfu_j} = \frac{\partial \sign}{\partial
  \tw_j}\, \bfq = \frac{\partial \hardmax}{\partial \tbfu_j}\, \bfq\ .
  \end{equation}
  Now, consider the $\bfg_{\tbfu}^k$ for each $j\in\allweights$,
  \begin{align}
  \bfg_{\tbfu_j}^k &= \bfg_{\bfu_j}^k\,\frac{\partial \bfu_j}{\partial \tbfu_j}
  = \bfg_{\bfu_j}^k\,\frac{\partial \hardmax}{\partial \tbfu_j}\ ,\\\nonumber
  \bfg_{\tbfu_j}^k\,\bfq &= \bfg_{\bfu_j}^k\,\frac{\partial \hardmax}{\partial
  \tbfu_j}\,\bfq\ ,\quad\mbox{multiplying by $\bfq$}\ ,\\\nonumber
  &= g_{w_j}^k\,\bfq^T\,\frac{\partial \hardmax}{\partial
  \tbfu_j}\,\bfq\ ,\quad\mbox{\eqref{eq:guw}}\ ,\\\nonumber
  &= g_{w_j}^k\,\bfq^T\,\frac{\partial \sign}{\partial
  \tw_j}\, \bfq\ ,\quad\mbox{\eqref{eq:pdwu}}\ ,\\\nonumber
  &= g_{w_j}^k\,\frac{\partial \sign}{\partial
  \tw_j}\,\bfq^T\,\bfq\ ,\\\nonumber
  &= g_{\tw_j}^k\,\bfq^T\,\bfq\ ,\quad\mbox{$\frac{\partial \sign}{\partial
  \tw_j} = \frac{\partial w_j}{\partial
  \tw_j}$}\ ,\\\nonumber
  &= 2\,g_{\tw_j}^k\ ,\quad\mbox{$\bfq = [-1, 1]^T$}\ .
  \end{align}
  Now, consider the gradient descent step for $\tbfu$, with $\eta_{\bfu} =
  \eta_{\bfw}/2$,
  \begin{align}
  \tbfu^{k+1} &= \tbfu^{k} - \eta_{\bfu}\,\bfg_{\tbfu}^k\ ,\\\nonumber
  \tbfu^{k+1}\,\bfq &= \tbfu^{k}\,\bfq - \eta_{\bfu}\,\bfg_{\tbfu}^k\,\bfq\
  ,\\\nonumber
   &= \tbfw^{k} - \eta_{\bfu}\,2\,\bfg_{\tbfw}^k\ ,\\\nonumber
   &= \tbfw^{k} - \eta_{\bfw}\,\bfg_{\tbfw}^k\ ,\\\nonumber
   &= \tbfw^{k+1}\ .
  \end{align}
  Hence, the proof is complete.
\end{enumerate}
\end{proof}

Note that, in the implementation of \gls{BC}, the auxiliary variables $\tbfw$
are clipped between $[-1,1]$ as it does not affect the $\sign$ function.
In the $\bfu$-space, this clipping operation would translate into a projection
to the polytope $\calS$, meaning $\tbfw\in[-1,1]$ implies $\tbfu\in\calS$, where
$\tbfw$ and $\tbfu$ are related according to $\tbfw=\tbfu\bfq$.
Even in this case,~\proref{pro:bc_vs_pgdh1} holds, as the assumption
$\tbfw^k=\tbfu^k\bfq$ is still satisfied.

\subsection{Approximate Gradients through Hardmax}\label{ap:gradhp}
In previous works~\cite{courbariaux2015binaryconnect,rastegari2016xnor}, to
allow back propagation through the $\sign$ function, the
straight-through-estimator~\cite{stehinton} is used. 
Precisely, the partial derivative with respect to the $\sign$ function is
defined as:
\begin{equation}\label{eq:gsign}
\frac{\partial \sign(r)}{\partial r} := \I[|r| \le 1]\ .
\end{equation}
To make use of this, we intend to write the projection
function $\hardmax$ in terms of the $\sign$ function. To this end,
from~\eqref{eq:puarg}, for each $j\in\allweights$,
  \begin{align}
  u^k_{j:-1} &= \left\{\begin{array}{ll}
1 & \mbox{if $\tu^k_{j:-1} - \tu^k_{j:1} \ge 0$}\\
0 & \mbox{otherwise} \end{array} \right.\ ,\\
u^k_{j:1} &= 1- u^k_{j:-1}\ .
  \end{align}
  Hence, the projection $\hardmax(\tbfu^k)$ for each $j$ can be written
  as:
  \begin{align}
  u^k_{j:-1} &= (\sign(\tu^k_{j:-1} - \tu^k_{j:1}) + 1)/2\ ,\\
u^k_{j:1} &= (1 - \sign(\tu^k_{j:-1} - \tu^k_{j:1}))/2\ .
  \end{align}
  Now, using~\eqref{eq:gsign}, we can write:
  \begin{equation}
  \left.\frac{\partial \bfu_j}{\partial \tbfu_j}\right|_{\tbfu_j =
  \tbfu^k_j} =\frac{1}{2}
  \begin{bmatrix}
  \I[|\upsilon^k_j| \le 1] &
  -\I[|\upsilon^k_j| \le 1]\\[0.1cm]
  -\I[|\upsilon^k_j| \le 1] &
  \I[|\upsilon^k_j| \le 1]
  \end{bmatrix}\ ,
  \end{equation}
  where $\upsilon^k_j = \tu^k_{j:-1} - \tu^k_{j:1}$.
   
\section{Experimental Details}\label{ap:expr}
To enable reproducibility, we first give the hyperparameter settings used to
obtain the results reported in the main paper in~\tabref{tab:hyper}.

\begin{table}[t]
    \centering
    \begin{tabular}{llcc}
        \toprule
        Dataset & Architecture & \gls{PMF} wo
        $\tbfu$ & \gls{PMF} \\
        \midrule
        \multirow{2}{*}{\mnist}
         & \slenet{-300} & $96.74$ & $\textbf{98.24}$ \\
         & \slenet{-5}   & $98.78$ & $\textbf{99.44}$\\
        \midrule
        \multirow{2}{*}{\cifar{-10}}
         & \svgg{-16} 	 & $80.18$ & $\textbf{90.51}$\\
         & \sresnet{-18} & $87.36$ & $\textbf{92.73}$\\
        \bottomrule
    \end{tabular}
    \vspace{1ex}
    \caption{\em
    	Comparison of \gls{PMF} with and without storing the auxiliary variables
    	$\tbfu$. Storing the auxiliary variables and updating them is in fact
    	improves the overall performance. However, even without storing $\tbfu$,
    	\gls{PMF} obtains reasonable performance, indicating the usefulness of our
    	relaxation.}
    \label{tab:pmfa}
\end{table}

\subsection{Proximal Mean-field Analysis}
To analyse the effect of storing the auxiliary variables $\tbfu$
in Algorithm~\myref{1}, we evaluate \gls{PMF} with and without storing $\tbfu$,
meaning the variables $\bfu$ are updated directly. 
The results are reported in~\tabref{tab:pmfa}.
Storing the auxiliary variables and updating them is in fact improves the
overall performance. However, even without storing $\tbfu$, \gls{PMF} obtains
reasonable performance, indicating the usefulness of our continuous relaxation.
Note that, if the auxiliary variables are not stored in \gls{BC}, it is
impossible to train the network as the quantization error in the gradients are
catastrophic and single gradient step is not sufficient to move from one
discrete point to the next.

\subsection{Effect of Annealing Hyperparameter $\beta$}
In Algorithm~\myref{1}, the annealing hyperparameter is gradually increased by a
multiplicative scheme. 
Precisely, $\beta$ is updated according to $\beta = \rho\beta$ for some
$\rho>1$.
Such a multiplicative continuation is a simple scheme suggested in
Chapter~\myref{17} of~\cite{nocedal2006numerical} for penalty methods.
To examine the sensitivity of the continuation parameter $\rho$, we report the
behaviour of \acrshort{PMF} on \cifar{-10} with \sresnet{-18} for various values
of $\rho$ in~\figref{fig:rho}.

\begin{figure}[t]
\begin{center}
\includegraphics[width=0.95\linewidth]{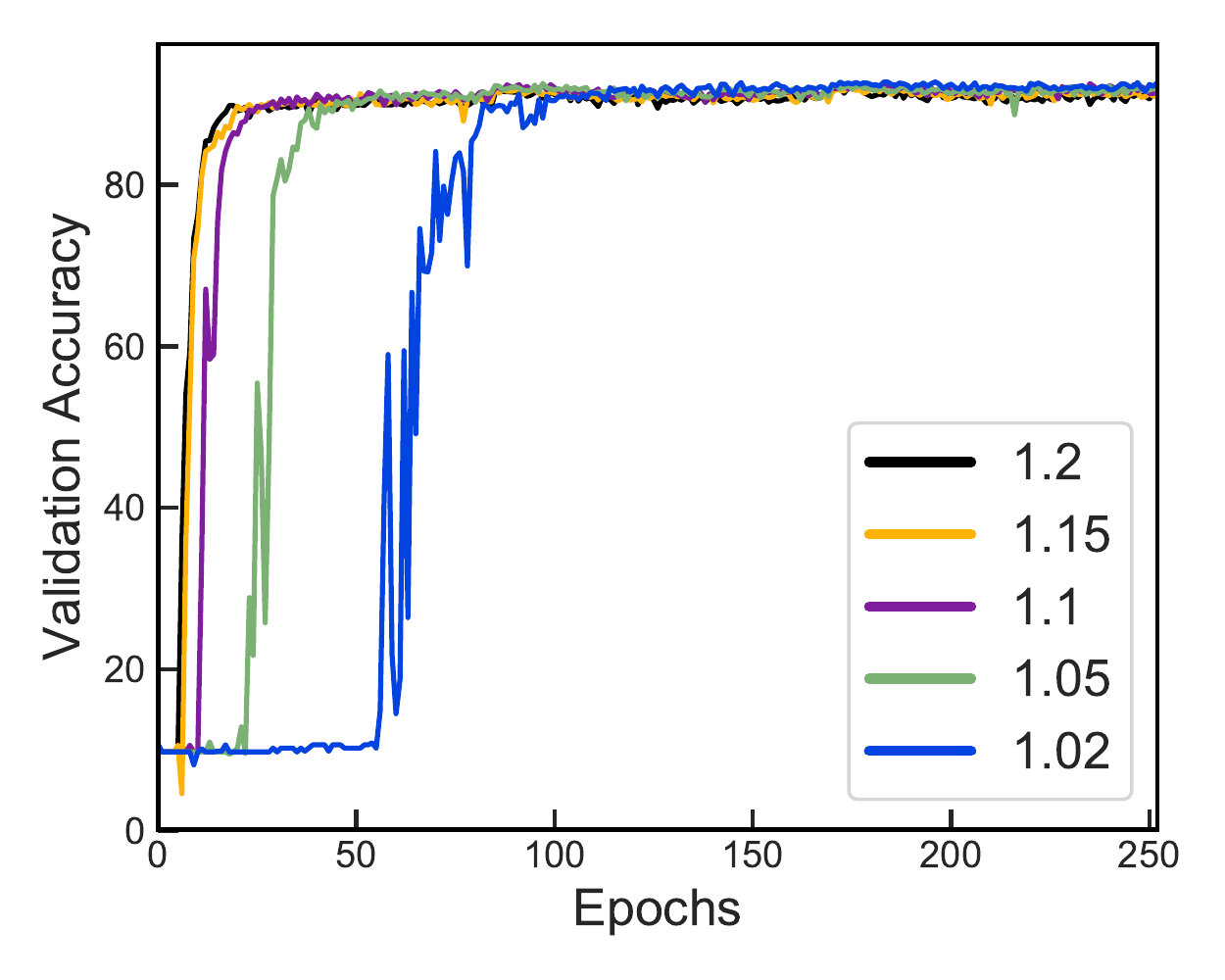}	
\vspace{-3ex}
\caption{\em \acrshort{PMF} results on \cifar{10} with \sresnet{-18} by varying
$\rho$ values. 
While \acrshort{PMF} is robust to a range of $\rho$ values, the longer
exploration phase in lower values of $\rho$ tend to yield slightly better final
accuracies.
}
\label{fig:rho}
\end{center}
\end{figure}

\subsection{Multi-bit Quantization}
To test the performance of \acrshort{PMF} for quantization levels beyond binary, we ran \acrshort{PMF} for $2$-bit quantization with
$\calQ = \{−2, −1, 1, 2\}$ on \cifar{-10} with the same hyperparameters as in the binary case and the results are, \sresnet{-18}: $92.88$\%
 and \svgg{-16}: $91.27$\%, respectively. 
 We believe, the improvements over binary ($+0.15$\% and $+0.76$\%) even
without hyperparameter tuning shows the merits of \acrshort{PMF} for NN quantization. Note, similar to existing methods~\cite{bai2018proxquant},
we can also obtain different quantization levels for each weight $w_j$, which would further improve the performance.

\begin{table*}[t]
\small
    \centering
\begin{tabular}{l|ccccc|ccccc}
\toprule
    \multirow{2}{*}{Hyperparameter} & \multicolumn{5}{c|}{\mnist{} with
\slenet{-300/5}} & \multicolumn{5}{c}{\tinyimagenet{} with \sresnet{-18}}\\
     & \acrshort{REF} & \acrshort{BC}/\acrshort{PICM} & \acrshort{PQ} &
\acrshort{PGD} & \acrshort{PMF} & \acrshort{REF} & \acrshort{BC}/\acrshort{PICM}
& \acrshort{PQ} & \acrshort{PGD} & \acrshort{PMF}\\
\midrule

learning\_rate                                 & 0.001 & 0.001 & 0.01  & 0.001 &
0.001 & 0.1            & 0.0001 & 0.01   & 0.1            & 0.001\\
      lr\_decay                                & step  & step  & -     & step  &
step  & step           & step   & step   & step           & step\\
   lr\_interval                                & 7k    & 7k    & -     & 7k    &
7k    & 60k            & 30k    & 30k 	  & 30k            & 30k\\
      lr\_scale                                & 0.2   & 0.2   & -     & 0.2   &
0.2   & 0.2            & 0.2    & 0.2 	  & 0.2            & 0.2\\  
      momentum                                 & -     & -     & -     & -     &
-     & 0.9            & -      & -      & 0.95           & - \\
     optimizer                                 & Adam  & Adam  & Adam  & Adam  &
Adam  & \acrshort{SGD} & Adam   & Adam   & \acrshort{SGD} & Adam\\
  weight\_decay                                & 0     & 0     & 0     & 0     &
0     & 0.0001         & 0.0001 & 0.0001 & 0.0001         & 0.0001\\
   $\rho$ (ours) or reg\_rate (\acrshort{PQ})  & -     & -     & 0.001 & 1.2   &
1.2   & -              & -      & 0.0001 & 1.01           & 1.02\\

\midrule
     & \multicolumn{5}{c|}{\cifar{-10} with \svgg{-16}} &
\multicolumn{5}{c}{\cifar{-10} with \sresnet{-18}}\\
     & \acrshort{REF} & \acrshort{BC}/\acrshort{PICM} & \acrshort{PQ} &
\acrshort{PGD} & \acrshort{PMF} & \acrshort{REF} & \acrshort{BC}/\acrshort{PICM}
& \acrshort{PQ} & \acrshort{PGD} & \acrshort{PMF}\\
\midrule
 learning\_rate                                & 0.1            & 0.0001 & 0.01 
 & 0.0001 & 0.001  & 0.1            & 0.0001 & 0.01   & 0.1            &
0.001\\
      lr\_decay                                & step           & step   & -    
 & step   & step   & step           & step   & -      & step           & step\\
   lr\_interval                                & 30k            & 30k    & -    
 & 30k    & 30k    & 30k            & 30k    & -      & 30k            & 30k\\
      lr\_scale                                & 0.2            & 0.2    & -    
 & 0.2    & 0.2    & 0.2            & 0.2    & -      & 0.2            & 0.2\\
      momentum                                 & 0.9            & -      & -    
 & -      & -      & 0.9            & -      & -      & 0.9            & -\\
     optimizer                                 & \acrshort{SGD} & Adam   & Adam 
 & Adam   & Adam   & \acrshort{SGD} & Adam   & Adam   & \acrshort{SGD} & Adam\\
  weight\_decay                                & 0.0005         & 0.0001 &
0.0001 & 0.0001 & 0.0001 & 0.0005         & 0.0001 & 0.0001 & 0.0001         &
0.0001\\
   $\rho$ (ours) or reg\_rate (\acrshort{PQ})  & -              & -      &
0.0001 & 1.05   & 1.05   & -              & -      & 0.0001 & 1.01           &
1.02\\

\midrule
     & \multicolumn{5}{c|}{\cifar{-100} with \svgg{-16}} &
\multicolumn{5}{c}{\cifar{-100} with \sresnet{-18}}\\
     & \acrshort{REF} & \acrshort{BC}/\acrshort{PICM} & \acrshort{PQ} &
\acrshort{PGD} & \acrshort{PMF} & \acrshort{REF} & \acrshort{BC}/\acrshort{PICM}
& \acrshort{PQ} & \acrshort{PGD} & \acrshort{PMF}\\
\midrule
 learning\_rate                               & 0.1                  & 0.01     
     & 0.01               & 0.0001               & 0.0001               & 0.1   
              & 0.0001               & 0.01               & 0.1                 
& 0.001\\
      lr\_decay                               & step                 &
multi-step     & -                  & step                 & step               
 & step                 & step                 & step                 & step    
            & multi-step\\
   \multirow{2}{*}{lr\_interval}              & \multirow{2}{*}{30k} & 20k -
80k,     & \multirow{2}{*}{-} & \multirow{2}{*}{30k} & \multirow{2}{*}{30k} &
\multirow{2}{*}{30k} & \multirow{2}{*}{30k} & \multirow{2}{*}{30k} &
\multirow{2}{*}{30k} & 30k - 80k, \\
                                              &                      & every 10k
     &                    &                      &                      &       
              &                      &                      &                   
  & every 10k\\
      lr\_scale                               & 0.2                  & 0.5      
     & -                  & 0.2                  & 0.2                  & 0.1   
              & 0.2                  & 0.2                  & 0.2               
  & 0.5\\
      momentum                                & 0.9                  & 0.9      
     & -                  & -                    & -                    & 0.9   
              & -                    & -                    & 0.95              
  & 0.95\\
     optimizer                                & \acrshort{SGD}       &
\acrshort{SGD} & Adam               & Adam                 & Adam               
 & \acrshort{SGD}       & Adam                 & Adam                 &
\acrshort{SGD}       & \acrshort{SGD}\\
  weight\_decay                               & 0.0005               & 0.0001   
     & 0.0001             & 0.0001               & 0.0001               & 0.0005
              & 0.0001               & 0.0001               & 0.0001            
  & 0.0001\\
   $\rho$ (ours) or reg\_rate (\acrshort{PQ}) & -                    & -        
     & 0.0001             & 1.01                 & 1.05                 & -     
              & -                    & 0.0001               & 1.01              
  & 1.05\\

\bottomrule
\end{tabular}
\vspace{1ex}
    \caption{\em Hyperparameter settings used for the experiments. 
	Here, if {\em lr\_decay == step}, then the learning rate is multiplied by {\em
lr\_scale} for every {\em lr\_interval} iterations.
	On the other hand, if {\em lr\_decay == multi-step}, the learning rate is
multiplied by {\em lr\_scale} whenever the iteration count reaches any of the
milestones specified by {\em lr\_interval}. 
	Here, $\rho$ denotes the growth rate of $\beta$ (refer Algorithm~\myref{1}) and
$\beta$ is multiplied by $\rho$ every 100 iterations.}
    \label{tab:hyper}
\end{table*}

\SKIP{
\NOTE{Not sure if this curves are useful, therefore removed!}
\begin{figure*}[t]
    \centering
    \begin{subfigure}{0.25\linewidth}
    \includegraphics[width=0.99\linewidth]
{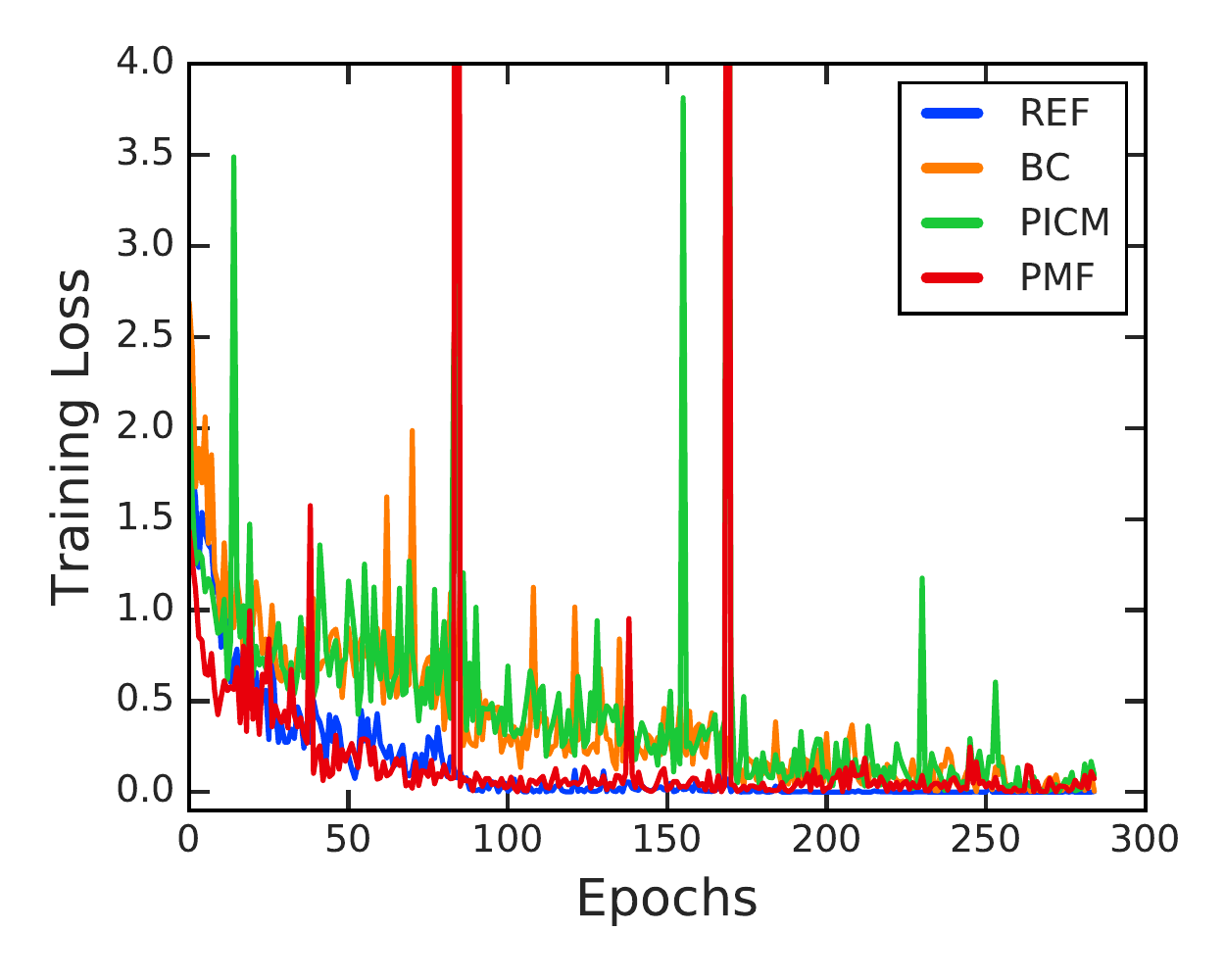}
    \end{subfigure}%
    \begin{subfigure}{0.25\linewidth}
    \includegraphics[width=0.99\linewidth]
{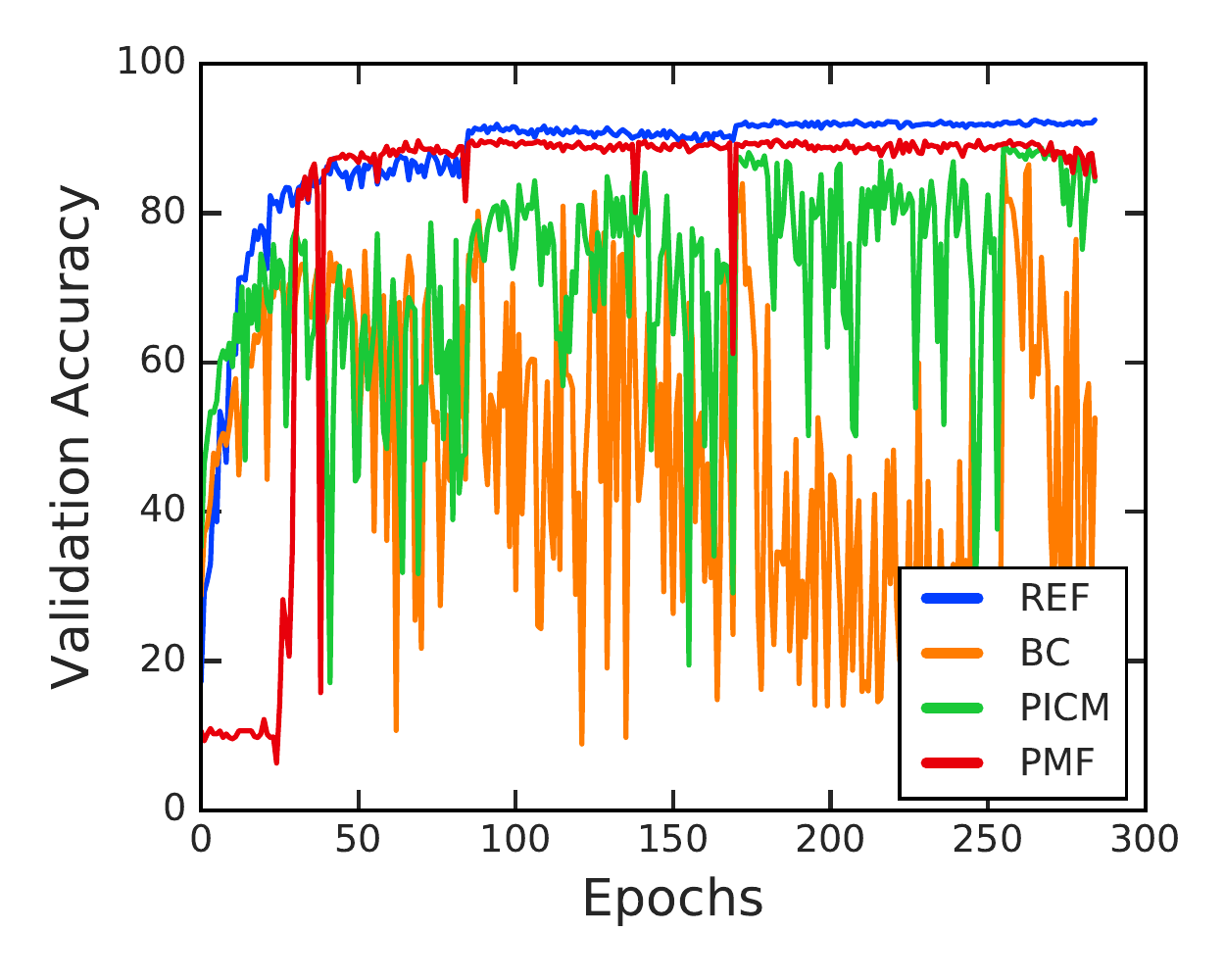}
    \end{subfigure}%
    \begin{subfigure}{0.25\linewidth}
    \includegraphics[width=0.99\linewidth]
{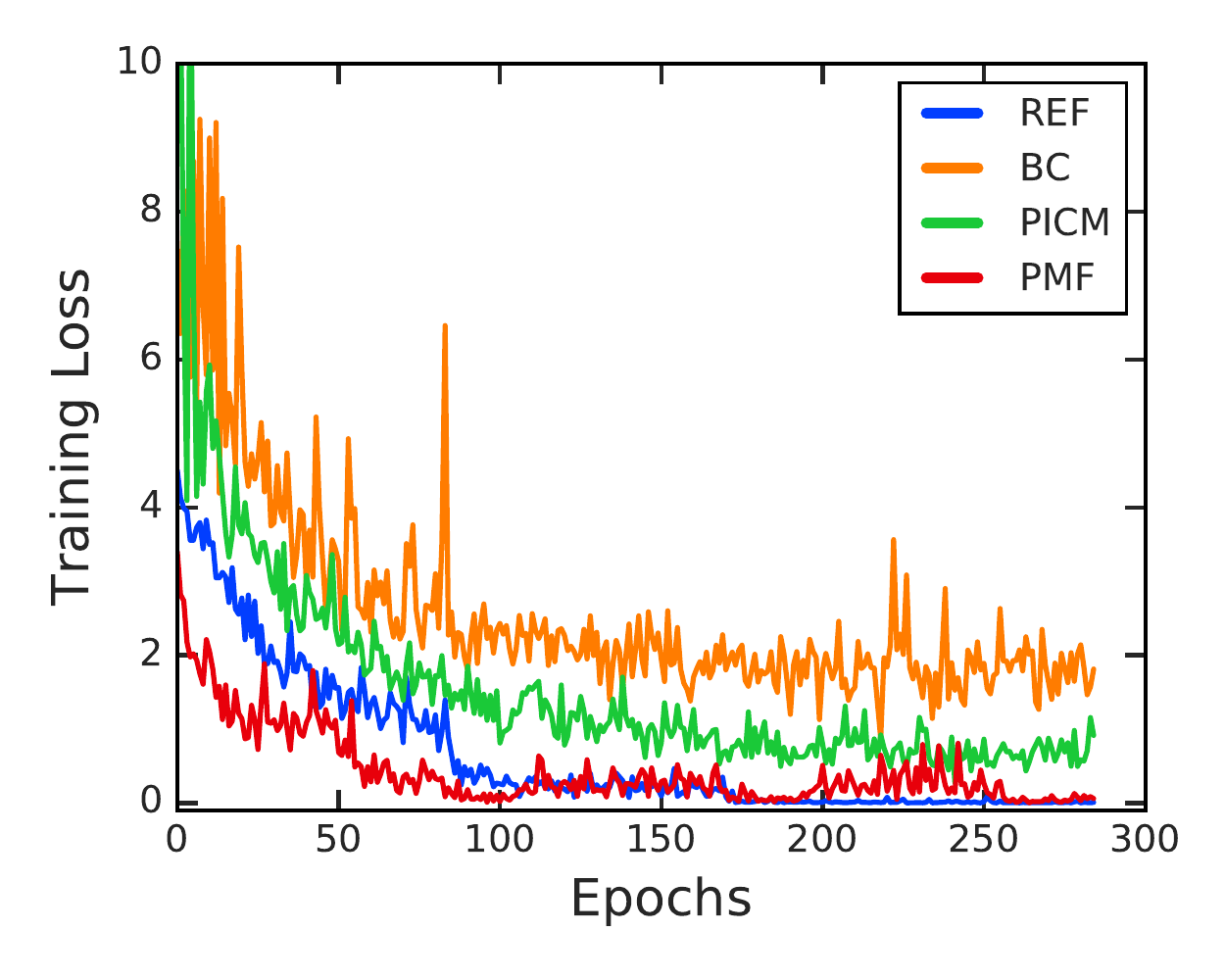}
    \end{subfigure}%
    \begin{subfigure}{0.25\linewidth}
    \includegraphics[width=0.99\linewidth]
{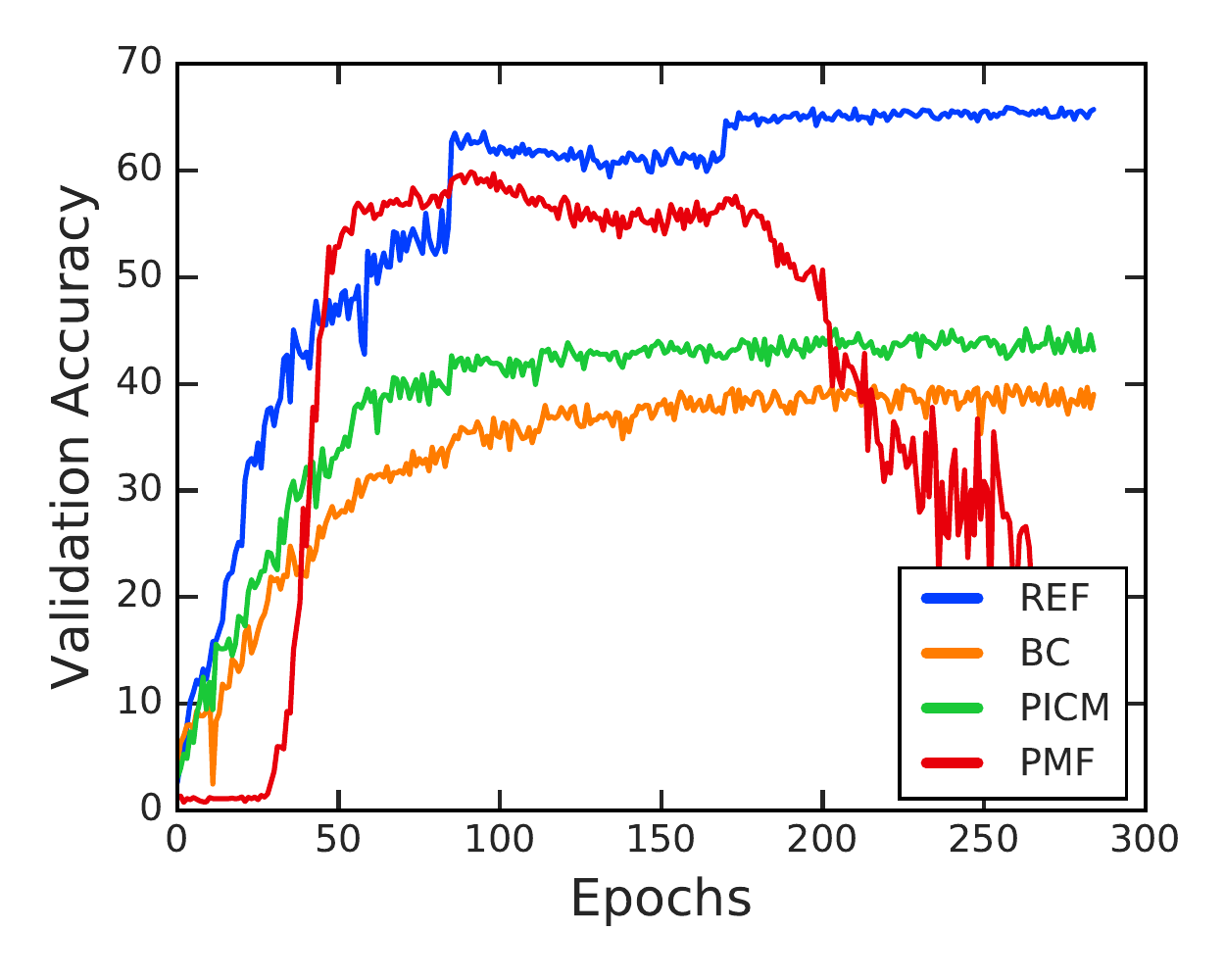}
    \end{subfigure}
    \vspace{-2ex}
    \caption{\em Training curves for \cifar{-10} (first two) and \cifar{-100}
(last two) with \svgg{-16} (corresponding \sresnet{-18} plots are in
Fig.~\myref{2}). 
    Similar to the main paper, while \acrshort{BC} and \acrshort{PICM} are
extremely noisy, \acrshort{PMF} training curves are fairly smooth and closely
resembles the high-precision reference network.
    The validation accuracy plot for \cifar{-100} for \acrshort{PMF} starts to
decrease  after $180$ epochs (while training loss oscillates around a small
value), this could be interpreted as overfitting to the training set.}
    \label{fig:morecurves-c}
\end{figure*}

\begin{figure}[t]
    \centering
    \begin{subfigure}{0.5\linewidth}
    \includegraphics[width=0.99\linewidth]
{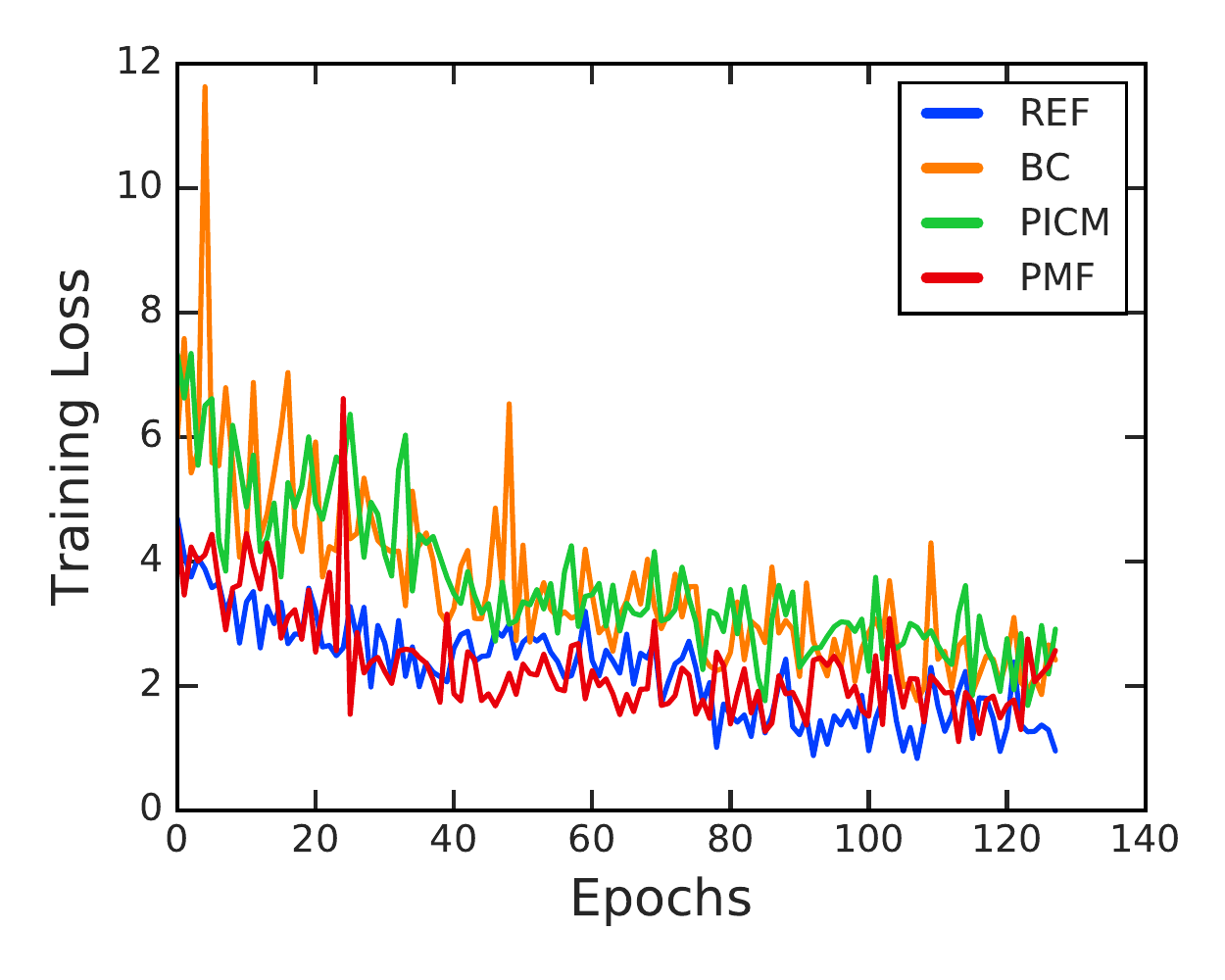}
    \end{subfigure}%
    \begin{subfigure}{0.5\linewidth}
    \includegraphics[width=0.99\linewidth]
{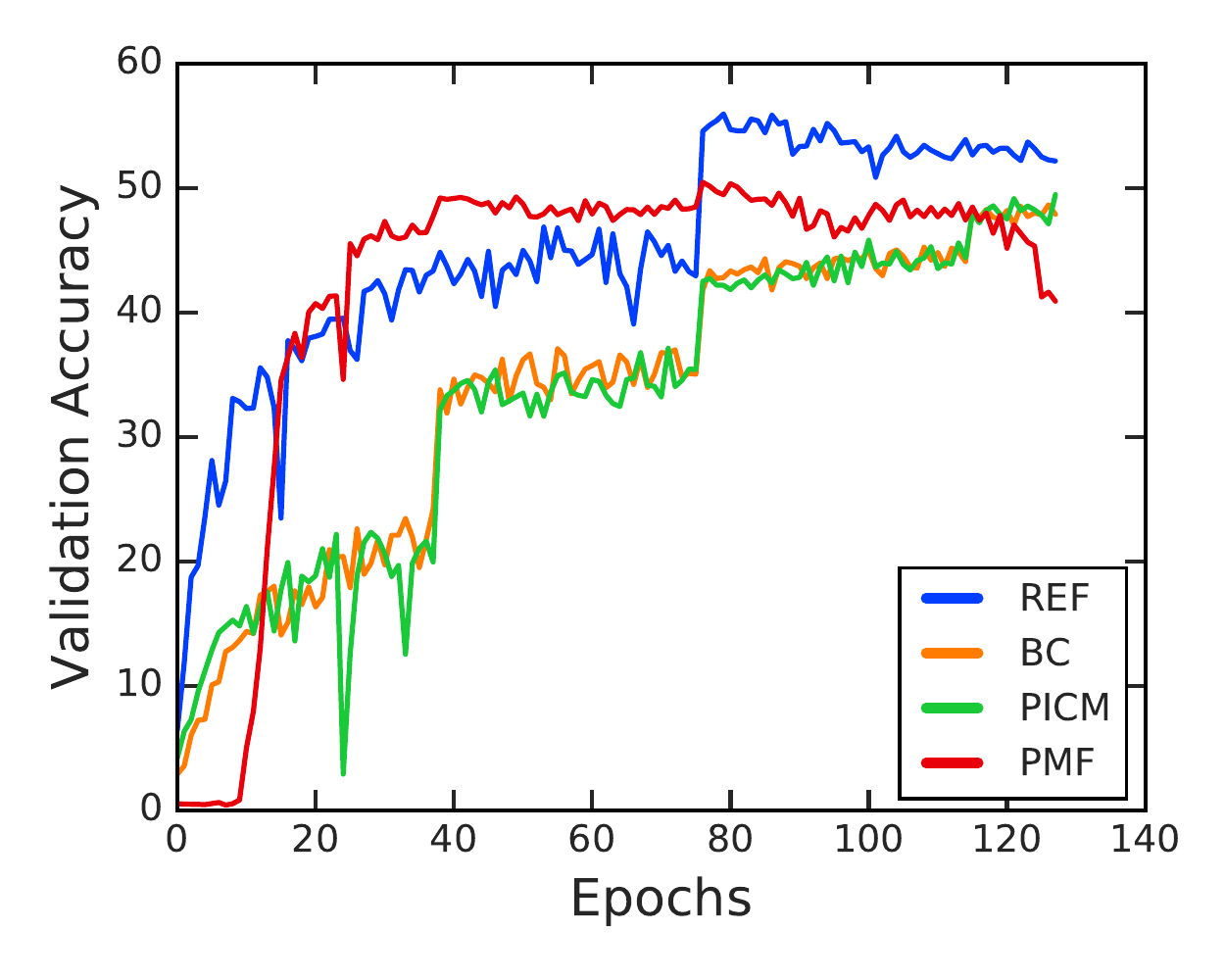}
    \end{subfigure}
    \vspace{-2ex}
    \caption{\em Training curves for \tinyimagenet{} with \sresnet{-18}. 
    Compared to \cifar{-10/100} plots in~\figref{fig:morecurves-c}, this plot is
less noisy but the behaviour is roughly the same.
    \acrshort{PMF} loss curve closely follows the loss curve of high-precision
reference network and \acrshort{PMF} even surpasses \acrshort{REF} in validation
accuracy for epochs between $20$ -- $80$. 
    }
    \label{fig:morecurves-tiny}
\end{figure}
}
\else
\begin{abstract}
 \vspace{-1ex}
Compressing large Neural Networks (\acrshort{NN}) by quantizing the parameters, while maintaining the performance is highly desirable due to reduced memory and time complexity.
In this work, we cast \acrshort{NN} quantization as a discrete labelling problem, and 
by examining relaxations, we design an efficient iterative optimization procedure that involves stochastic gradient descent followed by a projection.
We prove that our simple projected gradient descent approach is, in fact, equivalent to a proximal version of the well-known mean-field method.
These findings would allow the decades-old and theoretically grounded research on \acrshort{MRF} optimization to be used to design better network quantization schemes.
Our experiments on standard classification datasets (\mnist{}, \cifar{10/100}, \tinyimagenet{}) with convolutional and residual architectures show that our algorithm obtains fully-quantized networks with accuracies very close to the floating-point reference networks.
 \end{abstract}
 
\vspace{-3ex}
\section{Introduction}
Despite the success of deep neural networks, they are highly overparametrized,
resulting in excessive computational and memory requirements.
Compressing such large networks by quantizing the parameters, while maintaining
the performance, is highly desirable for real-time applications, or for
resource-limited devices. 

In \gls{NN} quantization, the objective is to learn a network while restricting
the parameters to take values from a small discrete set (usually binary)
representing quantization levels. 
This can be formulated as a {\em discrete labelling problem} where each
learnable parameter takes a label from the discrete set and the learning
objective is to find the label configuration that minimizes the empirical loss.
This is an extremely challenging discrete optimization problem because the
number of label configurations grows exponentially with the number of parameters
in the network and the loss function is highly non-convex.
\SKIP{
We show that this problem can equivalently be formulated as solving a sequence
of {\em discrete labelling problem}, 
where each labelling problem assigns a label from the discrete set to the neural
network parameters while minimizing 
the overall empirical loss.
%
Depending on the interactions among the parameters, each discrete optimization
problem will have 
varied time complexity. In fact in most of the cases it is intractable, except
in certain special cases~\cite{nemhauser1988integer}, as the
number of label configurations grows exponentially with the number of
parameters
in the neural network.\NOTE{This para need work!}
}

Over the past 20 years, similar large-scale discrete labelling problems have
been extensively studied under the context of \gls{MRF} optimization, and many
efficient approximate algorithms have been
developed
~\cite{ajanthanphdthesis,blake2011markov,dokania2015parsimonious%
,mudigonda2008combinatorial,veksler1999efficient,wainwright2008graphical}.
In this work, we take inspiration from the rich literature on \gls{MRF}
optimization, and design an efficient approximate algorithm based on the popular
mean-field method~\cite{wainwright2008graphical} for \gls{NN} quantization.

Specifically, we first formulate \gls{NN} quantization as a discrete labelling
problem.
Then, we relax the discrete solution space to a convex polytope and introduce an
algorithm to iteratively optimize the first-order Taylor approximation of the
loss function over the polytope.
This approach is a (stochastic) gradient descent method with an
additional projection step at each iteration. 
For a particular choice of projection, we show that our method is equivalent to
a proximal version of the well-known mean-field method.
Furthermore, we prove that under certain conditions, our algorithm specializes
to the popular BinaryConnect~\cite{courbariaux2015binaryconnect} algorithm.

The \gls{MRF} view of \gls{NN} quantization opens up many
interesting research directions.
In fact, 
our approach represents the simplest case where the \gls{NN} parameters are
assumed to be independent of each other. 
However, one can potentially model second-order or even high-order interactions
among parameters and use efficient inference algorithms developed and
well-studied in the \gls{MRF} optimization literature. 
Therefore, we believe, many such algorithms can be transposed into this
framework to design better network quantization schemes.
Furthermore, in contrast to the existing \gls{NN} quantization
methods~\cite{hubara2017quantized,rastegari2016xnor}, we quantize {\em all} the
learnable parameters in the network (including biases) and our formulation can
be seamlessly extended to quantization levels beyond binary.

We evaluate the merits of our algorithm on \mnist{}, \cifar{-10/100}, and
\tinyimagenet{} classification datasets with convolutional and residual
architectures. 
Our experiments show that the quantized networks obtained by our algorithm
yield accuracies very close to the floating-point counterparts while
consistently outperforming directly comparable baselines. 
Our code is available at \href{https://github.com/tajanthan/pmf}
{https://github.com/tajanthan/pmf}.

\SKIP{
Our formulation has several attractive properties:
\begin{tight_itemize}
\item Since it is formulated as an MRF optimization, it allows researchers to
think about dependency between parameters which can result in better neural
network quantization schemes.
\item In contrast to the existing
methods~\cite{hubara2017quantized,rastegari2016xnor,yin2018binaryrelax}, we
quantize all the learnable parameters including biases and it allows us to
extend to quantization levels beyond binary without additional heuristics.
\end{tight_itemize}
}
\SKIP{
\begin{tight_itemize}
\item quantized neural networks are popular and efficient in terms of memory
and
time
\item we formulate it as a labelling problem (discrete optimization)
\item labelling problem is well-studied and MRF literature can be leveraged to
design better quantization algos
\item we design softmax-pgd and show that it is equivalent to proximal
mean-field
\item quantizing all the learnable parameters
\item show binarycoNNect as a special case
\item future: better projection, rounding considering dependency among
parameters
\item can we say existing methods are heuristically designed?
\end{tight_itemize} 
}

\section{Neural Network Quantization}
\acrfull{NN} quantization is the problem of learning neural network parameters
restricted to a small discrete set representing quantization levels.
This primarily relies on the hypothesis that overparametrization of
\gls{NN}s makes it possible to obtain a quantized network with performance
comparable to the floating-point network.
To this end, given a dataset $\calD=\{\bfx_i, \bfy_i\}_{i=1}^n$, the \gls{NN}
quantization problem can be written as:
\vspace{-1.2ex}
\begin{align}\label{eq:dnnobj}
\min_{\bfw\in \calQ^m} L(\bfw;\calD) &:=\sum_{i=1}^n
\ell(\bfw;(\bfx_i,\bfy_i))\ .\\[-4.3ex]\nonumber
\end{align}
Here, $\ell(\cdot)$ is the input-output mapping composed with a standard loss
function (\eg, cross-entropy loss), $\bfw$ is the $m$ dimensional parameter
vector, and $\calQ$ with $|\calQ|=d$ is a predefined discrete set representing
quantization levels (\eg, $\calQ=\{-1,1\}$ or $\calQ=\{-1,0,1\}$). 
In \eqref{eq:dnnobj}, we seek a {\em fully-quantized network} where all the
learnable parameters including biases are quantized. This is in contrast to the
previous methods~\cite{courbariaux2015binaryconnect,rastegari2016xnor} where
some parts of the network are not quantized (\eg, biases and last layer
parameters).  

\SKIP{
\NOTE{move this to related work or beginning of experiments}
In this work, we restrict our attention to quantizing weights, however,
activations can also be quantized similarly to the previous
works~\cite{hubara2017quantized,rastegari2016xnor}. In addition to that,
layer-wise scalars can also be learned as
in~\cite{rastegari2016xnor,yin2018binaryrelax}, but we believe, that goes
beyond the scope of this paper.
}

\subsection{NN Quantization as Discrete Labelling}
\gls{NN} quantization~\plaineqref{eq:dnnobj} naturally takes the form of a {\em
discrete labelling problem} where each learnable parameter $w_j$ takes a label
$q_\lambda$ from the discrete set $\calQ$. 
In particular, \eqref{eq:dnnobj} is directly related to an \gls{MRF}
optimization problem~\cite{mrf1980kindermann} where the random variables
correspond to the set of weights $\bfw$, the label set is $\calQ$, and the
energy function is $L(\bfw)$.
We refer to Appendix~\myref{A} for a brief overview on \gls{MRF}s.

An important part of an \gls{MRF} is the factorization of the energy function
that depends on the interactions among the random variables. 
While modelling a problem as an \gls{MRF}, emphasis is given to the form of
the energy function (\eg, submodularity) as well as the form of the interactions
(cliques), because both of these aspects determine the complexity of the
resulting optimization.
In the case of \gls{NN}s, the energy function (\ie, loss) is a composition of
functions which, in general, has a variety of interactions among the random
variables.
For example, a parameter at the initial layer is related to parameters at
the final layer via function composition. 
Thus, the energy function does not have an explicit factorization. 
%
%
\SKIP{
Here we show  a natural link between the \gls{NN} quantization~\eqref{eq:dnnobj}
and the well known {\em discrete labelling problem}. 
In particular, \eqref{eq:dnnobj} is directly  related to the \gls{MRF}
optimization~\cite{mrf1980kindermann} with $\bfw$ as the set of random variables
where each random variable $w_i$ is assigned a value from the  label set
$\calQ$, and the loss $L(\bfw)$ as the associated energy function. 
\textcolor{red}{We refer readers to for a quick review on  \gls{MRF}s}.
An important part of an \gls{MRF} is the factorization of the energy function
that depends on the underlying neighbourhood or the interactions among the
random variables. 
Thus, while modelling a problem as a labelling problem in the form of
\gls{MRF}s, the emphasis is normally given to both the aspects -- the form of
the energy function (regularity) and the degree of interactions (cliques) --
both these aspects are crucial in determining the time and space complexity of
the optimization problem.
However, in the case of \gls{NN}s, the energy function (which is the loss
function) is fixed and there is no specific form of interaction among the random
variables. 
For example, a parameter at the initial layer, if not directly, is indirectly
related to another parameter at the last layer of the network via intermediate
layer parameters. 
Thus, there does not exist an explicit factorization of  the energy function.
}
%
%
%
%
%
In fact, optimizing~\eqref{eq:dnnobj} directly is intractable due to the
following inherent problems~\cite{kolmogorov2004energy,nemhauser1988integer}: 
\vspace{-2ex}
\begin{tight_enumerate}
\item The solution space is discrete with exponentially many feasible points
($d^m$ with $m$ in the order of millions).
\item The loss function is highly non-convex and does not satisfy any regularity
condition (\eg, submodularity).
\item The loss function does not have an explicit factorization (corresponding to
a neighbourhood structure).
\end{tight_enumerate}
This hinders the use of any off-the-shelf discrete optimization algorithm.
However, to tackle the aforementioned problems, we take inspiration from the
\gls{MRF} optimization literature~
\cite{besag1986statistical,chekuri2004linear,wainwright2008graphical}.
In particular, we first relax the discrete solution space to a convex polytope
and then iteratively optimize the first-order approximation of the loss
over the polytope. 
Our approach, as will be shown subsequently, belongs to
the class of (stochastic) gradient descent methods and is applicable to any loss
function. 
Next we describe these relaxations and the related optimization in detail.

\SKIP{
Note, there are two major problems with this objective: (1) as opposed to the
standard labelling problems, the loss function in this case is highly
non-convex
with no specific standard form making it impossible to use off-the-shelf
efficient inference algorithms; and (2) the solution space is discrete,
requiring exhaustive search. To this end, we propose relaxations for both these
issues, making our approach similar to \gls{SGD} type optimization algorithm.
From very high-level point of view, we first relax the solution space and make
it continuous, then we optimize the first-order Taylor's approximation of the
loss function over this relaxed space. We repeat this process until
convergence.
}

\subsection{Continuous Relaxation of the Solution Space}\label{sec:uspace}
Recall that $\calQ$ is a finite set of $d$ real-valued parameters.
The elements of $\calQ$ will be indexed by $\lambda\in\alllabels$. 
An alternative representation of $\calQ$ is by a $d$-dimensional vector
$\bfq$ with entries $q_\lambda \in \calQ$.
A element $w\in\calQ$ can be written in terms of indicator variables $u_\lambda \in 
\{0, 1\}$ as $w = \sum_{\lambda=1}^d q_\lambda u_\lambda$, assuming
that only one value of $q_\lambda$ has value $1$.
Denote by $\calV$ the set of size $d$ of such $d$-vectors with a single $1$ component
(elements of the standard basis of $\R^d$)
 acting as indicator vectors for the elements of $\calQ$.
 Explicitly, a vector $\bfu_j \in \R^d$ is in set $\calV$ if 
 \vspace{-1ex}
 \begin{align*}
  \sum_{\lambda=1}^d u_{j:\lambda} = 1  \mbox{~~ and ~~}
u_{j:\lambda}  \in \{0,1\}\quad\,\forall\,\lambda\in\alllabels\ .\\[-4ex]\nonumber
 \end{align*}
Similarly, the vector \( \bfw \in \calQ^m \) of all parameters 
can be represented
using indicator
variables as follows. Let ${u_{j:\lambda}\in\{0,1\}}$ be the indicator variable,
where $u_{j:\lambda}=1$ if and only if $w_j = q_\lambda \in \calQ$. 
Then, for any $j\in \allweights$, we can write
\vspace{-1ex}
\begin{align}\label{eq:indic}
w_j &= \sum_{\lambda=1}^{d}  u_{j:\lambda}\,q_\lambda = \big< \bfu_j, \bfq \big>
\quad \mbox{where}\quad \bfu_j \in \calV \ .\\[-4ex]\nonumber
\end{align}
Any $w_j$ represented using \eqref{eq:indic} belongs to $\calQ$. 
The vector $\bfw$ of all parameters
may be written as a matrix-vector product,
\vspace{-3ex}
\begin{align}\label{eq:indicv}
\bfw &= \bfu\bfq 
\mbox{   ~~ where  }\bfu \in \calV^m \ .\\[-4ex]\nonumber
  \end{align}
Here, $\bfu = \{u_{j:\lambda}\}$ is thought of as an $m \times d$ matrix 
(each row $\bfu_j$, for $j\in\allweights$ is an element of $\calV$).
  Note that there is a one-to-one correspondence between the sets $\calV^m$ and
$\calQ^m$.
  Substituting \eqref{eq:indicv} in the \gls{NN} quantization
objective~\plaineqref{eq:dnnobj} results in the variable change from $\bfw$ to
$\bfu$ as:
\vspace{-1ex}
  \begin{align}
  \label{eq:labelobj}
  \min_{\bfw\in\calQ^m} L(\bfw;\calD) = \min_{\bfu \in \calV^m}
  L(\bfu\bfq; \calD) \ .\\[-4ex]\nonumber
  \end{align}
  Even though the above variable change converts the problem from $m$ to $md$
dimensional space, 
the cardinalities of the sets $\calQ^m$ and $\calV^m$ are the
same. The binary constraint $u_{j:\lambda} \in \{0,1\}$ together with the
  non-convex loss function $L(\cdot)$ makes the problem
  NP-hard~\cite{nemhauser1988integer}.
  
  \vspace{-2ex}
  \paragraph{Relaxation.}
  By relaxing the binary constraints to ${u_{j:\lambda} \in [0,1]}$, instead of
  $ u_{j:\lambda} \in \{0,1\}$ we obtain
the convex hull $\calS$ of the set $\calV^m$.  The minimization 
  \eqref{eq:labelobj} may be carried out over $\Delta^m$ instead of
  $\calV^m$.  In detail, we define
 \begin{equation}
  \Delta=\left\{\begin{array}{l|l} \multirow{2}{*}{$\bfz\in\R^d$} &
\sum_{\lambda}
  z_{\lambda} = 1\\ &z_{\lambda} \ge 0, \quad\forall\, \lambda
  \end{array} \right\} \ .
 \end{equation}
 This is the standard $(d\!-\!1)$-dimensional simplex embedded in $\R^d$
 and the vertices of $\Delta$
 are the points in $\calV$.  Similarly, the Cartesian product $\Delta^m$ is
 the convex hull of $\calV^m$, which are in turn the vertices of $\Delta^m$.
 
 Simplex $\Delta$ will be referred to as the {\em probability simplex} because an
 element $u\in\Delta$ may be thought of (formally) as a probability distribution
 on the finite set $\{ 1, \ldots, d\}$.  A value $u_{\lambda}$ is the probability of
choosing the discrete parameter $w = q_\lambda \in \calQ$.  
With this probabilistic interpretation,
one verifies that $\bfu \bfq = \E_{\bfu}[\bfw]$, the expected value of 
the vector of parameters $\bfw$, where each $w_j$ has independent 
probability distribution defined by $\bfu_j$.
  %

  %
  Now, the relaxed optimization can be written as:
  \vspace{-1ex}
  \begin{align}
  \label{eq:simobj}
  \min_{\bfu \in\calS} \tL(\bfu;\calD) &:= L(\bfu\bfq;\calD) \ ,\\[-4ex]\nonumber
  \end{align}
  The minimum of this problem will generally be less than the
  minimum of  \eqref{eq:labelobj}.
  However,  if $\bfu\in\calV^m$, then the 
  loss function $\tL(\bfu)$ has the same value as the original loss function
  $L(\bfw)$. 
  Furthermore, the relaxation of $\bfu$ from $\calV^m$ to $\calS$ translates into
relaxing $\bfw$ from $\calQ^m$ to the convex region $[q_{\min}, q_{\max}]^m$.
  Here, $q_{\min}$ and $q_{\max}$ represent the minimum and maximum quantization
levels, respectively.

  In fact, $\bfu\in\calS$ is an overparametrized representation of
$\bfw\in[q_{\min}, q_{\max}]^m$, and the mapping $\bfu\to\bfw=
\bfu\bfq$ is a many-to-one {\em surjective}
%
 mapping. In the case where $d=2$ (two quantization levels), the mapping is one-to-one
and subjective.
%
In addition it can be shown that any local minimum of~\eqref{eq:simobj} (the
relaxed $\bfu$-space) is also a local minimum of the loss in $[q_{\min},
q_{\max}]^m$ (the relaxed $\bfw$-space) and vice versa
(\proref{pro:local-minima}).
  This essentially means that the variable change from $\bfw$ to $\bfu$ does not
alter the optimization problem and a local minimum in the $\bfw$-space can be
obtained by optimizing in the $\bfu$-space.  
  \begin{pro}\label{pro:local-minima}
  Let $f(\bfw) : [q_{\min}, q_{\max}]^m \rightarrow \R$ be a  function,
  and $\bfw$ a point in $[q_{\min}, q_{\max}]^m$ such that 
${\bfw=g(\bfu) = \bfu\bfq}$. Then 
$\bfu$  is a local
minimum of $f\circ g$ in $\calS$ if and only if $\bfw$ is a local minimum of
$f$ in $[q_{\min}, q_{\max}]^m$.
  \end{pro}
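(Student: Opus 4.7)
The plan is to prove the two implications separately, exploiting the fact that $g(\bfu) = \bfu\bfq$ is a linear (hence globally Lipschitz) surjection from $\calS$ onto $[q_{\min}, q_{\max}]^m$ that acts coordinate-wise. I will use the elementary definition of local minimum, i.e., the existence of an open neighbourhood on which no feasible perturbation decreases the value.

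For the ``if'' direction (starting from $\bfw$ a local minimum of $f$), I would pick $\epsilon>0$ such that $f(\bfw') \geq f(\bfw)$ on $\{\bfw'\in[q_{\min},q_{\max}]^m : \|\bfw'-\bfw\|<\epsilon\}$, and let $C$ be the operator norm of the linear map $g$. Then every $\bfu'\in\calS$ with $\|\bfu'-\bfu\|<\epsilon/C$ satisfies $g(\bfu')\in[q_{\min},q_{\max}]^m$ and $\|g(\bfu')-\bfw\|<\epsilon$, so $(f\circ g)(\bfu')\geq (f\circ g)(\bfu)$. This half is immediate from continuity alone.

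For the ``only if'' direction (starting from $\bfu$ a local minimum of $f\circ g$ with neighbourhood radius $\delta>0$), the idea is to construct, for every $\bfw'$ close enough to $\bfw$ in $[q_{\min},q_{\max}]^m$, a preimage $\bfu'\in\calS$ close to $\bfu$ with $g(\bfu')=\bfw'$, and then transfer the inequality. Since $g$ decouples over $j\in\allweights$, it suffices to do this coordinate-wise. Given a small shift $\tau_j = w'_j - w_j$, I would pick two indices $\lambda_a,\lambda_b\in\alllabels$ with $q_{\lambda_a}\ne q_{\lambda_b}$ and with $u_{j:\lambda_a}>0$ whenever the shift must deplete component $\lambda_a$, then redistribute exactly $\tau_j/(q_{\lambda_b}-q_{\lambda_a})$ units of mass from $\lambda_a$ to $\lambda_b$. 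This yields $\bfu'_j\in\Delta$ with $\langle \bfu'_j,\bfq\rangle = w'_j$ and $\|\bfu'_j-\bfu_j\|$ bounded by a constant multiple of $|\tau_j|$, where the constant depends only on the minimum gap between distinct levels in $\calQ$. Assembling across $j$ gives $\bfu'\in\calS$ with $g(\bfu')=\bfw'$ and $\|\bfu'-\bfu\|\leq K\|\bfw'-\bfw\|$. Choosing $\epsilon = \delta/K$ then transfers local optimality from $\bfu$ to $\bfw$.

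The main obstacle is the case analysis inside this coordinate-wise construction. When $\bfu_j$ lies strictly in the interior of a face of $\Delta$, any two positive components with distinct $q$-values work. When $\bfu_j$ is a vertex $e_{\lambda^\star}$ so that $w_j = q_{\lambda^\star}$, positive and negative perturbations must be handled separately by shifting mass to a neighbouring level above or below $q_{\lambda^\star}$, which remains feasible because one is depleting the vertex component. Finally, if $w_j\in\{q_{\min},q_{\max}\}$, only one-sided perturbations are feasible in $[q_{\min},q_{\max}]^m$ in the first place, and those match the feasible moves in $\calS$ exactly, so no construction is required in the infeasible direction. Once this bookkeeping is in place, the Lipschitz bound on $\bfu'-\bfu$ in terms of $\bfw'-\bfw$ follows directly, completing the argument.
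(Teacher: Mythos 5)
Your proof is correct and follows essentially the same route as the paper: the ``if'' direction uses only continuity of $g$, and your coordinate-wise mass-redistribution construction of a preimage $\bfu'$ with $\|\bfu'-\bfu\|\leq K\|\bfw'-\bfw\|$ is precisely an explicit, quantitative proof that $g$ is an open map, which is the fact the paper invokes abstractly (``surjective continuous affine, hence open''). The paper leaves the openness claim and its consequence as an exercise; you have supplied the missing details, including the boundary case analysis at vertices of $\Delta$ and at $w_j\in\{q_{\min},q_{\max}\}$.
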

  \begin{proof}
 The function $g: \calS\to [q_{\min}, q_{\max}]^m$ 
 is surjective continuous and affine.  It follows that it is also an open
 map.  From this the result follows easily. 
  \end{proof} 

  Finally, we would like to point out that the relaxation used while moving from
$\bfw$ to $\bfu$ space is well studied in the \gls{MRF} optimization literature
and has been used to prove bounds on the quality of the
  solutions~\cite{chekuri2004linear,kleinberg2002approximation}.
  In the case of \gls{NN} quantization, in addition to the connection to
mean-field (\secref{sec:mean-field}), we believe that this relaxation allows for
exploration, which would be useful in the stochastic setting.

  \subsection{First-order Approximation and Optimization}
  Here we talk about the optimization of $\tL(\bfu)$ over $\calS$, discuss how
our optimization scheme allows exploration in the parameter space, and also
discuss the conditions when this optimization will lead to a quantized solution
in the $\bfw$ space, which is our prime objective. 

  \gls{SGD}\footnote{The difference between \gls{SGD} and gradient descent is
that the gradients are approximated using a stochastic oracle in the former
case.}~\cite{robbins1985stochastic} is the de facto method of choice for
optimizing neural networks.
  In this section, we interpret \gls{SGD} as a proximal method, which will be
useful later to show its difference to our final algorithm. 
  In particular, \gls{SGD} (or gradient descent) can be interpreted as
iteratively minimizing the first-order Taylor approximation of the loss function
augmented by a proximal term~\cite{parikh2014proximal}.
  In our case, the objective function is the same as \gls{SGD} but the feasible
points are now constrained to form a convex polytope. 
  Thus, at each iteration $k$, the first-order objective can be written as:
  \vspace{-1ex}
  \begin{align}
  \label{eq:taylor}
  \bfu^{k+1} &= \amin{\bfu\in\calS}\ \tL(\bfu^k) + \left\langle \bfg^{k},
\bfu-\bfu^k\right\rangle\fro + \frac{1}{2\eta}\left\|\bfu-\bfu^k\right\|^2\fro\ ,
\nonumber\\[-1ex] 
&= 
\amin{\bfu\in\calS}\, \left\langle \bfu,\, \eta \bfg^{k}- \bfu^k \right\rangle\fro + \|\bfu\|^2\fro/2\ ,\\[-5ex]\nonumber
  \end{align}
  where $\eta>0$ is the learning rate and $\bfg^k := \nabla_{\bfu}\tL^{k}$ is
the
  stochastic (or mini-batch) gradient of $\tL$ with respect to $\bfu$ evaluated
at
  $\bfu^k$. 
  %
  In the unconstrained case, by setting the derivative with respect to
  $\bfu$ to zero, one verifies that the above formulation leads to
standard \gls{SGD} updates
$\bfu^{k+1} = \bfu^k - \eta \bfg^k$.
  %
  For constrained optimization (as in our case~\plaineqref{eq:taylor}), it
  is natural to use the stochastic version of
\gls{PGD}~\cite{rosasco2014convergence}. 
  Specifically, at iteration $k$, the projected stochastic gradient update can
be written as:
  \vspace{-1ex}
  \begin{equation}
  \label{eq:proj}
  \bfu^{k+1} = \projS\left(\bfu^{k} - \eta\,\bfg^{k}\right)\ ,
  \end{equation}
  where $\projS(\cdot)$ denotes the projection to the polytope $\calS$. 
  Even though this type of problem can be optimized using projection-free
  algorithms~\cite{ajanthan2016efficient,frank1956algorithm,lacoste2012block},
by
  relying on \gls{PGD}, we enable the use of any off-the-shelf first-order
optimization algorithms (\eg,
  Adam~\cite{kingma2014adam}). 
  Furthermore, for a particular choice of projection, we show that the \gls{PGD}
update is equivalent to a proximal version of the mean-field method.

  \SKIP{
  \NOTE{after softmax projection}
  Moreover, we would like to point out that, \gls{PGD}
update~\plaineqref{eq:proj} in fact yields an approximate solution
to~\eqref{eq:taylor}. 
  However, in~\secref{sec:mean-field}, we show that this \gls{PGD} update is
exactly minimizing a proximal version of the well-known mean-field objective.
  }

  \SKIP{
  In case of constrained optimization (as in~\eqref{eq:simobj}), it is natural
to
  use the stochastic version of \gls{PGD}~\cite{rosasco2014convergence}. 
  Specifically, in our case, at iteration $k$, the projected stochastic gradient
update can be written as:
  \begin{align}
  \label{eq:proj}
  \bfu^{k+1} &= \projS\left(\bfu^{k} - \eta\,\bfg^{k}\right)\ ,
  \end{align}
  where $\eta>0$ is the learning rate, $\bfg^k := \nabla_{\bfu}\tL^{k}$ is the
  stochastic (or mini-batch) gradient of $\tL$ with respect to $\bfu$ evaluated
at
  $\bfu^k$, and $\projS(\cdot)$ represents the projection to the polytope. Note
  that, except the projection step, this is the same as the standard
first-order
  Taylor approximation based optimization algorithms, thus, any off-the-shelf
  optimizer (\eg, Adam~\cite{kingma2014adam}) can be employed.
  }

\vspace{-1ex}
  \subsubsection{Projection to the Polytope $\calS$} \label{sec:sm}
 Projection to $\calS$ can be decomposed into $m$ independent
projections to the $d$-dimensional probability simplexes.
 The objective function~\plaineqref{eq:taylor} is also separable for each
$j$.
  Thus, for notational convenience, without loss of generality, we assume
$m=1$.
  Now, for a given updated parameter $\tbfu^{k+1} = \bfu^k - \eta\, \bfg^k$
(where $\tbfu^{k+1}\in\R^d$), we discuss three approaches of projecting to the
probability simplex $\Delta$.
  An illustration of these projections is shown in~\figref{fig:explor}. 
  In this section, for brevity, we also ignore the superscript $k+1$.
   
 \begin{figure}
\begin{center}
\includegraphics[width=\linewidth, trim=4.2cm 4.4cm 8.5cm 2.8cm, clip=true,
page=4]{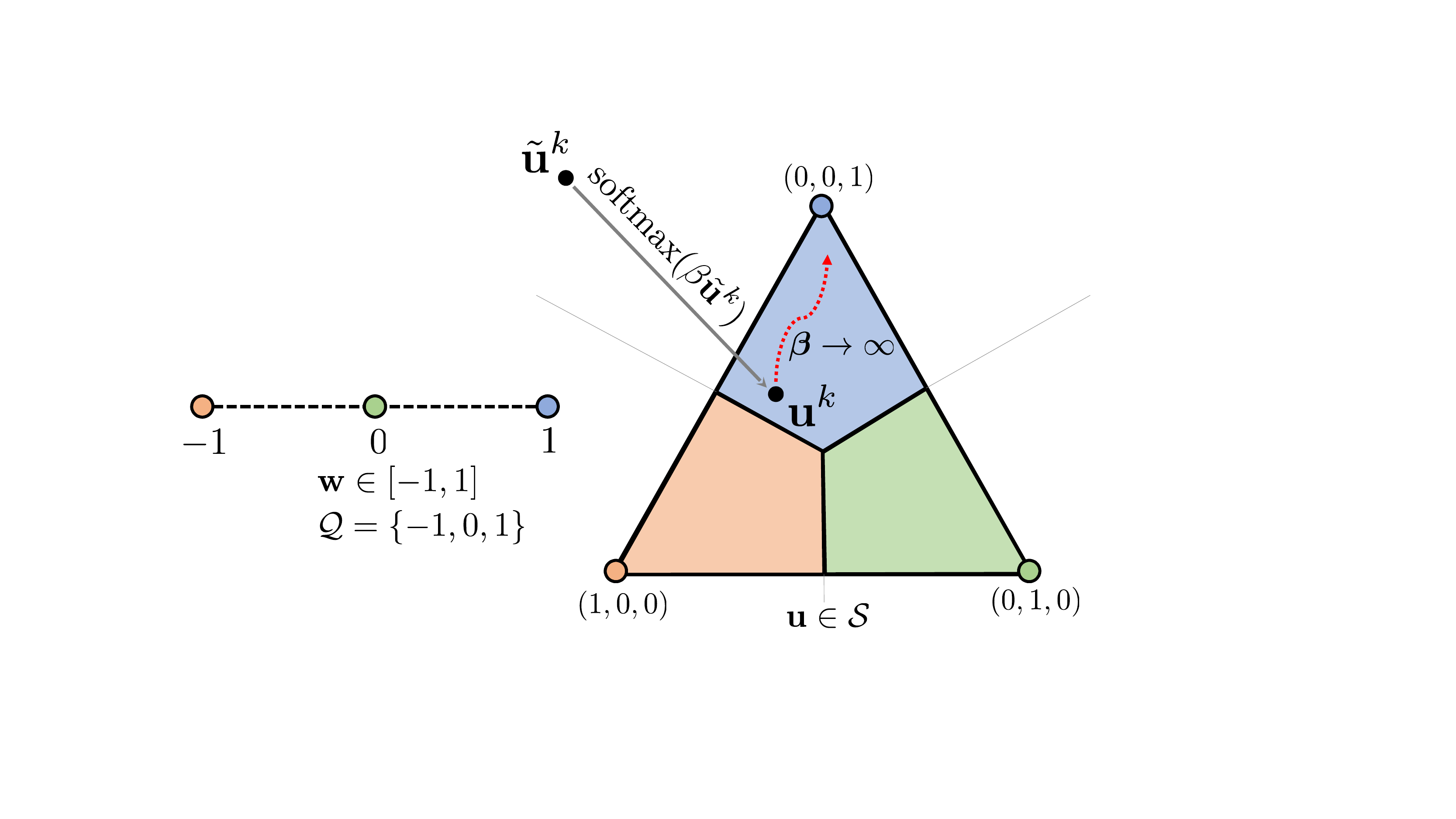}	
\vspace{-4ex}
\caption{\em Illustration of $\bfw$ and $\bfu$-spaces, different projections,
and exploration with $\softmax$ when $m=1$.
Here each vertex of the simplex corresponds to a discrete quantization level in
the $\bfw$-space and the simplex is partitioned based on its vertex
association.
Given an infeasible point $\tbfu$, it is projected to the simplex via $\softmax$
(or $\sparsemax$) and when $\beta\to\infty$, the projected point would move
towards the associated vertex.
 }
\label{fig:explor}
\end{center}
\vspace{-5ex}
\end{figure}
  
\vspace{-1ex}
\paragraph{Euclidean Projection (Sparsemax).}
The standard approach of projecting to a set in the Euclidean space is via
$\sparsemax$~\cite{martins2016sparsemax}.
Given a scalar $\beta>0$ (usually $\beta =1$), $\sparsemax$ amounts to finding a
point $\bfu$ in $\Delta$ which is the closest to $\beta\tbfu$, namely
\vspace{-1ex}
\begin{align}\label{eq:sparsemax}
\bfu &= \sparsemax(\beta\tbfu) = \amin{\bfz\in\Delta}\,
\left\|\bfz-\beta\tbfu\right\|^2\ .\\[-4.5ex]\nonumber
\end{align}
As the name suggests, this projection is likely to hit the boundary of the
simplex\footnote{Unless $\beta\tbfu$ when projected to the simplex plane is in
$\Delta$, which is rare.}, resulting in sparse solutions ($\bfu$) at every
iteration. 
Please refer to~\cite{martins2016sparsemax} for more detail. 
As $\beta$ increases, the projected point moves towards a vertex. 

\vspace{-1ex}
\paragraph{Hardmax Projection.} 
The $\hardmax$ projection maps a given $\tbfu$
to one of the vertices of the simplex $\Delta$:
\vspace{-1ex}
\begin{align}\label{eq:hm}
\bfu &= \hardmax(\tbfu)\ ,\\\nonumber
u_{\lambda} &= \left\{\begin{array}{ll}
1 & \mbox{if $\lambda = \amax{\mu\in\calQ}\, \tu_{\mu}$}\\
0 & \mbox{otherwise} \end{array} \right. \quad \mbox{for} \quad \lambda\in\alllabels\
.\\[-4.5ex]\nonumber
\end{align}
%

\vspace{-2ex}
\paragraph{Softmax Projection.}
We now discuss the $\softmax$ projection which projects a point to the interior
of the simplex, leading to dense solutions. 
Given a scalar $\beta>0$, the $\softmax$ projection is:
\vspace{-2ex}
\begin{align}
\label{eq:spgdup}
\bfu &= \softmax(\beta\tbfu)\ , \\ \nonumber
u_{\lambda} &= \frac{\exp(\beta \tu_{\lambda})}{\sum_{\mu  \in \calQ} \; \exp(\beta
\tu_{\mu})}\quad \forall\, \lambda\in\alllabels\ .\\[-4.5ex]\nonumber
\end{align}
Even though approximate in the Euclidean sense, $\softmax$ shares many desirable
properties to $\sparsemax$~\cite{martins2016sparsemax} (for example, it preserves the
relative order of $\tbfu$) and when $\beta\to\infty$, the projected point moves
towards a vertex.




\SKIP{
\vspace{-2ex}
\paragraph{Euclidean Projection (Sparsemax):}
The standard approach of projecting to a set in the Euclidean space is via
$\sparsemax$~\cite{martins2016sparsemax}.
Given a scalar $\beta>0$ (usually $\beta =1$), $\sparsemax$ amounts to finding a
point $\bfu_j^{k+1}$ in $\Delta$ which is the closest to $\beta\tbfu^{k+1}_j$:
\vspace{-2ex}
\begin{align}\label{eq:sparsemax}
\bfu^{k+1}_j &= \sparsemax({\beta\tbfu^{k+1}_j}) = \amin{\bfz\in\Delta}\,
\left\|\bfz-\beta\tbfu^{k+1}_j\right\|^2\ .\\[-6ex]\nonumber
\end{align}
For brevity, we use $\bfu=\sparsemax({\beta\tbfu})$ to denote $\sparsemax$
applied for each $j\in\allweights$ independently.
As the name suggests, this projection is likely to hit the boundary of the
simplex\footnote{Unless $\beta\tbfu^{k+1}_j$ when projected to the simplex plane
is in $\Delta$, which is rare.}, resulting in sparse iterates ($\bfu^{k+1}$). 
Please refer to~\cite{martins2016sparsemax} for more detail. 
As $\beta$ increases, the projected point moves towards a vertex. 

\vspace{-2ex}
\paragraph{Hardmax Projection:} 
Compared to $\sparsemax$, $\hardmax$ projection is rigid where a given
$\tbfu^{k+1}_j$ is projected to one of the vertices of the simplex $\Delta$:
\vspace{-1ex}
\begin{align}\label{eq:hm}
\bfu^{k+1}_j &= \hardmax({\tbfu^{k+1}_j})\ ,\\\nonumber
u^{k+1}_{j:\lambda} &= \left\{\begin{array}{ll}
1 & \mbox{if $\lambda = \amax{\mu\in\calQ}\, \tu^{k+1}_{j:\mu}$}\\
0 & \mbox{otherwise} \end{array} \right. \quad \forall\, \lambda\in\alllabels\ .
\end{align}
Note, by design, $\hardmax$ finds extremely sparse projections and a simple
calculation would show that $\hardmax$ is exactly the Euclidean projection of a
point to the set of vertices of the simplex. 

\vspace{-2ex}
\paragraph{Softmax Projection:}
We now discuss the $\softmax$ projection which projects a point to the interior
of the simplex, leading to dense iterates. 
Given a scalar $\beta>0$, the $\softmax$ projection for each $j\in\allweights$
is:
\vspace{-1ex}
\begin{align}
\label{eq:spgdup}
\bfu^{k+1}_j &= \softmax({\beta\tbfu^{k+1}_j})\ ,\quad\mbox{where} \\ \nonumber
u^{k+1}_{j:\lambda} &= \frac{e^{\beta
\left(u^{k}_{j:\lambda}-\eta\,g^{k}_{j:\lambda}\right)}}{\sum_{\mu  \in \calQ}
\; e^{\beta
\left(u^{k}_{j:\mu}-\eta\,g^{k}_{j:\mu}\right)}}\quad \forall\, \lambda\in\alllabels\
.
\end{align}
It is easy to verify that \( \bfu^{k+1}\in\calS \). 
Even though approximate in the Euclidean sense, $\softmax$ shares many desirable
properties to $\sparsemax$~\cite{martins2016sparsemax} (\eg, preserves the
relative order of $\tbfu^{k+1}_j$) and when $\beta\to\infty$, the projected
point moves towards a vertex.



}

\vspace{-1ex}
\subsubsection{Exploration and Quantization using Softmax}
\vspace{-1ex}
All of the projections discussed above are valid in the sense that the projected
point lies in the simplex $\Delta$. 
However, our goal is to obtain a quantized solution in the $\bfw$-space which is
equivalent to obtaining a solution $\bfu$ that is a  vertex of the simplex
$\Delta$.
Below we provide justifications behind using $\softmax$ with a monotonically
increasing schedule for $\beta$ in realizing this goal, rather than either
$\sparsemax$ or $\hardmax$ projection.

Recall that the main reason for relaxing the feasible points to lie within the
 simplex $\Delta$ is to simplify the optimization problem with the hope
that optimizing this relaxation will lead to a better solution.
However, in case of $\hardmax$ and $\sparsemax$ projections, the effective
solution space is restricted to be either the set of vertices $\calV$ (no
relaxation) or the boundary of the simplex (much smaller subset of $\Delta$).
Such restrictions hinder exploration over the simplex and do not fully utilize
the potential of the relaxation. 
In contrast, $\softmax$ allows {\em exploration} over the entire simplex and a
monotonically increasing schedule for $\beta$ ensures that the solution
gradually approaches a vertex.
This interpretation is illustrated in~\figref{fig:explor}. 

\vspace{-2ex}
\paragraph{Entropy based view of Softmax.}
In fact, $\softmax$ can be thought of as a ``noisy'' projection to the set of
vertices $\calV$, where the noise is controlled by the hyperparameter $\beta$.
We now substantiate this interpretation by providing an entropy based view for
the $\softmax$ projection.
\vspace{-1ex} 
\begin{lem}\label{lem:sm}
Let $\bfu = \softmax(\beta\tbfu)$ for some $\tbfu\in\R^d$ and $\beta>0$. Then,
\vspace{-1.5ex}
\begin{equation}\label{eq:sm_entropy}
\bfu = \amax{\bfz\in\Delta}\ \left\langle\tbfu, \bfz\right\rangle\fro +
\frac{1}{\beta}H(\bfz)\ ,
\vspace{-1ex}
\end{equation} 
where ${H(\bfz) = -\sum_{\lambda=1}^{d}z_{\lambda}\,\log z_{\lambda}}$ is the
entropy.
\end{lem}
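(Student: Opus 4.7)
The plan is to derive the softmax formula from the first-order optimality conditions of the concave maximization problem on the right-hand side of~\eqref{eq:sm_entropy}, then invoke concavity to conclude that this stationary point is the unique global maximizer.

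First I would rescale the objective by $\beta>0$ (which does not change the $\arg\max$) to obtain the equivalent problem of maximizing $\beta\langle\tbfu,\bfz\rangle + H(\bfz)$ over $\bfz\in\Delta$. Then I would temporarily drop the nonnegativity constraints $z_\lambda\ge 0$ and keep only the equality constraint $\sum_\lambda z_\lambda = 1$, introducing a single Lagrange multiplier $y$. Differentiating the Lagrangian with respect to $z_\lambda$ gives the stationarity condition $\beta\,\tu_\lambda - 1 - \log z_\lambda - y = 0$, which I would solve explicitly to obtain $z_\lambda = e^{-1-y}e^{\beta\tu_\lambda}$.

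Next I would pin down the Lagrange multiplier $y$ by imposing $\sum_\mu z_\mu = 1$, which yields $e^{-1-y} = 1/\sum_\mu e^{\beta\tu_\mu}$. Substituting back recovers exactly the softmax formula $z_\lambda = e^{\beta\tu_\lambda}/\sum_\mu e^{\beta\tu_\mu}$, matching $\bfu$ as defined in~\eqref{eq:spgdup}. Because this expression is strictly positive, the nonnegativity constraints that were dropped are automatically satisfied, so this candidate is feasible for the original problem over $\Delta$.

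The only remaining issue, which I would flag as the main (mild) obstacle, is to justify that this stationary point is the \emph{global} maximizer rather than merely a critical point. I would handle this by noting that the objective is strictly concave on $\Delta$: the linear term $\langle\tbfu,\bfz\rangle$ is concave and the entropy $H(\bfz)$ is strictly concave on the relative interior of $\Delta$, while $\Delta$ is a convex set. Hence the KKT stationary point we constructed is the unique global maximizer, completing the proof. As the commented-out remark in the original LaTeX suggests, the argument extends immediately to the case $m>1$ by treating each row $\tbfu_j$ independently since both the objective and the constraints decouple across $j$.
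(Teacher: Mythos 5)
Your proposal is correct and follows essentially the same route as the paper, which simply states that the lemma ``can be proved by writing the Lagrangian and setting the derivatives to zero'' (and whose fuller appendix version carries out exactly the computation you describe, with a single multiplier for the normalization constraint and a posteriori verification of nonnegativity). Your explicit appeal to strict concavity of the entropy on $\Delta$ to upgrade the stationary point to the unique global maximizer is a small but worthwhile tightening that the paper leaves implicit.
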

\begin{proof}
This can be proved by writing the Lagrangian and setting the derivatives to
zero.
\end{proof}
\vspace{-1ex}
The $\softmax$ projection translates into an
entropy term in the objective function~\plaineqref{eq:sm_entropy}, and for small
values of $\beta$, it allows the iterative procedure to explore the optimization
landscape.
We believe, in the stochastic setting, such an explorative behaviour is crucial,
especially in the early stage of training. 
Furthermore, our empirical results validate this hypothesis that \acrshort{PGD}
with $\softmax$ projection is relatively easy to train and yields consistently
better results compared to other \acrshort{PGD} variants.
Note that, when $\beta\to \infty$, the entropy term vanishes and $\softmax$ approaches $\hardmax$.

\SKIP{
In the case of $\sparsemax$~\eqref{eq:sparsemax}, the projected point and the
gradients both are sparse which does not allow enough information to propagate.
Even though an increasing $\beta$ schedule can be used to enforce the projection
to be one of the vertices, the projected point and the gradients will always be
sparse. 
This hinders the exploration over the probability simplex as the only freedom
this projection has is to move over the edges of the simplex.  
In the case of $\hardmax$~\eqref{eq:hardmax}, even though the gradients are
dense, the projection is extremely sparse (always a vertex). 
In addition, the exploration over the probability simplex is dramatically
limited as it can only project to one of the vertices. 
However, in the case of $\softmax$~\eqref{eq:spgdup}, both the projected point
(interior of the simplex) and the gradients are dense.  
Since the projected point is the interior of the simplex, combined with dense
gradients, $\softmax$ allows {\em exploration} of the probability simplex while
finding a suitable quantization of the parameters (due to monotonically
increasing $\beta$ schedule), which we believe is desirable in stochastic
settings. 
Our empirical results also indicate the same as it is extremely easy to train
with $\softmax$ projection compared to $\sparsemax$ and $\hardmax$, and, in
addition, $\softmax$ consistently provides improved results.

\textcolor{red}{do we need this?} It is interesting to note that, if we always
project to the set of vertices $\calV^m$ ($\calV^m$ is non-convex and results in
extremely sparse iterates and gradients), gradient descent may get stuck, as
single gradient step may not be sufficient to move from one vertex to the next.
However, in our case, the feasible domain is a convex polytope where the
quantization error on the gradients are not catastrophic, meaning that the
projected gradient steps are significant enough to move from one feasible point
to the next.
} 

Note, constraining the solution space through a hyperparameter ($\beta$ in our
case) has been extensively studied in the optimization literature and one such
example is the barrier method~\cite{boyd2009convex}. 
Moreover, even though the $\softmax$ based \gls{PGD} update yields an
approximate solution to~\eqref{eq:taylor}, in~\secref{sec:mean-field}, we prove
that it is theoretically equivalent to a proximal version of the mean-field
method.

\SKIP{
More precisely, the update~\eqref{eq:spgdup} (also~\plaineqref{eq:sparsemax})
can be interpreted as an {\em exploration mechanism} over the probability
simplexes.
In particular, once the gradient step is computed, the resulting point is
projected to the polytope via a ``noisy'' operator ($\softmax$/$\sparsemax$)
with noise controlled by the hyperparameter $\beta$ (lower the $\beta$ the more
noise). 
By noise we mean how far is the projected point from a vertex, \ie, farther the
projection, the noisier is the operator. 
It is easy to see that, when $\beta \to \infty$, the resulting projection tends
to place all the probability mass in one dimension (for a vector $\bfu_j$), and
zeros everywhere else. 
Thus, this limiting case leads to a vertex of the polytope $\calS$, which is the
zero-noise case. 
Hence, a monotonically increasing schedule for $\beta$ results in exploration
over the polytope and finally reaches a vertex. 
This interpretation is illustrated in~\figref{fig:explor}.
As mentioned earlier, compared to $\softmax$, $\sparsemax$ results in sparse
gradients limiting its ability for exploration.
Furthermore, for $\softmax$ projection an entropy based view of exploration is
provided in~\secref{sec:mean-field}.
Note that, similar to ours, constraining the solution space through a
hyperparameter is extensively studied in the optimization literature and one
such example is the barrier method~\cite{boyd2009convex}.


Furthermore, we would like to point out that, the \gls{PGD}
update~\plaineqref{eq:spgdup} in fact yields an approximate solution
to~\eqref{eq:taylor}.
However, in the following section, we show that this update is exactly
minimizing a similar first-order objective function but augmented by an entropy
term. 
Hence, with $\softmax$ projection the \gls{PGD} algorithm can be shown to be
equivalent to a proximal version of the popular mean-field method.
}

\SKIP{
As shown in Corollary , when $\beta \to \infty$, the first-order critical point
$\bfu$ of the objective~\plaineqref{eq:simobj} maps to a $\bfw$ (using $\bfw =
\bfu \bfq$) which is a first-order critical point of ~\plaineqref{eq:dnnobj}
and, because of the construction, is always quantized ($\bfw \in \calQ^m$). {\em
Therefore, by choosing a gradually increasing schedule for $\beta$, one can
ensure a solution at the vertex of the polytope, $\bfu\in\calV^m$, which in turn 
results in quantized $\bfw$.} \textcolor{red}{In addition, it is also
interesting to note that the projected gradient
steps are meaningful, \ie, the quantization error on the gradients are not
catastrophic. This is not the case in existing methods where single gradient
step may not be sufficient to move from one discrete feasible point to the
next~\cite{hubara2017quantized,rastegari2016xnor}. Talk about Exploration???}
}

\SKIP{
\paragraph{Euclidean projection}
Given a vector $\tbfu$, the closest point in $\calS$ under the Euclidean metric
can be computed as follows:
\begin{equation}
\bfu^* = \amin{\bfz\in\calS}\, \|\bfz - \tbfu\|\ .
\end{equation}
Since, $\calS$ decomposes for each $j\in\allweights$, we can write the
projection for each $j$ as:
\begin{equation}
\bfu_j^* = \amin{\bfz\in\Delta}\, \|\bfz - \tbfu_j\|\ ,
\end{equation}
where $\Delta$ is the probability simplex of dimension $d$.

The optimum of this problem $\bfu^*$ can be obtained efficiently using the
algorithm presented in~\cite{condat2016fast}. The labelling $\bfu^*$ obtained
using this
procedure can be fractional, meaning, the final discrete solution has to be
obtained using a rounding scheme.\NOTE{Discuss convergence}
}

\section{Softmax based PGD as Proximal Mean-field}
\label{sec:mean-field}
\NOTE{make sure the flow is coherent, tone down connection to MRF and add a
discussion on entropic penalty} 
Here we discuss the connection between $\softmax$ based \gls{PGD} and the
well-known mean-field method~\cite{wainwright2008graphical}. 
Precisely, we show that the update $\bfu^{k+1} = \softmax({\beta(\bfu^k
-\eta\,\bfg^k)})$ is actually an {\em exact fixed point update} of a modified
mean-field objective function. 
This connection bridges the gap between the \gls{MRF} optimization and the
\gls{NN} quantization literature.
\SKIP{
For some $\beta>0$, $\eta>0$ and $j\in\allweights$, the $\softmax$
based \gls{PGD} update~\plaineqref{eq:proj} can be written as:
\begin{equation}\label{eq:spgdup}
\bfu^{k+1}_{j:\lambda} = \frac{e^{\beta
\left(u^{k}_{j:\lambda}-\eta\,g^{k}_{j:\lambda}\right)}}{\sum_{\mu}e^{\beta
\left(u^{k}_{j:\mu}-\eta\,g^{k}_{j:\mu}\right)}}\quad \forall\, \lambda\ .
\end{equation}
}
We now begin with a brief review of the mean-field method and then proceed with
our proof.

\vspace{-2ex}
\paragraph{Mean-field Method.}
A self-contained overview is provided in Appendix~\myref{A}, but here we review
the important details.
Given an energy (or loss) function $L(\bfw)$ and the corresponding probability
distribution of the form $P(\bfw)=e^{-L(\bfw)}/Z$, mean-field approximates
$P(\bfw)$ using a fully-factorized distribution $U(\bfw)=\prod_{j=1}^m
U_{j}(w_j)$.
Here, the distribution $U$ is obtained by minimizing the
\acrshort{KL}-divergence $\kl{U}{P}$.
Note that, from the probabilistic interpretation of $\bfu\in\calS$ (see
\secref{sec:uspace}), for each $j\in\allweights$, the probability
$U_j(w_j=q_\lambda)=u_{j:\lambda}$.
Therefore, the distribution $U$ can be represented using the variables
$\bfu\in\calS$, and hence, the mean-field objective can be written as:
\vspace{-1ex}
\begin{equation}\label{eq:mf}
\amin{\bfu\in\calS}\ \kl{\bfu}{P} = \amin{\bfu\in\calS}\ \E_{\bfu}[L(\bfw)] -
H(\bfu)\ ,
\vspace{-1ex}
\end{equation}
where $\E_{\bfu}[\cdot]$ is expectation over $\bfu$ and $H(\bfu)$ is the
entropy.

In fact, mean-field has been extensively studied in the \gls{MRF} literature
where the energy function $L(\bfw)$ factorizes over small subsets of variables
$\bfw$. 
This leads to efficient minimization of the \acrshort{KL}-divergence as the
expectation $\E_{\bfu}[L(\bfw)]$ can be computed efficiently.
However, in a standard neural network, the function $L(\bfw)$ does not have an
explicit factorization and direct minimization of the \acrshort{KL}-divergence
is not straight forward. 
To simplify the \gls{NN} loss function one can approximate it using its
first-order Taylor approximation which discards the interactions between the
\gls{NN} parameters altogether.

In \thmref{thm:sPGD_mf}, we show that our $\softmax$ based
\gls{PGD} iteratively applies a proximal version of mean-field to the
first-order approximation of $L(\bfw)$.
At iteration $k$, let $\hL^k(\bfw)$ be the first-order Taylor approximation of
$L(\bfw)$.
Then, since there are no interactions among parameters in $\hL^k(\bfw)$, and it
is linear, our proximal mean-field objective has a closed form solution, which
is exactly the $\softmax$ based \gls{PGD} update. 

The following theorem applies to the update of each $\bfu_j\in\Delta$ separately,
and hence the update of the corresponding parameter $w_j$.

\begin{thm}\label{thm:sPGD_mf}
Let $L(\bfu): \Delta \rightarrow \R$ be a differentiable function defined in an
open neighbourhood of the polytope
$\Delta$, and $\bfu^k$ a point in $\Delta$. 
Let $\bfg^k$ be the gradient of $L(\bfu)$ at $\bfu^k$, and
$\hL^k(\bfu) = L(\bfu^k) + \big<\bfu - \bfu^k, \bfg^k \big>$ the
first-order approximation of $L$ at $\bfu^k$. 
Let $\beta$ and $\eta$ (learning rate) be positive constants, and 
\vspace{-2ex}
\begin{equation}
 \bfu^{k+1} = \softmax({\beta(\bfu^k -\eta\,\bfg^k)})\ ,\\[-4ex]
\end{equation}
the $\softmax$-based \gls{PGD} update.  Then,
\vspace{-2ex}
\begin{equation}\label{eq:sPGD_mf_p}
\bfu^{k+1} = \amin{\bfu\in\Delta}\ \eta\,\hL^k(\bfu) - \left\langle\bfu^{k},
\bfu\right\rangle\fro- \frac{1}{\beta}H(\bfu)\ .\\[-1ex]
\end{equation}
\end{thm}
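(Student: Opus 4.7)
The plan is to reduce the stated optimization problem to the one characterized by Lemma~\ref{lem:sm}, which already identifies $\softmax(\beta\tbfu)$ as the unique maximizer of $\langle \tbfu,\bfz\rangle + \tfrac{1}{\beta}H(\bfz)$ over the simplex. Everything else is algebraic bookkeeping.

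First I would exploit linearity of $\hL^k$. Since $\hL^k(\bfu) = L(\bfu^k) + \langle \bfg^k, \bfu - \bfu^k\rangle$, the terms $L(\bfu^k)$ and $\langle \bfg^k, \bfu^k\rangle$ do not depend on the optimization variable $\bfu$, so they can be discarded from the objective in \eqref{eq:sPGD_mf_p} without changing the argmin. What remains is
\begin{equation*}
\amin{\bfu\in\Delta}\ \eta\,\langle \bfg^k, \bfu\rangle - \langle \bfu^k, \bfu\rangle - \frac{1}{\beta}H(\bfu).
\end{equation*}

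Next, I would merge the two linear terms and flip the sign, turning the minimization into
\begin{equation*}
\amax{\bfu\in\Delta}\ \langle \bfu^k - \eta\,\bfg^k,\ \bfu\rangle + \frac{1}{\beta}H(\bfu),
\end{equation*}
which is precisely the variational problem of Lemma~\ref{lem:sm} with $\tbfu := \bfu^k - \eta\,\bfg^k$. Invoking that lemma (already established in Appendix~A of the excerpt, or equivalently by forming the Lagrangian for the simplex constraint and solving the KKT stationarity conditions) gives the closed-form maximizer $\softmax(\beta(\bfu^k - \eta\,\bfg^k))$, which matches $\bfu^{k+1}$ by hypothesis. Uniqueness of the argmin is guaranteed by strict concavity of $H$ on the relative interior of $\Delta$ together with the linearity of the remaining terms, so the two descriptions of $\bfu^{k+1}$ coincide.

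There is essentially no obstacle here: the key content has been pushed into Lemma~\ref{lem:sm}, so the remaining work is only to observe that the first-order approximation $\hL^k$ contributes only a single $\bfu$-dependent linear term that combines cleanly with the proximal term $-\langle \bfu^k,\bfu\rangle$ to produce the argument of the $\softmax$. The only subtlety worth flagging explicitly is that $\bfu^k$ is assumed to lie in $\Delta$ and $L$ is differentiable in an open neighbourhood of $\Delta$, so the gradient $\bfg^k$ is well-defined and the above manipulations are justified without boundary issues.
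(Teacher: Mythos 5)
Your proposal is correct and follows essentially the same route as the paper's proof: reduce the objective to $-\langle \bfu,\ \bfu^k - \eta\,\bfg^k\rangle$ up to constants and then invoke Lemma~\ref{lem:sm} with $\tbfu := \bfu^k - \eta\,\bfg^k$. The only addition beyond the paper's argument is your explicit remark on uniqueness via strict concavity of $H$, which is a harmless (and welcome) strengthening.
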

%
\begin{proof}
First one shows that
\vspace{-1ex}
\begin{equation}
\eta\,\hL^k(\bfu) - \left\langle\bfu^{k},\bfu\right\rangle\fro = -
\left\langle\bfu, \bfu^k -\eta\,\bfg^k\right\rangle\fro\ ,\\[-1ex]
\end{equation}
 apart from constant
terms (those not containing $\bfu$).
Then the proof follows from~\lemref{lem:sm}.
\end{proof}
\vspace{-1ex}

The objective function~\eqref{eq:sPGD_mf_p} is essentially the same as the mean-field
objective~\plaineqref{eq:mf} for $\hL^k(\bfw)$ (noting $\E_{\bfu}[\hL^k(\bfw)] = \hL^k(\bfu\bfq) = \langle \bfg^k, \bfu \rangle$ up to constant terms) except for the term $\langle
\bfu^k, \bfu\rangle\fro$.
This, in fact, acts as a proximal term. Note, it is the cosine similarity but
subtracted from the loss to enforce proximity. 
Therefore, it encourages the resulting $\bfu^{k+1}$ to be closer to the current
point $\bfu^{k}$ and its influence relative to the loss term is governed by the
learning rate $\eta$. 
Since gradient estimates are stochastic in our case, such a proximal term is
highly desired as it encourages the updates to make a smooth transition.

Furthermore, the negative entropy term acts as a convex regularizer and when
$\beta\to \infty$ its influence becomes negligible and the update results in a
binary labelling ${\bfu\in\calV^m}$.
%
%
In addition, the entropy term in~\eqref{eq:sPGD_mf_p} captures the
(in)dependency between the parameters.
To encode dependency, the entropy of the fully-factorized distribution can
perhaps be replaced with a more complex entropy such as a tree-structured
entropy, following the idea of~\cite{ravikumar2008message}. 
Furthermore, in place of $\hL^k$, a higher-order approximation can be used.
Such explorations go beyond the scope of this paper.

\begin{rmk}
Note that, our update~\plaineqref{eq:sPGD_mf_p} can be interpreted as an {\em entropic penalty method} and it  
is similar in spirit to that of the mirror-descent algorithm when entropy is chosen as the mirror-map (refer Sec.~\myref{4.3}
of~\cite{bubeck2015convex}).
In fact, at each iteration, both our algorithm and mirror-descent augment the
gradient descent objective with a negative entropy term and optimizes over the
polytope.
However, compared to mirror-descent, our update additionally constitutes a
proximal term and an annealing hyperparameter $\beta$ which enables us to
gradually enforce a discrete solution.
Therefore, to employ mirror-descent, one needs to understand the effects of
using adaptive mirror-maps (that depend on $\beta$). 
Nevertheless, it is interesting to explore the potential of mirror-descent which
could allow us to derive different variants of our algorithm.
\end{rmk}

  \begin{algorithm}[t]
\caption{\acrfull{PMF}}
\label{alg:pgds}
\begin{algorithmic}[1]

\Require $K, b, \{\eta^k\}, \rho >1, \calD, \tL$ 
\Ensure $\bfw^*\in\calQ^m$

\State $\tbfu^0\in \R^{m\times d},\quad \beta\gets 1$ 
\Comment{Initialization}

\For{$k \gets 0,\ldots, K$} 

\State $\bfu^k \gets \softmax({\beta\tbfu^k})$
\Comment{Projection (\eqref{eq:spgdup})}

\State $\calD^{b} = \{\left( \bfx_i, \bfy_i \right)\}^{b}_{i=1} \sim
\mathcal{D}$
\Comment{Sample a mini-batch}

\State $\bfg_{\bfu}^k \gets
\left.\nabla_{\bfu}\tL(\bfu;\calD^b)\right|_{\bfu=\bfu^k}$
\Comment{Gradient \wrt $\bfu$ at $\bfu^k$}

\State $\bfg_{\tbfu}^k \gets \bfg_{\bfu}^k \left.\frac{\partial \bfu}{\partial
{\tbfu}}\right|_{\tbfu=\tbfu^k}$
\Comment{Gradient \wrt $\tbfu$ at $\bfu^k$}

\State $\tbfu^{k+1} \gets \tbfu^{k} - \eta^k \bfg_{\tbfu}^k$
\Comment{Gradient descent on $\tbfu$}

\State $\beta\gets \rho\beta$
\Comment{Increase $\beta$}

\EndFor

\State $\bfw^* \gets \hardmax({\tbfu^K})\bfq$
\Comment{Quantization (\eqref{eq:hm})}

\end{algorithmic}
\end{algorithm}

\vspace*{-0.2in}
\paragraph{Proximal Mean-Field (\acrshort{PMF}).}
The preferred embodiment of our \acrshort{PMF} algorithm is similar to softmax based \acrshort{PGD}. 
\algref{alg:pgds} summarizes our approach. 
Similar to the existing methods~\cite{hubara2017quantized}, however, we introduce the 
auxiliary variables $\tbfu\in\R^{m\times d}$ and perform gradient descent on
them, composing the loss function $\tilde L$ with the $\softmax$ function that
maps $\tbfu$ into $\calS$.  
In effect this
 solves the optimization problem: 
 \begin{equation}
  \min_{\tbfu\in\R^{m\times d}}  \tL\left(\softmax (\beta\tbfu);\calD\right)\ .
 \end{equation}
 In this way, optimization is carried out
over the unconstrained domain $\R^{m\times d}$ rather than over the domain
$\calS$.
In contrast to existing methods, this is not a necessity but empirically it
improves the performance.
Finally, since $\beta$ can never be $\infty$, to ensure a fully-quantized
network, the final quantization is performed using $\hardmax$.
Since, $\softmax$ approaches $\hardmax$ when $\beta\to \infty$, the fixed points
of \algref{alg:pgds} corresponds to the fixed points of \acrshort{PGD} with the
$\hardmax$ projection.
However, exploration due to $\softmax$ allows our algorithm to converge to fixed
points with better validation errors as demonstrated in the  experiments. 

\SKIP{
\begin{figure*}[t]
\begin{minipage}[t]{0.48\textwidth}
\vspace{0pt}
\begin{algorithm}[H]
\caption{\acrfull{BC}~\cite{courbariaux2015binaryconnect}}
\label{alg:bc}
\begin{algorithmic}[1]
\Require $K,\eta_{\bfw},\calD,L,\calQ$ 
\Ensure $\bfw^*\in\calQ^m$

\State $\tbfw^0\in\R^m$ 
\Comment{Initialization}

\For{$k \gets 0\ldots K$} 

\State $\bfw^k \gets \sign(\tbfw^k)$
\Comment{Projection}


\State $\bfg_{\bfw}^k \gets
\nabla_{\bfw}L(\bfw;\calD)|_{\bfw=\bfw^k}$
\Comment{Gradient \wrt $\bfw$}

\State $\bfg_{\tbfw}^k \gets \bfg_{\bfw}^k \left.\frac{\partial \bfw}{\partial
{\tbfw}}\right|_{\tbfw=\tbfw^k}$
\Comment{Gradient \wrt $\tbfw$}

\State $\tbfw^{k+1} \gets \tbfw^{k} - \eta_{\bfw}\,\bfg_{\tbfw}^k$
\Comment{Gradient descent}

\EndFor

\State $\bfw^* \gets \sign(\tbfw^{K})$
\Comment{Final discrete labelling}

\end{algorithmic}
\end{algorithm}
\end{minipage}\hfill
  \begin{minipage}[t]{0.48\textwidth}
  \vspace{0pt}
  \begin{algorithm}[H]
\caption{\acrfull{PICM}}
\label{alg:pgdh}
\begin{algorithmic}[1]
\Require $K,\eta_{\bfu},\calD,\tL,\calQ$ 
\Ensure $\bfw^*\in\calQ^m$

\State $\tbfu^0\in\R^{m\times d}$ 
\Comment{Initialization}

\For{$k \gets 0\ldots K$} 

\State $\bfu^k \gets \hardmax(\tbfu^k)$
\Comment{Projection}

\State $\bfg_{\bfu}^k \gets
\nabla_{\bfu}\tL(\bfu;\calD)|_{\bfu=\bfu^k}$
\Comment{Gradient \wrt $\bfu$}

\State $\bfg_{\tbfu}^k \gets \bfg_{\bfu}^k \left.\frac{\partial \bfu}{\partial
{\tbfu}}\right|_{\tbfu=\tbfu^k}$
\Comment{Gradient \wrt $\tbfu$}

\State $\tbfu^{k+1} \gets \tbfu^{k} - \eta_{\bfu}\,\bfg_{\tbfu}^k$
\Comment{Gradient descent}

\EndFor
\State $\bfw^* \gets \hardmax(\tbfu^{K})\bfq$
\Comment{Final discrete labelling}

\end{algorithmic}
\end{algorithm}

\end{minipage}
\end{figure*}
}

\subsection{Proximal ICM as a Special Case}\label{sec:picm}
For \gls{PGD}, if $\hardmax$ is used instead of the $\softmax$ projection, the
resulting update is the same as a proximal version of
\gls{ICM}~\cite{besag1986statistical}. 
\SKIP{
To see this, let us define the $\hardmax$ projection at iteration $k$ as:
\begin{align}\label{eq:hm}
\bfu^k_j &= \hardmax({\tbfu^k_j})\ ,\\\nonumber
u^k_{j:\lambda} &= \left\{\begin{array}{ll}
1 & \mbox{if $\lambda = \amax{\mu\in\calQ}\, \tu^k_{j:\mu}$}\\
0 & \mbox{otherwise} \end{array} \right. \quad \forall\, \lambda\in\alllabels\ .
\end{align}
where $\tbfu^k = \bfu^{k-1} - \eta\,\bfg^{k-1}$.
}
In fact, following the proof of~\lemref{lem:sm}, it can be shown that the update
$\bfu^{k+1} = \hardmax(\bfu^k - \eta\,\bfg^k)$ yields a fixed point of the
following equation:
\vspace{-1ex}
\begin{equation}\label{eq:hPGD_mf_p}
\min_{\bfu\in\calS}\ \eta\left\langle \bfg^{k}, \bfu\right\rangle\fro -
\left\langle\bfu^{k}, \bfu\right\rangle\fro\ .
\vspace{-1ex}
\end{equation}
Notice, this is exactly the same as the \gls{ICM} objective augmented by the
proximal term.
In this case, $\bfu\in\calV^m\subset\calS$, meaning, the feasible
domain is restricted to be the vertices of the polytope $\calS$.
Since $\softmax$ approaches $\hardmax$ when $\beta\to \infty$, this is a special
case of proximal mean-field.

%
%
%
%
%
%
%
%
%
%

\subsection{BinaryConnect as Proximal ICM}\label{sec:bc}
In this section, considering binary neural networks, \ie, $\calQ=\{-1,1\}$, and
non-stochastic setting, we show that the \gls{PICM} algorithm is
equivalent to the popular \gls{BC}
method~\cite{courbariaux2015binaryconnect}. 
In these algorithms, the gradients are computed in two different spaces and
therefore to alleviate any discrepancy we assume that gradients are computed
using the full dataset.


Let $\tbfw\in\R^m$ and $\bfw\in\calQ^m$ be the infeasible and feasible
points of \gls{BC}. Similarly, $\tbfu\in\R^{m\times d}$ and
$\bfu\in\calV^m\subset\calS$ be the infeasible and feasible points of our 
\gls{PICM} method, respectively. 
%
For convenience, we summarize one iteration of \gls{BC} in~\algref{alg:bc}.
%
\SKIP{
Let us first recall that the variables $\bfw$ and $\bfu$ are related as
(\eqref{eq:indicv}):
\begin{align}
\tbfw &= \tbfu\bfq\ ,\\\nonumber
 \bfw &= \bfu\bfq\ .
\end{align}
} 
Now, we show that the update steps in both \gls{BC} and \gls{PICM} are
equivalent.

\begin{algorithm}[t]
\caption{One iteration of \acrfull{BC}~\cite{courbariaux2015binaryconnect}}
\label{alg:bc}
\begin{algorithmic}[1]
\Require $\tbfw^k,\eta_{\bfw},\calD,L$ 



\State $\bfw^k \gets \sign(\tbfw^k)$
\Comment{Projection}


\State $\bfg_{\bfw}^k \gets
\left.\nabla_{\bfw}L(\bfw;\calD)\right|_{\bfw=\bfw^k}$
\Comment{Gradient \wrt $\bfw$}

\State $\bfg_{\tbfw}^k \gets \bfg_{\bfw}^k \left.\frac{\partial \bfw}{\partial
{\tbfw}}\right|_{\tbfw=\tbfw^k}$
\Comment{Gradient \wrt $\tbfw$}

\State $\tbfw^{k+1} \gets \tbfw^{k} - \eta_{\bfw}\,\bfg_{\tbfw}^k$
\Comment{Gradient descent}



\end{algorithmic}
\end{algorithm}

\begin{pro}\label{pro:bc_vs_pgdh}
Consider \gls{BC} and \gls{PICM} with $\bfq = [-1,1]^T$ and $\eta_{\bfw} > 0$.
For an iteration $k>0$, if $\tbfw^k= \tbfu^k\bfq$ then, 
\begin{enumerate}
  \item the projections in \acrshort{BC}: ${\bfw^k = \sign(\tbfw^k)}$ and\\ 
   \acrshort{PICM}: ${\bfu^k = \hardmax(\tbfu^k)}$  
  satisfy  ${\bfw^k = \bfu^k\bfq}$.
  
  \item let the learning rate of \acrshort{PICM} be $\eta_{\bfu} = \eta_{\bfw}/2$, then the updated points after the
gradient descent step in \acrshort{BC} and \acrshort{PICM} satisfy
  ${\tbfw^{k+1} = \tbfu^{k+1}\bfq}$.
\end{enumerate} 
\end{pro}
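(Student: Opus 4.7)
The plan is to handle the two parts of the proposition separately, noting that part 1 is essentially a direct algebraic verification while part 2 requires a careful application of the chain rule (including through the non-differentiable $\sign$ and $\hardmax$ functions via the straight-through estimator).

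For part 1, I would start from the definition of $\hardmax$ in the binary case. With $\calQ=\{-1,1\}$ and $\bfq=[-1,1]^T$, for each $j\in\allweights$ the vector $\bfu_j^k=\hardmax(\tbfu_j^k)$ selects the unit basis vector indexed by the larger coordinate of $\tbfu_j^k$. Taking the inner product with $\bfq$ then yields $\bfu_j^k\bfq = -1$ when $\tu_{j:-1}^k\ge\tu_{j:1}^k$ and $+1$ otherwise. Since the assumption $\tw_j^k=\tbfu_j^k\bfq=-\tu_{j:-1}^k+\tu_{j:1}^k$ gives $\tw_j^k\le 0 \iff \tu_{j:-1}^k\ge \tu_{j:1}^k$, this is precisely $\sign(\tw_j^k)$, establishing $\bfw^k = \bfu^k\bfq$.

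For part 2, the backbone is the chain rule. Since $\bfw^k=\bfu^k\bfq$ by part 1, the gradients of $L$ (viewed in the two spaces) are related by $\bfg_{\bfu}^k = \bfg_{\bfw}^k\,\bfq^T$. Then I need to propagate through the projection step, using the straight-through estimator to make sense of $\partial\sign/\partial\tw$ and $\partial\hardmax/\partial\tbfu$. The key identity to establish is
\begin{equation}
\frac{\partial w_j}{\partial \tbfu_j} \;=\; \frac{\partial \sign}{\partial \tw_j}\,\bfq \;=\; \frac{\partial \hardmax}{\partial \tbfu_j}\,\bfq,
\end{equation}
which holds because $w_j$ can be written both as $\sign(\tbfu_j\bfq)$ and as $\hardmax(\tbfu_j)\bfq$, and $\partial\tw_j/\partial\tbfu_j=\bfq$. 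Combining these chain rules yields $\bfg_{\tbfu_j}^k\,\bfq = g_{\tw_j}^k\,\bfq^T\bfq = 2\,g_{\tw_j}^k$, where the factor $2$ comes directly from $\bfq^T\bfq=\|[-1,1]\|^2=2$.

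With that identity in hand, the conclusion is a one-line calculation: multiplying the $\tbfu$-update by $\bfq$ on the right,
\begin{equation}
\tbfu^{k+1}\bfq \;=\; \tbfu^{k}\bfq - \eta_{\bfu}\,\bfg_{\tbfu}^k\,\bfq \;=\; \tbfw^{k} - 2\eta_{\bfu}\,\bfg_{\tbfw}^k \;=\; \tbfw^{k} - \eta_{\bfw}\,\bfg_{\tbfw}^k \;=\; \tbfw^{k+1},
\end{equation}
precisely when $\eta_{\bfu}=\eta_{\bfw}/2$, which is exactly the stated choice.

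The main obstacle, and the only non-routine part, is making the chain-rule computation through the $\hardmax$/$\sign$ step legitimate. Strictly speaking, both maps are non-differentiable at the decision boundary, so the argument implicitly uses the straight-through-estimator convention: treating $\partial\sign/\partial\tw$ as a scalar and $\partial\hardmax/\partial\tbfu$ as a $d\times d$ matrix whose action on $\bfq$ agrees with scalar multiplication by $\partial\sign/\partial\tw$. Once one commits to this convention uniformly in both algorithms, the algebra slots together cleanly; everything else is bookkeeping.
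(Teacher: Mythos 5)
Your proposal is correct and follows essentially the same route as the paper's own proof in Appendix~B: a direct case check of the $\hardmax$/$\sign$ equivalence for part~1, and for part~2 the chain-rule identities $\bfg_{\bfu}^k=\bfg_{\bfw}^k\bfq^T$ and $\partial w_j/\partial\tbfu_j = (\partial\sign/\partial\tw_j)\,\bfq = (\partial\hardmax/\partial\tbfu_j)\,\bfq$, yielding $\bfg_{\tbfu_j}^k\bfq = \bfq^T\bfq\, g_{\tw_j}^k = 2\,g_{\tw_j}^k$ and hence the claimed correspondence under $\eta_{\bfu}=\eta_{\bfw}/2$. Your remark about the straight-through-estimator convention matches what the paper handles in Appendix~B.1.
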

\begin{proof} 
Case (1) is simply applying $\tbfw^k=\tbfu^k\tbfq$ whereas case (2) can be
proved by writing $\bfw^k$ as a function of $\tbfu^k$ and then applying chain
rule. 
See Appendix~\myref{B}.
\end{proof}

Since $\hardmax$ is a non-differentiable operation, the partial derivative 
$\partial \bfu/\partial \tbfu = \partial \hardmax/\partial \tbfu$ is not
defined.
However, to allow backpropagation, we write $\hardmax$ in terms of the $\sign$
function, and used the 
straight-through-estimator~\cite{stehinton} to allow gradient flow similar to
binary connect.
For details please refer to Appendix~\myref{B.1}.

\SKIP{
Furthermore, $\hardmax$ is a non-differentiable operation but $\partial
\bfu/\partial \tbfu = \partial \hardmax/\partial \tbfu$ needs to be computed.
Interestingly, the $\hardmax$ function can be written in terms of the $\sign$
function
and then the partial derivative can be approximated using the
straight-through-estimator~\cite{stehinton} similar to binary connect.
For details of the derivation please refer to Appendix~\myref{E.1}.
}

\section{Related Work}
There is much work on \gls{NN} quantization focusing on different aspects such
as quantizing parameters~\cite{courbariaux2015binaryconnect},
activations~\cite{hubara2016binarized}, loss aware
quantization~\cite{hou2016loss} and quantization for specialized
hardware~\cite{esser2015energyEfficient}, to name a few.
Here we give a brief summary of latest works and for a comprehensive survey we
refer the reader to~\cite{guo2018survey}.

In this work, we consider parameter quantization, which can either be treated as
a post-processing scheme~\cite{gong2014vectorQuantization} or incorporated into
the learning process.
Popular methods~\cite{courbariaux2015binaryconnect,hubara2017quantized} falls
into the latter category and optimize the constrained problem using some form of
projected stochastic gradient descent.
In contrast to projection, quantization can also be enforced using a penalty
term~\cite{bai2018proxquant,yin2018binaryrelax}.
Even though, our method is based on projected gradient descent, by optimizing in
the $\bfu$-space, we provide theoretical insights based on mean-field and bridge
the gap between \gls{NN} quantization and \gls{MRF} optimization literature.

In contrast, the variational approach can also be used for quantization, where
the idea is to learn a posterior probability on the network parameters in a
Bayesian framework.  
In this family of methods, the quantized network can be obtained either via a
quantizing prior~\cite{achterhold2018variationalQuantization} or using the
\acrshort{MAP} estimate on the learned posterior~\cite{soudry2014EBP}.
Interestingly, the learned posterior distribution can be used to estimate the
model uncertainty and in turn determine the required precision for each network
parameter~\cite{louizon2017bayesianCompression}.
Note that, even in our seemingly different method, we learn a probability
distribution over the parameters (see~\secref{sec:uspace}) and it would be
interesting to understand the connection between Bayesian methods and our
algorithm.

\SKIP{
In recent years, there has been considerable interest in the quantization of
\gls{NN} by quantizing the weights and subsequently the activations; both
intended to reduce the model size, and inference time.
 
In this work our main focus is to provide theoretical insights for the weight
quantization, as activation quantization can be done trivially on top of it. 
Here we present a summary of the works closest to our approach in terms of the
objective we are interested in. 
For a detailed review on \gls{NN} quantization we refer
to~\cite{guo2018survey}.

\cite{gong2014vectorQuantization} treated quantization as a post-processing
scheme, where the quantization took place once the training was over. 
They applied various quantization methods and showed that even the simplest
possible one, thresholding weights to $-1$ and $1$ depending on their signs, did
not degrade the performance much (less than $10\%$ degradation of top-1 accuracy
on ILSVRC2012), as otherwise was expected. 
This experiment showed that, perhaps, making the quantization as part of the
optimization process may lead to a solution which is `close' to the first order
critical point. 
This is the motivation behind our work and the recent state-of-the-art works.

We begin by talking about the variational/Bayesian approaches for quantization.
Variational approaches: 
EBP~\cite{soudry2014EBP} uses an online Bayesian learning approach to learn
discrete weights and activations by optimizing posteriors over the weights.
However, EBP is known to under perform on large networks involving convolutional
layers. 

Energy efficient~\cite{esser2015energyEfficient}

Bayesian compression~\cite{louizon2017bayesianCompression} uses sparsity
inducing priors and posterior uncertainty to prune and estimate the
bit-precisions required to store the weights. 

An extension of BC is proposed by~\cite{achterhold2018variationalQuantization}
where a quantization prior used to optimize a differentiable KL divergence. Once
training is over, a deterministic quantization is used to obtain quantized
\gls{NN}. The results, however, are shown only over small dataset MNIST and
CIFAR-10. It is yet to see if variational approaches, such as mentioned above,
can perform well on large datasets (\eg CIFAR-100, ImageNet) on bigger
networks.

Another line of work, closest to ours, are based non-variational approach. Even
though they don't provide probabilistic view, practically, they work very well.
BinaryConnect~\cite{courbariaux2015binaryconnect} and XNorNet are the closest to
our approach, and are the most popular methods for \gls{NN} quantization
(binary). 
BinaryConnect computes the gradients at the binarized weights, however, updates
the full-precision weights in that direction. Similar approach has been adopted

XNor (BWN)~\cite{rastegari2016xnor}, 

Binary relax, 

ProxQuad, 
PGSD
}\section{Experiments}

Since neural network binarization is the most popular 
quantization~\cite{courbariaux2015binaryconnect,rastegari2016xnor},
we set the quantization levels to be binary, \ie, $\calQ=\{-1,1\}$.
However, our formulation is applicable to any predefined set of quantization
levels given
sufficient resources at training time.
We would like to point out that, we quantize all learnable parameters, meaning,
all quantization algorithms result in $32$ times less memory compared to the
floating point counterparts.

We evaluate our \gls{PMF} algorithm on \mnist{}, \cifar{-10},
\cifar{-100} and
\tinyimagenet{}\footnote{\url{https://tiny-imagenet.herokuapp.com/}} 
classification datasets with convolutional and residual architectures and
compare against the \gls{BC} method~\cite{courbariaux2015binaryconnect} and the
latest algorithm \gls{PQ}~\cite{bai2018proxquant}. 
Note that \gls{BC} and \gls{PQ} constitute the closest and directly comparable
baselines to \gls{PMF}. 
Furthermore, many other methods have been developed based on \gls{BC} by
relaxing some of the constraints, \eg, layer-wise
scalars~\cite{rastegari2016xnor}, and we believe, similar extensions are
possible with our method as well.
Our results show that the binary networks obtained by \gls{PMF} yield accuracies
very close to the floating point counterparts while consistently outperforming
the baselines.

\begin{table}[t]
    \centering
    \begin{tabular}{l@{\hspace{.7\tabcolsep}}c@{\hspace{.7\tabcolsep}}c@{
\hspace{.7\tabcolsep}}c@{\hspace{.7\tabcolsep}}c@{\hspace{.7\tabcolsep}}c@{
\hspace{.7\tabcolsep}}c@{\hspace{.7\tabcolsep}}}
        \toprule
        Dataset & Image & \# class & Train~/~Val. & $b$ & $K$ \\
        \midrule
        \mnist & $28\times 28$ & $10$ & $50$k~/~$10$k  & $100$ & $20$k\\
        \cifar{-10} & $32\times 32$ & $10$ & $45$k~/~$5$k  & $128$ & $100$k\\
        \cifar{-100} & $32\times 32$ & $100$ & $45$k~/~$5$k  & $128$ & $100$k\\
        \tinyimagenet{} & $64\times 64$ & $200$ & $100$k~/~$10$k  & $128$ &
$100$k\\
        \bottomrule
    \end{tabular}
    \vspace{1ex}
    \caption{\em
    	Experiment setup. Here, $b$ is the batch size and $K$ is the
    	total number of iterations used for all the methods.}
    \label{tab:setup}
\end{table}

\begin{table*}[t]
    \centering
    \begin{tabular}{llccc|cccc}
        \toprule
        \multirow{2}{*}{Dataset} & \multirow{2}{*}{Architecture} &
\multirow{2}{*}{\acrshort{REF} {\small (Float)}}  & 
\multirow{2}{*}{\acrshort{BC}~\cite{courbariaux2015binaryconnect}}
         &  \multirow{2}{*}{\acrshort{PQ}~\cite{bai2018proxquant}} &
\multicolumn{3}{c}{Ours} &  \multirow{2}{*}{\acrshort{REF} - \acrshort{PMF}}\\
         &  &  &  &  & \acrshort{PICM} & \acrshort{PGD} & \acrshort{PMF} & \\
        \midrule
        \multirow{2}{*}{\mnist}
         & \slenet{-300} & $98.55$ & $98.05$ & $98.13$ & $98.18$ & $98.21$ &
$\textbf{98.24}$ & $+0.31$\\
		 & \slenet{-5}   & $99.39$ & $99.30$ & $99.27$ & $99.31$ & $99.28$ &
$\textbf{99.44}$ & $-0.05$\\
        \midrule
        \multirow{2}{*}{\cifar{-10}}
         & \svgg{-16} 	 & $93.01$ & $86.40$ & $90.11$ & $88.96$ & $88.48$ &
$\textbf{90.51}$ & $+2.50$\\
		 & \sresnet{-18} & $94.64$ & $91.60$ & $92.32$ & $92.02$ & $92.60$ &
$\textbf{92.73}$ & $+1.91$\\
         \midrule
        \multirow{2}{*}{\cifar{-100}}
         & \svgg{-16} 	 & $70.33$ & $43.70$ & $55.10$ & $45.65$ & $57.83$ &
$\textbf{61.52}$ & $+8.81$\\
		 & \sresnet{-18} & $73.85$ & $69.93$ & $68.35$ & $70.85$ & $70.60$ &
$\textbf{71.85}$ & $+2.00$\\
         \midrule
         \tinyimagenet
         & \sresnet{-18} & $56.41$ & $49.33$ & $49.97$ & $49.66$ & $49.60$ &
$\textbf{51.00}$ & $+5.63$\\
        \bottomrule
    \end{tabular}
    \caption{\em
    	Classification accuracies on the test set for different methods. Note that
our \acrshort{PMF}
    	algorithm consistently produces better results than other binarization
    	methods and the degradation in performance to the full floating
    	point network (last column) is minimal especially for small datasets. For
larger datasets
    	(\eg, \cifar{-100}), binarizing \sresnet{-18} results in much smaller
degradation
    	compared to \svgg{-16}. Even though, \acrshort{PICM} and \acrshort{BC} are
    	theoretically equivalent in the non-stochastic setting, \acrshort{PICM}
yields
    	slightly better accuracies. Note, all binarization methods except
\acrshort{PQ} require exactly $\mathbf{32}$ times less memory compared to
single-precision floating points networks at test time.
    	}
        \vspace{-1ex}
    \label{tab:res}
\end{table*}

\SKIP{
\begin{table}[t]
    \centering
    \begin{tabular}{l@{\hspace{.7\tabcolsep}}c@{\hspace{.7\tabcolsep}}c@{
\hspace{.7\tabcolsep}}c@{\hspace{.7\tabcolsep}}c@{\hspace{.7\tabcolsep}}c@{
\hspace{.7\tabcolsep}}c@{\hspace{.7\tabcolsep}}}
        \toprule
        Dataset & Image & \# class & Train~/~Val. & $b$ & $K$ \\
        \midrule
        \mnist & $28\times 28$ & $10$ & $50$k~/~$10$k  & $100$ & $20$k\\
        \cifar{-10} & $32\times 32$ & $10$ & $45$k~/~$5$k  & $128$ & $100$k\\
        \cifar{-100} & $32\times 32$ & $100$ & $45$k~/~$5$k  & $128$ & $100$k\\
        \tinyimagenet{} & $64\times 64$ & $200$ & $100$k~/~$10$k  & $128$ &
$100$k\\
        \bottomrule
    \end{tabular}
    \vspace{1ex}
    \caption{\em
    	Experiment setup. Here, $b$ is the batch size and $K$ is the
    	total number of iterations used for all the methods.}
    \label{tab:setup}
\end{table}

\begin{table*}[t]
    \centering
    \begin{tabular}{llccccccc}
        \toprule
        \multirow{2}{*}{Dataset} & \multirow{2}{*}{Architecture} &
\acrshort{REF} {\small (Float)} & 
        \acrshort{BC} & \acrshort{PQ}* & \acrshort{PICM} & \acrshort{PGD}* &
\acrshort{PMF} & \acrshort{REF} - \acrshort{PMF}\\
         &  & Top-1/5 (\%) & Top-1/5 (\%) & Top-1/5 (\%) & Top-1/5 (\%) &
Top-1/5 (\%) & Top-1/5 (\%) & Top-1 (\%)\\
        \midrule
        \multirow{2}{*}{\mnist}
         & \slenet{-300} & $98.55/99.93$ & $98.05/99.93$ & $97.43/99.89$ &
$98.18/99.91$ & $96.21/99.92$ & $\textbf{98.24}/99.97$ & $+0.31$\\
         & \slenet{-5}   & $99.39/99.98$ & $99.30/99.98$ & $98.81/99.97$ &
$99.31/99.99$ & $99.08/99.99$ & $\textbf{99.44}/100.0$ & $-0.05$\\
        \midrule
        \multirow{2}{*}{\cifar{-10}}
         & \svgg{-16} 	 & $93.01/99.38$ & $86.40/98.43$ & $90.11/99.49$ &
$88.96/99.17$ & $88.48/99.51$ & $\textbf{90.51}/99.56$ & $+2.50$\\
         & \sresnet{-18} & $94.64/99.78$ & $91.60/99.74$ & $92.32/99.80$ &
$92.02/99.71$ & $92.60/99.80$ & $\textbf{92.73}/99.80$ & $+2.09$\\
         \midrule
        \multirow{2}{*}{\cifar{-100}}
         & \svgg{-16} 	 & $70.33/88.58$ & $43.70/73.43$ & $55.10/82.93$ &
$45.65/74.70$ &- & $\textbf{61.52}/85.83$ & $+8.81$\\
         & \sresnet{-18} & $73.85/92.49$ & $69.93/90.75$ & $68.35/90.59$ &
$70.85/91.46$ & $70.60/91.23$ & $\textbf{71.85}/91.88$ & $+2.00$\\
         \midrule
         \tinyimagenet& \sresnet{-18} & $56.41/79.75$ & $49.33/74.13$ &
$48.96/74.66$ & $49.66/74.54$ & $49.60/74.85$ & $\textbf{50.78}/75.01$ &
$+5.63$\\
        \bottomrule
    \end{tabular}
    \vspace{1ex}
    \caption{\em
    	Classification accuracies on the test set for different methods. Note that
our \acrshort{PMF}
    	algorithm consistently produces better results than other binarization
    	methods and the degradation in performance to the full floating
    	point network (last column) is minimal especially for small datasets. For
larger datasets
    	(\eg, \cifar{-100}), binarizing \sresnet{-18} results in much smaller
degradation
    	compared to \svgg{-16}. Even though, \acrshort{PICM} and \acrshort{BC} are
    	theoretically equivalent in the non-stochastic setting, \acrshort{PICM}
yields
    	slightly better accuracies. Note, all \acrshort{PMF}, \acrshort{PICM}, and
\acrshort{BC} require $\mathbf{32}$ times less memory compared to
single-precision floating points networks at test time.
    	\NOTE{ProxQuant results}\NOTE{Do we want to have mnist (and top-5)
results?}\NOTE{PGD means PGD + sparsemax}
    	}
        
    \label{tab:res}
\end{table*}
}

\subsection{Experiment Setup}
The details of the datasets and their corresponding experiment setups are given
in~\tabref{tab:setup}. 
In all the experiments, standard multi-class cross-entropy loss is minimized.
\mnist{} is tested using
\slenet{-300} and \slenet{-5}, where the former consists of three \gls{fc}
layers while the latter is composed of two convolutional and two \gls{fc}
layers.
For \cifar{} and \tinyimagenet{}, \svgg{-16}~\cite{simonyan2014very} and
\sresnet{-18}~\cite{he2016deep} architectures adapted for \cifar{} dataset
are used. 
In particular, for \cifar{} experiments, similar to~\cite{lee2018snip}, the
size of the \gls{fc} layers of \svgg{-16} is set to $512$ and no dropout layers
are employed.
For \tinyimagenet{}, the stride of the first convolutional layer of
\sresnet{-18} is set to $2$ to handle the
image size~\cite{huang2017snapshot}.
In all the models, batch normalization~\cite{ioffe2015batch} (with no
learnable parameters) and \srelu{} non-linearity are used.
 Except for \mnist, standard data augmentation is used (\ie, random crop
and horizontal flip) and weight decay is set to $0.0001$ unless stated
otherwise.

For all the algorithms, the hyperparameters such as the optimizer and
the learning rate (also its schedule) are cross-validated using the validation
set\footnote{For \tinyimagenet{}, since the ground truth labels for
the test set were not available, validation set is used for both
cross-validation and testing.} and the chosen parameters are given in the
supplementary material. 
For \acrshort{PMF} and \acrshort{PGD} with $\sparsemax$, the growth-rate $\rho$
in~\algref{alg:pgds} (the multiplicative factor used to increase $\beta$) is
cross validated between $1.01$ and $1.2$ and chosen values for each experiment
are given in supplementary. 
Furthermore, since the original implementation of \gls{BC} do not binarize all
the learnable parameters, for fair comparison, we implemented \gls{BC} in our
experiment setting based on the publicly available
code\footnote{\url{https://github.com/itayhubara/BinaryNet.pytorch}}.
However, for \gls{PQ} we used the original
code\footnote{\url{https://github.com/allenbai01/ProxQuant}}, \ie, {\em for
\gls{PQ}, biases and last layer parameters are not binarized}. 
All methods are trained from a random initialization and the model with the best
validation accuracy is chosen for each method.
Our algorithm is implemented in PyTorch~\cite{paszke2017automatic}. 

\begin{figure*}[t]
    \centering
    \begin{subfigure}{0.25\linewidth}
    \includegraphics[width=0.99\linewidth]
{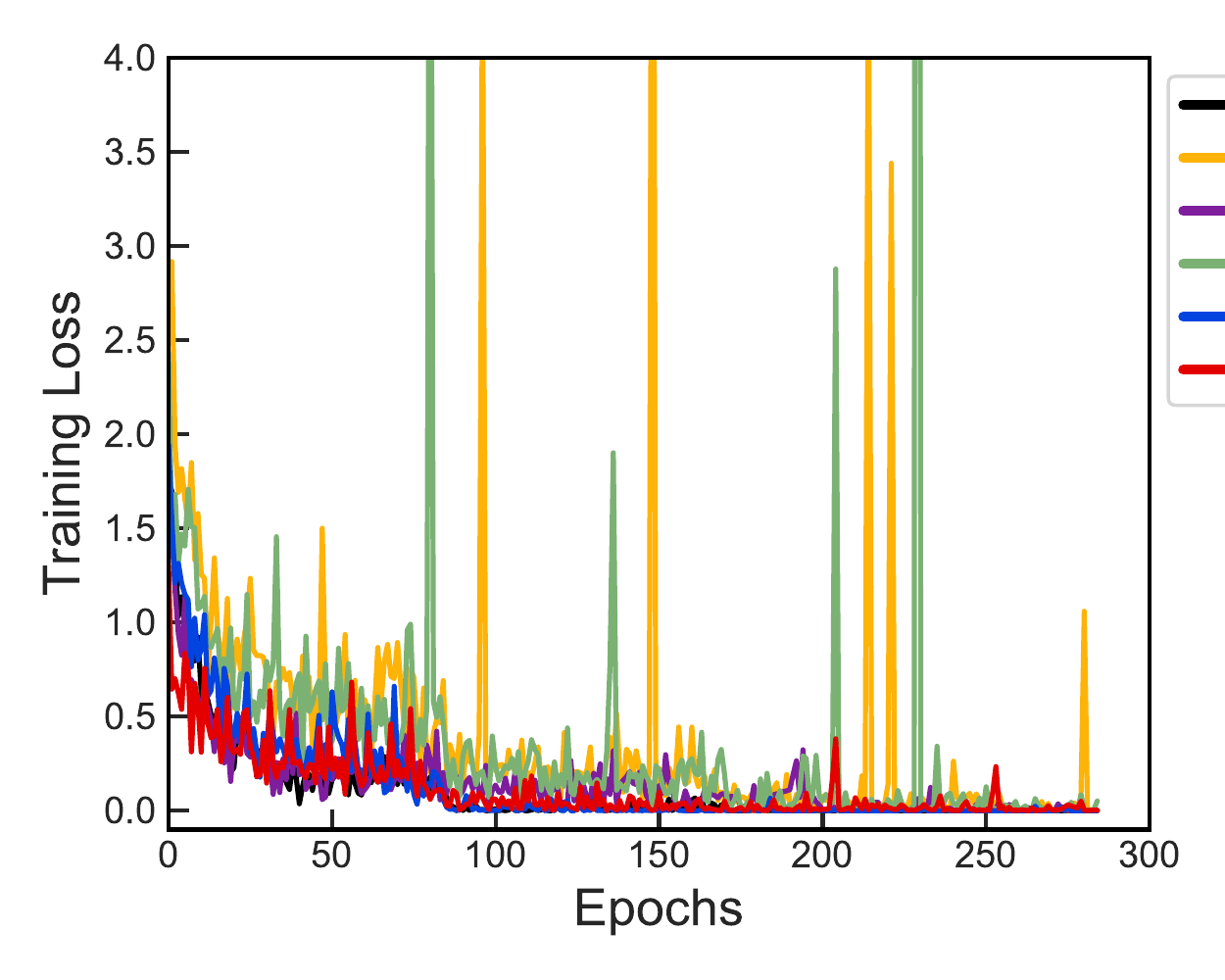}
    \end{subfigure}%
    \begin{subfigure}{0.25\linewidth}
    \includegraphics[width=0.99\linewidth]
{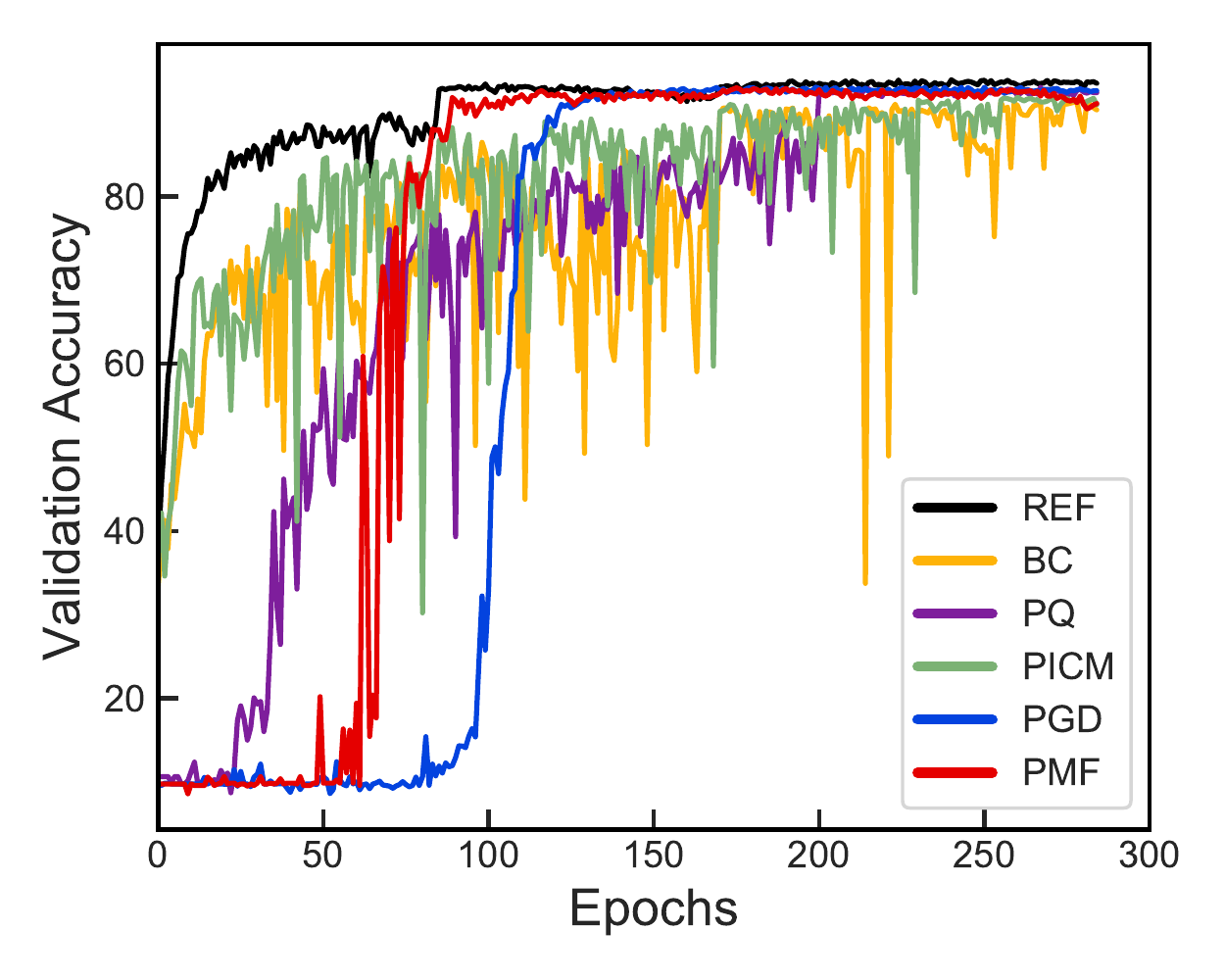}
    \end{subfigure}%
    \begin{subfigure}{0.25\linewidth}
    \includegraphics[width=0.99\linewidth]
{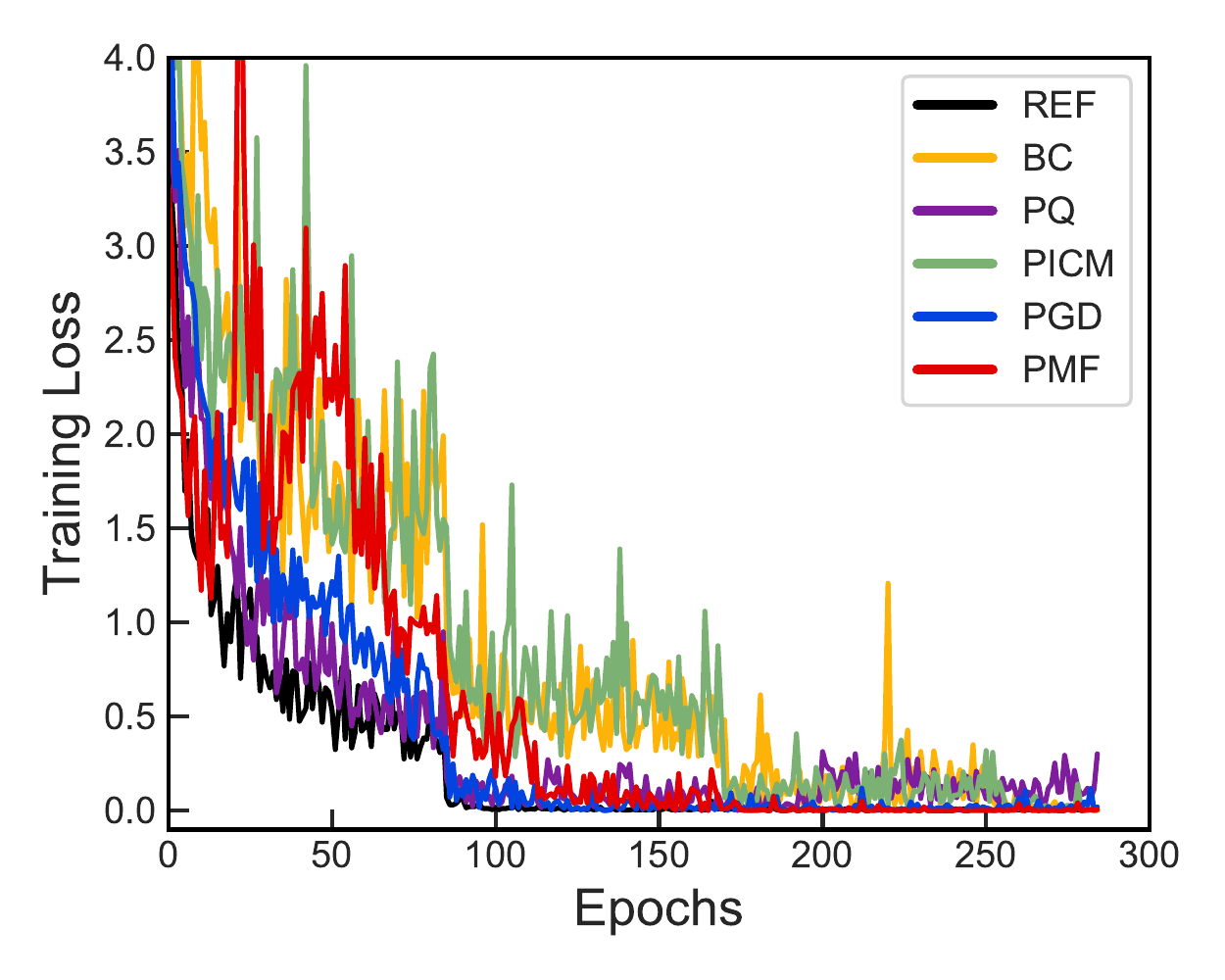}
    \end{subfigure}%
    \begin{subfigure}{0.25\linewidth}
    \includegraphics[width=0.99\linewidth]
{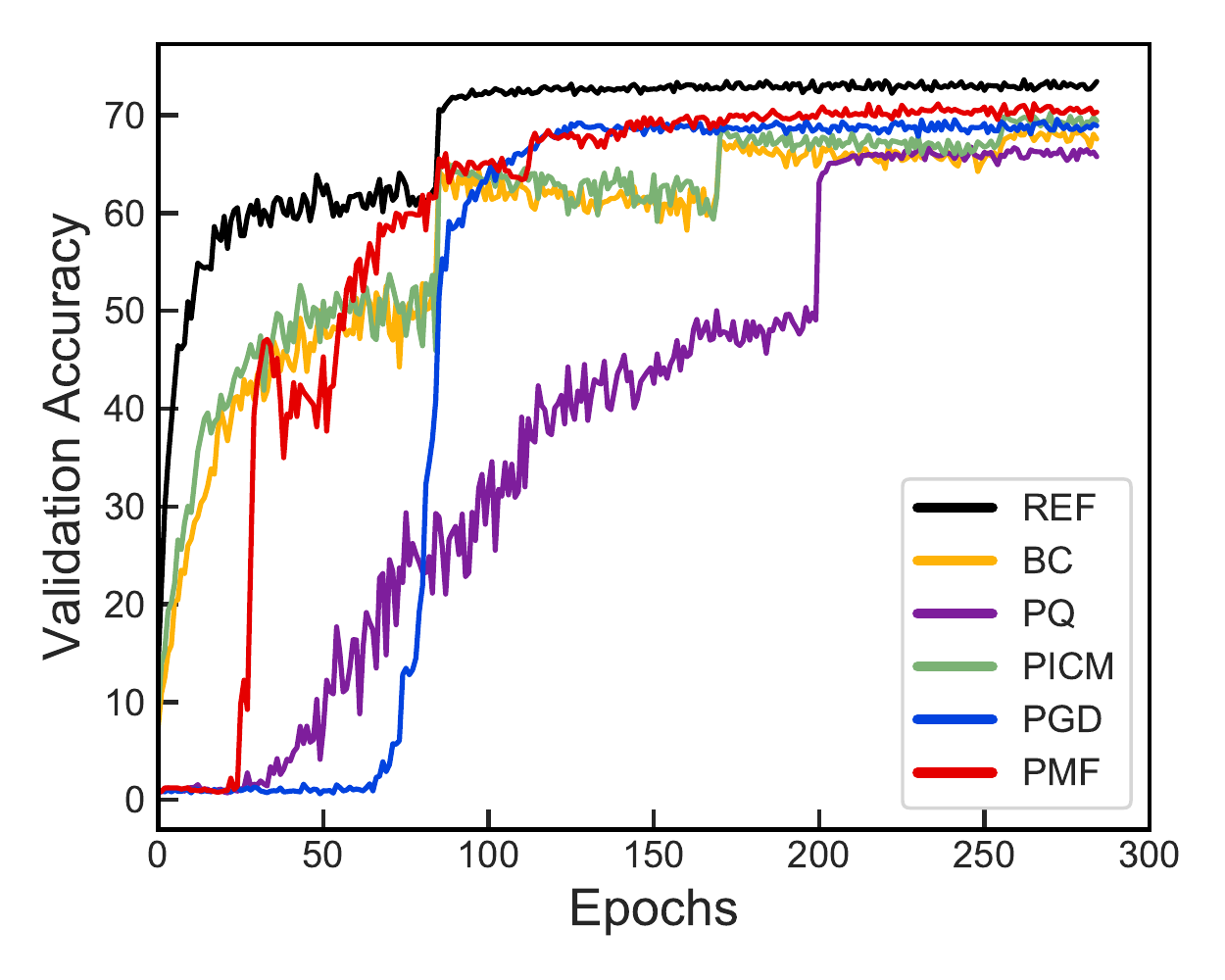}
    \end{subfigure}
    \vspace{-2ex}
    \caption{\em Training curves for \cifar{-10} (first two) and \cifar{-100}
(last two) with \sresnet{-18}. 
    For quantization methods, the validation accuracy is always measured with
the quantized networks. 
    Specifically, for \gls{PMF} and \gls{PGD}, the $\hardmax$ projection is
applied before the evaluation.
    Notably, validation accuracy plots clearly illustrate the exploration phase
of both \acrshort{PMF} and \acrshort{PGD}, during which the accuracies are the
worst.
    However, once $\beta$ is ``large enough'', the curves closely resembles
high-precision reference network while yielding very high accuracies.
    Furthermore, compared to \acrshort{BC} and \acrshort{PICM}, other methods
are less noisy suggesting the usefulness of optimizing over a convex domain. 
    }
    \vspace{-2ex}
    \label{fig:curves}
\end{figure*}

\subsection{Results}
The classification accuracies (top-1) on the test set of all versions of our
algorithm, namely, \gls{PMF}, \gls{PGD} (this is \acrshort{PGD} with the
$\sparsemax$ projection), and \gls{PICM}, the baselines \gls{BC} and \gls{PQ},
and the floating point \gls{REF} are reported in~\tabref{tab:res}. 
The training curves for \cifar{-10} and \cifar{-100} with \sresnet{-18} are
shown in~\figref{fig:curves}. 
Note that our \gls{PMF} algorithm consistently produces better results than
other binarization methods and the degradation in performance to the full
floating point reference network is minimal especially for small datasets. 
For larger datasets (\eg, \cifar{-100}), binarizing \sresnet{-18} results in
much smaller degradation compared to \svgg{-16}.

The superior performance of \gls{PMF} against \gls{BC}, \gls{PICM} and \gls{PGD}
empirically validates the hypothesis that performing ``noisy'' projection via
$\softmax$ and annealing the noise is indeed beneficial in the stochastic
setting. 
Furthermore, even though \gls{PICM} and \gls{BC} are theoretically equivalent in
the non-stochastic setting, \gls{PICM} yields slightly better accuracies in all
our experiments. 
We conjecture that this is due to the fact that in \gls{PICM}, the training is
performed on a larger network (\ie, in the $\bfu$-space).

To further consolidate our implementation of \gls{BC}, we quote the accuracies
reported in the original papers here. 
In~\cite{courbariaux2015binaryconnect}, the top-1 accuracy on \cifar{-10} with a
modified \svgg{} type network is $90.10\%$. 
In the same setting, even with additional layer-wise scalars,
(\gls{BWN}~\cite{rastegari2016xnor}), the corresponding accuracy is $90.12\%$.
For comprehensive results on network quantization we refer the reader to
Table~\myref{5} of~\cite{guo2018survey}.
Note that, in all the above cases, the last layer parameters and biases in all
layers were not binarized. 

\SKIP{

\NOTE{Except the table below any interesting experiments?}
\begin{tight_itemize}
\item full quantization -- give results reported in original papers with \%
binarization and other differences?
\item cross validation for learning rate schedule -- given in supplementary
\item training curves: loss vs iter and val acc vs iter
\end{tight_itemize}

\begin{table*}[t]
    \centering
    \begin{tabular}{ccccccc}
        \toprule
        Dataset & Architecture & Continuous & PGD-soft &
        PGD-soft-Alg.~\ref{alg:pgds} & PGD-hard-Alg.~\ref{alg:pgdh} &
        BinaryConnect~\cite{hubara2016binarized}\\
         &  & Top-1/5 & Top-1/5 & Top-1/5 & Top-1/5 & Top-1/5\\
        \midrule
        \multirow{2}{*}{MNIST}
         & LeNet-300 & $98.55/99.93$ & $96.74/99.92$ & $98.24/99.97$ &
$98.18/99.91$ & $98.05/99.93$\\
         & LeNet-5 & $99.39/99.98$ & $98.78/99.95$ & $99.44/100.0$ &
$99.31/99.99$ & $99.30/99.98$\\
        \midrule
        \multirow{2}{*}{CIFAR10}
         & VGG-16 & $93.01/99.38$ & $80.18/98.24$ & $90.51/99.56$ &
$88.96/99.17$ & $86.40/98.43$\\
         & ResNet-18 & $94.64/99.78$ & $87.36/99.50$ & $92.55/99.80$ &
$92.02/99.71$ &$91.60/99.74$\\
         \midrule
        \multirow{2}{*}{CIFAR100}
         & VGG-16 & $70.33/88.58$ & - & $61.02/85.78$ & $45.65/74.70$ &
$43.70/73.43$\\
         & ResNet-18 & $73.85/92.49$ & - & $71.85/91.88$ & $70.85/91.46$ &
$69.93/90.75$\\
         \midrule
         Tiny-& VGG-16 & $50.36/75.59$ & - & $30.05/56.36$ & $25.29/50.25$&
$27.45/53.88$         \\
         Imagenet& ResNet-18 &  $56.41/79.75$ & - & $50.78/75.01$ &
$49.66/74.54$ &$49.33/74.13$\\
        \bottomrule
    \end{tabular}
    \vspace{1ex}
    \caption{\em
    	Classification accuracies of different methods.
        {\em Continuous:} Reference network with floating point precision.
        {\em PGD-soft:} Softmax-based projected gradient descent, no
        auxiliary variables are stored.
        {\em PGD-soft-Alg.~\ref{alg:pgds}:} Softmax-based projected gradient
descent with auxiliary variables as in Alg.~\ref{alg:pgds}.
        {\em PGD-hard-Alg.~\ref{alg:pgdh}:} Hardmax-based projected gradient
        descent with auxiliary variables as in Alg.~\ref{alg:pgdh}.
        {\em BinaryConnect~\cite{hubara2016binarized}:} See Alg.~\ref{alg:bc}.
        }
    \label{tab:res}
\end{table*}
}
\section{Discussion}
In this work, we have formulated \gls{NN} quantization as a discrete labelling
problem and introduced a projected stochastic gradient descent algorithm to
optimize it.
By showing our approach as a proximal mean-field method, we have also provided
an \gls{MRF} optimization perspective to \gls{NN} quantization.
This connection opens up interesting research directions primarily on
considering dependency between the neural network parameters to derive better
network quantization schemes.
Furthermore, our \gls{PMF} approach learns a probability distribution over the
network parameters, which is similar in spirit to Bayesian deep learning
methods.
Therefore, we believe, it is interesting to explore the connection between
Bayesian methods and our algorithm, which can potentially drive research in both
fields.


\section{Acknowledgements}
This work was supported by the ERC grant ERC-2012-AdG 321162-HELIOS, EPSRC grant
Seebibyte EP/M013774/1, EPSRC/MURI grant EP/N019474/1 and the Australian
Research Council Centre of Excellence for Robotic Vision (project number
CE140100016). 
We would also like to acknowledge the Royal Academy of Engineering, FiveAI,
National Computing Infrastructure, Australia and Nvidia (for GPU donation).

\SKIP{
\clearpage
\appendices

Here, we provide the proofs of propositions and theorems stated in the main
paper and a self-contained overview of the mean-field method.
Later in~\secref{ap:expr}, we give the experimental details to allow
reproducibility, and more empirical analysis for our \gls{PMF} algorithm. 


\SKIP{
\section{Entropy based view of Softmax}\label{ap:sm}
Recall the $\softmax$ update for $\tbfu\in\R^d$ and $\beta>0$:
\begin{align}
\label{eq:sm1}
\bfu &= \softmax({\beta\tbfu})\ ,\quad\mbox{where} \\ \nonumber
u_{\lambda} &= \frac{e^{\beta \tu_{\lambda}}}{\sum_{\mu  \in \calQ} \; e^{\beta
\tu_{\mu}}}\quad \forall\, \lambda\in\alllabels\ .
\end{align}

\begin{lem}\label{lem:sm1}
Let $\bfu = \softmax(\beta\tbfu)$ for some $\tbfu\in\R^d$ and $\beta>0$. Then,
\begin{equation}\label{eq:sm_entropy1}
\bfu = \amax{\bfz\in\Delta}\ \left\langle\tbfu, \bfz\right\rangle\fro +
\frac{1}{\beta}H(\bfz)\ ,
\vspace{-1ex}
\end{equation} 
where ${H(\bfz) = -\sum_{\lambda=1}^{d}z_{\lambda}\,\log z_{\lambda}}$ is the
entropy.
\end{lem}
\begin{proof}
Now, ignoring the condition $u_{\lambda}\ge 0$ for now, the Lagrangian
of~\eqref{eq:sm_entropy1} with dual variable $y$ can be
written as:
\begin{align}
F(\bfz, y) = &\beta\left\langle \tbfu, \bfz\right\rangle\fro + H(\bfz) +
y\left(1-\sum_{\lambda}z_{\lambda}\right)\ .
\end{align}
Note that the objective function is multiplied by $\beta>0$.
Now, differentiating $F(\bfz, y)$ with respect to $\bfz$ and setting the
derivatives to zero:
\begin{align}\label{eq:ujl0}
\frac{\partial F}{z_{\lambda}} &= \beta\,\tu_{\lambda} - 1 - \log z_{\lambda} -
y = 0\ ,\\\nonumber 
\log z_{\lambda} &= -1 - y + \beta\,\tu_{\lambda}\ ,\\\nonumber
z_{\lambda} &= e^{-1 - y}\,e^{\beta \tu_{\lambda}}\ .
\end{align}
Since $\sum_{\mu} z_{\mu} =1$,
\begin{align}
\sum_{\mu} z_{\mu} = 1 &= \sum_{\mu} e^{-1 - y}\,e^{\beta
\tu_{\mu}}\ ,\\\nonumber
e^{-1 - y} &= \frac{1}{\sum_{\mu}e^{\beta
\tu_{\mu}}}\ .
\end{align}
Substituting in~\eqref{eq:ujl0}, 
\begin{equation}
u_{\lambda} = \frac{e^{\beta
\tu_{\lambda}}}{\sum_{\mu} e^{\beta
\tu_{\mu}}}\ .
\end{equation}
Note that, $u_{\lambda}\ge 0$ for all $\lambda\in\alllabels$, and
therefore, $\bfu$ satisfies~\eqref{eq:sm_entropy1} which is exactly the
$\softmax$ update~\plaineqref{eq:sm1}.
Hence, the proof is complete.
Furthermore, this proof trivially extends to the case where $m>1$, \ie, when the
$\softmax$ update is defined for each $\tbfu_j$ (where $\tbfu_j\in\R^d$ for
$j\in\allweights$) independently.
\end{proof}

\SKIP{
Recall the $\softmax$ update for $\tbfu^{k}_j$ for $j\in\allweights$:
\begin{align}
\label{eq:sm1}
\bfu^{k}_j &= \softmax({\beta\tbfu^{k}_j})\ ,\quad\mbox{where} \\ \nonumber
u^{k}_{j:\lambda} &= \frac{e^{\beta \tu^{k}_{j:\lambda}}}{\sum_{\mu  \in \calQ}
\; e^{\beta
\tu^{k}_{j:\mu}}}\quad \forall\, \lambda\in\alllabels\ .
\end{align}

\begin{lem}\label{lem:sm1}
Let $\bfu^k = \softmax(\beta\tbfu^k)$. Then,
\begin{equation}\label{eq:sm_entropy1}
\bfu^k = \amax{\bfu\in\calS}\ \left\langle\tbfu^{k}, \bfu\right\rangle\fro +
\frac{1}{\beta}H(\bfu)\ ,
\end{equation} 
where ${H(\bfu) = -\sum_{j=1}^m\sum_{\lambda=1}^{d}u_{j:\lambda}\,\log
u_{j:\lambda}}$ is the entropy.
\end{lem}
\begin{proof}
Now, ignoring the condition $u_{j:\lambda}\ge 0$ for now, the Lagrangian
of~\eqref{eq:sm_entropy1} with dual variables $z_j$ with $j\in\allweights$ can
be
written as:
\begin{align}
F(\bfu, \bfz) = &\beta\left\langle \tbfu^{k}, \bfu\right\rangle\fro + H(\bfu)
+\sum_{j}z_j\left(1-\sum_{\lambda}u_{j:\lambda}\right)\ .
\end{align}
Note that the objective function is multiplied by $\beta>0$.
Now, differentiating $F(\bfu, \bfz)$ with respect to $\bfu$ and setting the
derivatives to zero:
\begin{align}
\frac{\partial F}{u_{j:\lambda}} &= \beta\,\tu^{k}_{j:\lambda} - 1 - \log
u_{j:\lambda} - z_j = 0\ ,\\\nonumber 
\log u_{j:\lambda} &= -1 - z_j + \beta\,\tu^{k}_{j:\lambda}\ ,\\\nonumber
\label{eq:ujl0}
u_{j:\lambda} &= e^{-1 - z_j}\,e^{\beta \tu^{k}_{j:\lambda}}\ .
\end{align}
Since $\sum_{\mu} u_{j:\mu} =1$,
\begin{align}
\sum_{\mu} u_{j:\mu} = 1 &= \sum_{\mu} e^{-1 - z_j}\,e^{\beta
\tu^{k}_{j:\mu}}\ ,\\\nonumber
e^{-1 - z_j} &= \frac{1}{\sum_{\mu}e^{\beta
\tu^{k}_{j:\mu}}}\ .
\end{align}
Substituting in~\eqref{eq:ujl0}, 
\begin{equation}
u_{j:\lambda} = \frac{e^{\beta
\tu^{k}_{j:\lambda}}}{\sum_{\mu} e^{\beta
\tu^{k}_{j:\mu}}}\ .
\end{equation}
Note that, $u_{j:\lambda}\ge 0$ for all $j\in\allweights$ and $\lambda\in\alllabels$, and
therefore, $\bfu$ satisfies~\eqref{eq:sm_entropy1} which is exactly the
$\softmax$ update~\plaineqref{eq:sm1}.
Hence, the proof is complete.
\end{proof}
}

} 

\section{Mean-field Method}\label{ap:mf}
For completeness we briefly review the underlying theory of the mean-field
method.
For in-depth details, we refer the interested reader to the Chapter~\myref{5}
of~\cite{wainwright2008graphical}.
Furthermore, for background on \acrfull{MRF}, we refer the reader to the
Chapter~\myref{2} of~\cite{ajanthanphdthesis}.
In this section, we use the notations from the main paper and highlight the
similarities wherever possible.

\paragraph{Markov Random Field.}
Let $\calW = \{W_1,\ldots,W_m\}$ be a set of random variables, where each random
variable $W_j$ takes a label $w_j\in\calQ$. 
For a given labelling $\bfw\in\calQ^m$, the energy associated with an \gls{MRF}
can be written as:
\begin{equation}\label{eq:mrfe}
L(\bfw) = \sum_{C\in\calC} L_C(\bfw)\ , 
\end{equation}
where $\calC$ is the set of subsets (cliques) of $\calW$ and $L_C(\bfw)$ is a
positive function (factor or clique potential) that depends only on the values
$w_j$ for $j\in C$.
Now, the joint probability distribution over the random variables can be written
as:
\begin{equation}\label{eq:mrfp}
P(\bfw) = \frac{1}{Z}e^{-L(\bfw)}\ , 
\end{equation}
 where the normalization constant $Z$ is usually referred to as the partition
function.
 From Hammersley-Clifford theorem, for the factorization given
in~\eqref{eq:mrfe}, the joint probability distribution $P(\bfw)$ can be shown to
factorize over each clique $C\in\calC$, which is essentially the Markov
property.
 However, this Markov property is not necessary to write~\eqref{eq:mrfp} and in
turn for our formulation, but since mean-field is usually described in the
context of \gls{MRF}s we provide it here for completeness.
 The objective of mean-field is to obtain the most probable configuration, which
is equivalent to minimizing the energy $L(\bfw)$.

\paragraph{Mean-field Inference.} 
 The basic idea behind mean-field is to approximate the intractable probability
distribution $P(\bfw)$ with a tractable one.
 Specifically, mean-field obtains a fully-factorized distribution (\ie, each
random variable $W_j$ is independent) closest to the true distribution $P(\bfw)$
in terms of \acrshort{KL}-divergence. 
Let $U(\bfw) = \prod_{j=1}^m U_{j}(w_j)$ denote a fully-factorized
distribution.
\SKIP{ 
For simplicity, we introduce variables $u_{j:\lambda}$ to denote the probability
of random variable $W_j$ taking the label $\lambda$, where the vector $\bfu$
satisfy:
\begin{equation}
\bfu\in\calS = \left\{\begin{array}{l|l}
\multirow{2}{*}{$\bfu$} & \sum_{\lambda} u_{j:\lambda} = 1, \quad\forall\,j\\
&u_{j:\lambda} \ge 0,\ \ \ \quad\quad\forall\,j, \lambda \end{array} \right\}\
.
\end{equation}
}
Recall, the variables $\bfu$ introduced in~Sec.~\myref{2.2} represent the
probability of each weight $W_j$ taking a label $\lambda$.
Therefore, the distribution $U$ can be represented using the variables
$\bfu\in\calS$, where $\calS$ is defined as:
\begin{equation}
\calS = \left\{\begin{array}{l|l}
\multirow{2}{*}{$\bfu$} & \sum_{\lambda} u_{j:\lambda} = 1, \quad\forall\,j\\
&u_{j:\lambda} \ge 0,\ \ \ \quad\quad\forall\,j, \lambda \end{array} \right\}\
.
\end{equation} 
The \acrshort{KL}-divergence between $U$ and $P$ can be written as:
\begin{align}\label{eq:mfkl}
\kl{U}{P} &= \sum_{\bfw\in\calQ^m} U(\bfw)\log\frac{U(\bfw)}{P(\bfw)}\
,\\\nonumber
 &= \sum_{\bfw\in\calQ^m} U(\bfw)\log U(\bfw) - \sum_{\bfw\in\calQ^m}
U(\bfw)\log P(\bfw)\ ,\\\nonumber
 &= -H(U) - \sum_{\bfw\in\calQ^m} U(\bfw)\log \frac{e^{-L(\bfw)}}{Z}\
,\quad\mbox{\eqref{eq:mrfp}}\ ,\\\nonumber
 &= -H(U) + \sum_{\bfw\in\calQ^m} U(\bfw)L(\bfw) + \log Z\ .\\\nonumber
\end{align}
Here, $H(U)$ denotes the entropy of the fully-factorized distribution. 
Specifically,
\begin{equation}
H(U) = H(\bfu) = -\sum_{j=1}^m\sum_{\lambda=1}^{d}u_{j:\lambda}\,\log
u_{j:\lambda}\ .
\end{equation} 
Furthermore, in~\eqref{eq:mfkl}, since $Z$ is a constant, it can be removed from
the minimization.
Hence the final mean-field objective can be written as:
\begin{align}
\min_{U} F(U) &:= \sum_{\bfw\in\calQ^m} U(\bfw)L(\bfw) - H(U)\ ,\\\nonumber
&= \E_{U}[L(\bfw)] - H(U)\ ,\\\nonumber
\end{align}
where $\E_{U}[L(\bfw)]$ denotes the expected value of the loss $L(\bfw)$ over
the distribution $U(\bfw)$.
Note that, the expected value of the loss can be written as a function of the
variables $\bfu$. 
In particular, 
\begin{align}
E(\bfu) &:= \E_{U}[L(\bfw)] =  \sum_{\bfw\in\calQ^m} U(\bfw)L(\bfw)\
,\\\nonumber
&=  \sum_{\bfw\in\calQ^m} \prod_{j=1}^m u_{j:w_j} L(\bfw)\ .\\\nonumber
\end{align}
Now, the mean-field objective can be written as an optimization over $\bfu$:
\begin{equation}\label{eq:mfobj}
\min_{\bfu\in\calS} F(\bfu) := E(\bfu) - H(\bfu)\ .
\end{equation}
Computing this expectation $E(\bfu)$ in general is intractable as the sum is
over an exponential number of elements ($|\calQ|^m$ elements, where $m$ is
usually in the order millions for an image or  a neural network).
However, for an \gls{MRF}, the energy function $L(\bfw)$ can be factorized
easily as in~\eqref{eq:mrfe} (\eg, unary and pairwise terms) and $E(\bfu)$ can
be computed fairly easily as the distribution $U$ is also fully-factorized.

In mean-field, the above objective~\plaineqref{eq:mfobj} is minimized
iteratively using a fixed point update. 
This update is derived by writing the Lagrangian and setting the derivatives
with respect to $\bfu$ to zero.
At iteration $k$, the mean-field update for each $j\in\allweights$ can be written
as:
\begin{equation}
u^{k+1}_{j:\lambda} = \frac{\exp(-\partial E^k/\partial
u_{j:\lambda})}{\sum_{\mu} \exp(-\partial E^k/\partial u_{j:\mu})}\quad
\forall\,\lambda\in\alllabels\ .
\end{equation}
Here, $\frac{\partial E^{k}}{\partial u_{j:\lambda}}$ denotes the gradient of
$E(\bfu)$ with respect to $u_{j:\lambda}$ evaluated at $u^k_{j:\lambda}$.
This update is repeated until convergence.
Once the distribution $U$ is obtained, finding the most probable configuration
is straight forward, since $U$ is a product of independent distributions over
each random variable $W_j$.
Note that, as most probable configuration is exactly the minimum label
configuration, the mean-field method iteratively minimizes the actual energy
function $L(\bfw)$.

\SKIP{
\paragraph{Similarity to Our Algorithm.}
Note that, the mean-field objective~\plaineqref{eq:mfobj} and our objective at
each iteration~\plaineqref{eq:spgd_mf_p1} are very similar. 
First of all, $\beta=1$ in the mean-field case.
Then, the expectation $E(\bfu)$ is replaced by the first-order approximation of
$\tL(\bfu)$ augmented by the proximal term $\left\langle\bfu^{k},
\bfu\right\rangle$.
Specifically,
\begin{equation}
E(\bfu) = \E_{\bfu}[L(\bfw)] \approx \eta\left\langle \bfg^{k}_{\bfu},
\bfu\right\rangle - \left\langle\bfu^{k}, \bfu\right\rangle\ .
\end{equation}
Note that the exact form of the first-order Taylor approximation of $\tL(\bfu)$
at $\bfu^k$ is:
\begin{align}
\tL(\bfu) &\approx \tL(\bfu^k) + \left\langle \bfg^{k}_{\bfu},
\bfu-\bfu^k\right\rangle\ ,\\\nonumber
&= \left\langle \bfg^{k}_{\bfu}, \bfu\right\rangle + c\ ,
\end{align} 
where $c$ is a constant that does not depend on $\bfu$.
In fact, minimizing the first-order approximation of $\tL(\bfu)$ can be shown to
be equivalent to minimizing the expected first-order approximation of
$L(\bfw)$.
To this end, it is enough to show the relationship between $\left\langle
\bfg^{k}_{\bfu}, \bfu\right\rangle$ and $\E_U\left[\left\langle \bfg^{k}_{\bfw},
\bfw\right\rangle\right]$.
\begin{pro}\label{pro:exp-linear1}
Let $\bfw^k = \bfu^k\bfw$, and $\bfg^k_{\bfw}$ and $\bfg^k_{\bfu}$ be gradients
of $L$ and $\tL$ computed at $\bfw^k$ and $\bfu^k$, respectively. Then,
\begin{equation}
\left\langle \bfg^{k}_{\bfu}, \bfu\right\rangle\frac{\bfq}{\bfq^T\bfq} =
\E_{\bfu}\left[\left\langle \bfg^{k}_{\bfw}, \bfw\right\rangle\right]\ ,
\end{equation} 
where $\bfu\in\calS$, $\bfw\in\calQ^m$. 
\end{pro} 
\begin{proof}
The proof is simply applying the definition of $\bfu$ and the chain rule for
$\bfw=\bfu\bfq$.
\begin{align}
\E_{\bfu}\left[\left\langle \bfg^{k}_{\bfw}, \bfw\right\rangle\right] &=
\left\langle \bfg^{k}_{\bfw}, \sum_{\bfw\in\calQ^m} \prod_{j=1}^m u_{j:w_j}
\bfw\right\rangle\ ,\\\nonumber
&= \left\langle \bfg^{k}_{\bfw}, \bfu\bfq\right\rangle\ ,\quad\mbox{Definition
of $\bfu$}\ ,\\\nonumber
&= \left\langle \bfg^{k}_{\bfu}\frac{\bfq}{\bfq^T\bfq}, \bfu\bfq\right\rangle\
,\quad\mbox{$\bfg^{k}_{\bfu}=\bfg^{k}_{\bfw}\bfq^T$}\ ,\\\nonumber
&= \left\langle \bfg^{k}_{\bfu}, \bfu\right\rangle\frac{\bfq}{\bfq^T\bfq}\ .
\end{align}
\end{proof}
Since, $\bfq/\left(\bfq^T\bfq\right)$ is constant, our \acrfull{PMF} is
analogous to mean-field in the sense that it minimizes the first-order
approximation of the actual loss function $L(\bfw)$ augmented by the proximal
term (\ie, cosine similarity).
Note that, in contrast to the standard mean-field, our iterative updates are
exactly solving the problem~\plaineqref{eq:spgd_mf_p1}.

Specifically, our algorithm first linearizes the non-convex objective $L(\bfw)$,
adds a proximal term, and then performs an exact mean-field update, while the
standard mean-field iteratively minimizes the original function $L(\bfw)$.
In fact, by first linearizing, we discard any higher-order factorizations in the
objective $L(\bfw)$, which makes our approach attractive to neural networks.
Furthermore, it allows us to use any off-the-shelf \gls{SGD} based method. 
\NOTE{to be revised and some stuff in the main paper}
}

\SKIP{
\section{Softmax based PGD as Proximal Mean-field}\label{ap:spgdmf}
Recall the $\softmax$ based \gls{PGD} update ${\bfu^{k+1} =
\softmax(\beta(\bfu^k - \eta\,\bfg^k))}$ for each $j\in\allweights$ can be
written as:
\begin{align}\label{eq:spgdup1}
u^{k+1}_{j:\lambda} &= \frac{e^{\beta
\left(u^{k}_{j:\lambda}-\eta\,g^{k}_{j:\lambda}\right)}}{\sum_{\mu  \in \calQ}
\; e^{\beta
\left(u^{k}_{j:\mu}-\eta\,g^{k}_{j:\mu}\right)}}\quad \forall\, \lambda\in\alllabels\
.
\end{align}
Here, $\eta>0$, and $\beta>0$.

\begin{thm}\label{thm:spgd_mf1}
Let $\bfu^{k+1} = \softmax({\beta(\bfu^k -\eta\,\bfg^k)})$ be the point from the
$\softmax$ based \gls{PGD} update. Then,
\begin{equation}\label{eq:spgd_mf_p1}
\bfu^{k+1} = \amin{\bfu\in\calS}\ \eta\,\E_{\bfu}\left[\hL^k(\bfw)\right] -
\left\langle\bfu^{k}, \bfu\right\rangle\fro - \frac{1}{\beta}H(\bfu)\ ,
\end{equation}
where $\hL^k(\bfw)$ is the first-order Taylor approximation of $L$ at
$\bfw^k=\bfu^k\bfq$ and $\eta>0$ is the learning rate.
\end{thm}
\begin{proof}
We will first prove that $\E_{\bfu}\left[\hL^k(\bfw)\right] = \left\langle
\bfg_{\bfu}^k, \bfu\right\rangle\fro + c$ for some constant $c$.
From the definition of $\hL^k(\bfw)$,
\begin{align}
\hL^k(\bfw) &= L(\bfw^k) + \left\langle \bfg^{k}_{\bfw},
\bfw-\bfw^k\right\rangle\ ,\\\nonumber
&= \left\langle \bfg^{k}_{\bfw}, \bfw\right\rangle + c\ ,
\end{align}
where $c$ is a constant that does not depend on $\bfw$.
Now, the expectation over $\bfu$ can be written as:
\begin{align}
\E_{\bfu}\left[\hL^k(\bfw)\right] &= \E_{\bfu}\left[\left\langle
\bfg^{k}_{\bfw}, \bfw\right\rangle\right] + c\ ,\\\nonumber
&= \left\langle \bfg^{k}_{\bfw}, \E_{\bfu}[\bfw]\right\rangle + c\ ,\\\nonumber
&= \left\langle \bfg^{k}_{\bfw}, \bfu\bfq\right\rangle + c\
,\quad\mbox{Definition of $\bfu$}\ .
\end{align} 
We will now show that $\left\langle \bfg^{k}_{\bfw},
\bfu\bfq\right\rangle=\left\langle \bfg^{k}_{\bfu}, \bfu\right\rangle\fro$.
To see this, let us consider an element $j\in\allweights$,
\begin{align}
g^{k}_{w_j} \langle\bfu_j,\bfq\rangle & = g^{k}_{w_j} \langle\bfq,\bfu_j\rangle
\ ,\\\nonumber
&= g^{k}_{w_j} \bfq^T\bfu_j\ ,\\\nonumber
&= g^{k}_{\bfu_j} \bfu_j\ ,\quad\mbox{$\bfg^k_{\bfu} = \bfg^k_{\bfw}\,\bfq^T$}\
.
\end{align}
From the above equivalence,~\eqref{eq:spgd_mf_p1} can now be written as:
\begin{align}\label{eq:spgd_mf_p2}
\bfu^{k+1} &= \amin{\bfu\in\calS}\ \eta\left\langle \bfg_{\bfu}^k,
\bfu\right\rangle\fro - \left\langle\bfu^{k}, \bfu\right\rangle\fro -
\frac{1}{\beta}H(\bfu)\ ,\\\nonumber
&= \amax{\bfu\in\calS}\ \left\langle \bfu^k - \eta\,\bfg_{\bfu}^k,
\bfu\right\rangle\fro + \frac{1}{\beta}H(\bfu)\ .
\end{align}
Now, from~\lemref{lem:sm1} we can write ${\bfu^{k+1} = \softmax({\beta(\bfu^k
-\eta\,\bfg^k)})}$.
\SKIP{
Now, ignoring the condition $u_{j:\lambda}\ge 0$ for now, the Lagrangian
of~\eqref{eq:spgd_mf_p2} with dual variables $z_j$ with $j\in\allweights$ can
be
written as:\footnote{For notational clarity, we denote the gradient of $\tL$
with respect to $\bfu$ evaluated at $\bfu^k$ as $\bfg^k$, \ie, $\bfg^k :=
\bfg^k_{\bfu}$.}
\begin{align}
F(\bfu, \bfz) = &\beta\eta\left\langle \bfg^{k}, \bfu\right\rangle\fro -
\beta\left\langle\bfu^{k}, \bfu\right\rangle\fro - H(\bfu) +\\\nonumber
&\sum_{j}z_j\left(1-\sum_{\lambda}u_{j:\lambda}\right)\ .
\end{align}
Note that the objective function is multiplied by $\beta>0$.
Now, differentiating $F(\bfu, \bfz)$ with respect to $\bfu$ and setting the
derivatives to zero:
\begin{align}
\frac{\partial F}{u_{j:\lambda}} &= \beta\eta\,g^{k}_{j:\lambda} -
\beta\,u^{k}_{j:\lambda} + 1 + \log u_{j:\lambda} - z_j = 0\
,\\[-0.5cm]\nonumber \log u_{j:\lambda} &= z_j - 1 + \beta\,u^{k}_{j:\lambda} -
\beta\eta\,g^{k}_{j:\lambda}\ ,\\\nonumber
\label{eq:ujl}
u_{j:\lambda} &= e^{z_j - 1}\,e^{\beta
\left(u^{k}_{j:\lambda}-\eta\,g^{k}_{j:\lambda}\right)}\ .
\end{align}
Since $\sum_{\mu} u_{j:\mu} =1$,
\begin{align}
\sum_{\mu} u_{j:\mu} = 1 &= \sum_{\mu} e^{z_j - 1}\,e^{\beta
\left(u^{k}_{j:\mu}-\eta\,g^{k}_{j:\mu}\right)}\ ,\\\nonumber
e^{z_j - 1} &= \frac{1}{\sum_{\mu}e^{\beta
\left(u^{k}_{j:\mu}-\eta\,g^{k}_{j:\mu}\right)}}\ .
\end{align}
Substituting in~\eqref{eq:ujl}, 
\begin{equation}
u_{j:\lambda} = \frac{e^{\beta
\left(u^{k}_{j:\lambda}-\eta\,g^{k}_{j:\lambda}\right)}}{\sum_{\mu} e^{\beta
\left(u^{k}_{j:\mu}-\eta\,g^{k}_{j:\mu}\right)}}\ .
\end{equation}
Note that, $u_{j:\lambda}\ge 0$ for all $j\in\allweights$ and $\lambda\in\alllabels$, and
therefore, $\bfu$ is a fixed point of~\eqref{eq:spgd_mf_p1}. Furthermore,
this is exactly the same formula as the softmax based \gls{PGD}
update~\plaineqref{eq:spgdup1}.
}
Hence, the proof is complete.
\end{proof}
} 

\section{BinaryConnect as Proximal ICM}\label{ap:bc_vs_pgdh}
\begin{pro}\label{pro:bc_vs_pgdh1}
%
Consider \gls{BC} and \gls{PICM} with $\bfq = [-1,1]^T$ and $\eta_{\bfw} > 0$.
For an iteration $k>0$, if $\tbfw^k= \tbfu^k\bfq$ then, 
\begin{enumerate}
  \item the projections in \acrshort{BC}: ${\bfw^k = \sign(\tbfw^k)}$ and\\ 
   \acrshort{PICM}: ${\bfu^k = \hardmax(\tbfu^k)}$  
  satisfy  ${\bfw^k = \bfu^k\bfq}$.
  
  \item if $\eta_{\bfu} = \eta_{\bfw}/2$, then the updated points after the
gradient descent step in \acrshort{BC} and \acrshort{PICM} satisfy
  ${\tbfw^{k+1} = \tbfu^{k+1}\bfq}$.
\end{enumerate}
\end{pro}
\begin{proof}
\begin{enumerate}
  \item In the binary case ($\calQ = \{-1,1\}$), for each $j\in\allweights$,
  the $\hardmax$ projection can be written as:
  \begin{align}\label{eq:puarg}
  u^k_{j:-1} &= \left\{\begin{array}{ll}
1 & \mbox{if $\tu^k_{j:-1} \ge \tu^k_{j:1}$}\\\nonumber
0 & \mbox{otherwise} \end{array} \right.\ ,\\
u^k_{j:1} &= 1- u^k_{j:-1}\ .
  \end{align}
  Now, multiplying both sides by $\bfq$, and substituting $\tw_j^k =  
  \tbfu^k_j\bfq$,
  \begin{align}
   \bfu^k_j\bfq &= \left\{\begin{array}{ll}
-1 & \mbox{if $\tw_j^k = -1\,\tu^k_{j:-1} + 1\,\tu^k_{j:1} \le 0$}\\\nonumber
1 & \mbox{otherwise} \end{array} \right.\ ,\\
w_j^k &= \sign(\tw_j^k)\ .
  \end{align}
  Hence, $\bfw^k = \sign(\tbfw^k) =  \hardmax(\tbfu^k)\bfq$.
  
  \item Since $\bfw^k = \bfu^k\bfq$ from case (1) above, by chain rule the
  gradients $\bfg_{\bfw}^k$ and $\bfg_{\bfu}^k$ satisfy,
  \begin{equation}\label{eq:guw}
  \bfg_{\bfu}^k = \bfg_{\bfw}^k\,\frac{\partial \bfw}{\partial \bfu} =
  \bfg_{\bfw}^k\,\bfq^T\ .
  \end{equation}
  Similarly, from case (1) above, for each $j\in\allweights$,
  \begin{align}
  w^k_j &= \sign(\tw^k_j) = \sign(\tbfu^k_j\,\bfq) = 
  \hardmax(\tbfu^k_j)\,\bfq\ ,\\[-0.5cm]\nonumber
  \frac{\partial w_j}{\partial \tbfu_j} &= \frac{\partial \sign}{\partial
  \tbfu_j} = \frac{\partial \sign}{\partial \tw_j}
  \frac{\partial \tw_j}{\partial \tbfu_j} = \frac{\partial \hardmax}{\partial
  \tbfu_j}\, \bfq\ .
  \end{align}
  Here, the partial derivatives are evaluated at $\tbfu=\tbfu^k$ but
  omitted for notational clarity.
  Moreover, $\frac{\partial w_j}{\partial \tbfu_j}$ is a $d$-dimensional column
  vector, $\frac{\partial \sign}{\partial \tw_j}$ is a scalar, and
  $\frac{\partial \hardmax}{\partial \tbfu_j}$ is a $d \times d$ matrix.
  Since, $\frac{\partial \tw_j}{\partial \tbfu_j} = \bfq$ (similar
  to~\eqref{eq:guw}),
  \begin{equation}\label{eq:pdwu}
  \frac{\partial w_j}{\partial \tbfu_j} = \frac{\partial \sign}{\partial
  \tw_j}\, \bfq = \frac{\partial \hardmax}{\partial \tbfu_j}\, \bfq\ .
  \end{equation}
  Now, consider the $\bfg_{\tbfu}^k$ for each $j\in\allweights$,
  \begin{align}
  \bfg_{\tbfu_j}^k &= \bfg_{\bfu_j}^k\,\frac{\partial \bfu_j}{\partial \tbfu_j}
  = \bfg_{\bfu_j}^k\,\frac{\partial \hardmax}{\partial \tbfu_j}\ ,\\\nonumber
  \bfg_{\tbfu_j}^k\,\bfq &= \bfg_{\bfu_j}^k\,\frac{\partial \hardmax}{\partial
  \tbfu_j}\,\bfq\ ,\quad\mbox{multiplying by $\bfq$}\ ,\\\nonumber
  &= g_{w_j}^k\,\bfq^T\,\frac{\partial \hardmax}{\partial
  \tbfu_j}\,\bfq\ ,\quad\mbox{\eqref{eq:guw}}\ ,\\\nonumber
  &= g_{w_j}^k\,\bfq^T\,\frac{\partial \sign}{\partial
  \tw_j}\, \bfq\ ,\quad\mbox{\eqref{eq:pdwu}}\ ,\\\nonumber
  &= g_{w_j}^k\,\frac{\partial \sign}{\partial
  \tw_j}\,\bfq^T\,\bfq\ ,\\\nonumber
  &= g_{\tw_j}^k\,\bfq^T\,\bfq\ ,\quad\mbox{$\frac{\partial \sign}{\partial
  \tw_j} = \frac{\partial w_j}{\partial
  \tw_j}$}\ ,\\\nonumber
  &= 2\,g_{\tw_j}^k\ ,\quad\mbox{$\bfq = [-1, 1]^T$}\ .
  \end{align}
  Now, consider the gradient descent step for $\tbfu$, with $\eta_{\bfu} =
  \eta_{\bfw}/2$,
  \begin{align}
  \tbfu^{k+1} &= \tbfu^{k} - \eta_{\bfu}\,\bfg_{\tbfu}^k\ ,\\\nonumber
  \tbfu^{k+1}\,\bfq &= \tbfu^{k}\,\bfq - \eta_{\bfu}\,\bfg_{\tbfu}^k\,\bfq\
  ,\\\nonumber
   &= \tbfw^{k} - \eta_{\bfu}\,2\,\bfg_{\tbfw}^k\ ,\\\nonumber
   &= \tbfw^{k} - \eta_{\bfw}\,\bfg_{\tbfw}^k\ ,\\\nonumber
   &= \tbfw^{k+1}\ .
  \end{align}
  Hence, the proof is complete.
\end{enumerate}
\end{proof}

Note that, in the implementation of \gls{BC}, the auxiliary variables $\tbfw$
are clipped between $[-1,1]$ as it does not affect the $\sign$ function.
In the $\bfu$-space, this clipping operation would translate into a projection
to the polytope $\calS$, meaning $\tbfw\in[-1,1]$ implies $\tbfu\in\calS$, where
$\tbfw$ and $\tbfu$ are related according to $\tbfw=\tbfu\bfq$.
Even in this case,~\proref{pro:bc_vs_pgdh1} holds, as the assumption
$\tbfw^k=\tbfu^k\bfq$ is still satisfied.

\subsection{Approximate Gradients through Hardmax}\label{ap:gradhp}
In previous works~\cite{courbariaux2015binaryconnect,rastegari2016xnor}, to
allow back propagation through the $\sign$ function, the
straight-through-estimator~\cite{stehinton} is used. 
Precisely, the partial derivative with respect to the $\sign$ function is
defined as:
\begin{equation}\label{eq:gsign}
\frac{\partial \sign(r)}{\partial r} := \I[|r| \le 1]\ .
\end{equation}
To make use of this, we intend to write the projection
function $\hardmax$ in terms of the $\sign$ function. To this end,
from~\eqref{eq:puarg}, for each $j\in\allweights$,
  \begin{align}
  u^k_{j:-1} &= \left\{\begin{array}{ll}
1 & \mbox{if $\tu^k_{j:-1} - \tu^k_{j:1} \ge 0$}\\
0 & \mbox{otherwise} \end{array} \right.\ ,\\
u^k_{j:1} &= 1- u^k_{j:-1}\ .
  \end{align}
  Hence, the projection $\hardmax(\tbfu^k)$ for each $j$ can be written
  as:
  \begin{align}
  u^k_{j:-1} &= \frac{\sign(\tu^k_{j:-1} - \tu^k_{j:1}) + 1}{2}\ ,\\
u^k_{j:1} &= \frac{1 - \sign(\tu^k_{j:-1} - \tu^k_{j:1})}{2}\ .
  \end{align}
  Now, using~\eqref{eq:gsign}, we can write:
  \begin{equation}
  \left.\frac{\partial \bfu_j}{\partial \tbfu_j}\right|_{\tbfu_j =
  \tbfu^k_j} =\frac{1}{2}
  \begin{bmatrix}
  \I[|\upsilon^k_j| \le 1] &
  -\I[|\upsilon^k_j| \le 1]\\[0.1cm]
  -\I[|\upsilon^k_j| \le 1] &
  \I[|\upsilon^k_j| \le 1]
  \end{bmatrix}\ ,
  \end{equation}
  where $\upsilon^k_j = \tu^k_{j:-1} - \tu^k_{j:1}$.
   
\section{Experimental Details}\label{ap:expr}
To enable reproducibility, we first give the hyperparameter settings used to
obtain the results reported in the main paper in~\tabref{tab:hyper}.

\begin{table}[t]
    \centering
    \begin{tabular}{llcc}
        \toprule
        Dataset & Architecture & \gls{PMF} wo
        $\tbfu$ & \gls{PMF} \\
        \midrule
        \multirow{2}{*}{\mnist}
         & \slenet{-300} & $96.74$ & $\textbf{98.24}$ \\
         & \slenet{-5}   & $98.78$ & $\textbf{99.44}$\\
        \midrule
        \multirow{2}{*}{\cifar{-10}}
         & \svgg{-16} 	 & $80.18$ & $\textbf{90.51}$\\
         & \sresnet{-18} & $87.36$ & $\textbf{92.73}$\\
        \bottomrule
    \end{tabular}
    \vspace{1ex}
    \caption{\em
    	Comparison of \gls{PMF} with and without storing the auxiliary variables
    	$\tbfu$. Storing the auxiliary variables and updating them is in fact
    	improves the overall performance. However, even without storing $\tbfu$,
    	\gls{PMF} obtains reasonable performance, indicating the usefulness of our
    	relaxation.}
    \label{tab:pmfa}
\end{table}

\subsection{Proximal Mean-field Analysis}
To analyse the effect of storing the auxiliary variables $\tbfu$
in Algorithm~\myref{1}, we evaluate \gls{PMF} with and without storing $\tbfu$,
meaning the variables $\bfu$ are updated directly. 
The results are reported in~\tabref{tab:pmfa}.
Storing the auxiliary variables and updating them is in fact improves the
overall performance. However, even without storing $\tbfu$, \gls{PMF} obtains
reasonable performance, indicating the usefulness of our continuous relaxation.
Note that, if the auxiliary variables are not stored in \gls{BC}, it is
impossible to train the network as the quantization error in the gradients are
catastrophic and single gradient step is not sufficient to move from one
discrete point to the next.

\subsection{Effect of Annealing Hyperparameter $\beta$}
In Algorithm~\myref{1}, the annealing hyperparameter is gradually increased by a
multiplicative scheme. 
Precisely, $\beta$ is updated according to $\beta = \rho\beta$ for some
$\rho>1$.
Such a multiplicative continuation is a simple scheme suggested in
Chapter~\myref{17} of~\cite{nocedal2006numerical} for penalty methods.
To examine the sensitivity of the continuation parameter $\rho$, we report the
behaviour of \acrshort{PMF} on \cifar{-10} with \sresnet{-18} for various values
of $\rho$ in~\figref{fig:rho}.

\begin{figure}[t]
\begin{center}
\includegraphics[width=\linewidth]{CIFAR10_RESNET18_epoch_val_acc_rho.pdf}	
\vspace{-3ex}
\caption{\em \acrshort{PMF} results on \cifar{10} with \sresnet{-18} by varying
$\rho$ values. 
While \acrshort{PMF} is robust to a range of $\rho$ values, the longer
exploration phase in lower values of $\rho$ tend to yield slightly better final
accuracies.
}
\label{fig:rho}
\end{center}
\end{figure}

\begin{table*}[t]
\small
    \centering
\begin{tabular}{l|ccccc|ccccc}
\toprule
    \multirow{2}{*}{Hyperparameter} & \multicolumn{5}{c|}{\mnist{} with
\slenet{-300/5}} & \multicolumn{5}{c}{\tinyimagenet{} with \sresnet{-18}}\\
     & \acrshort{REF} & \acrshort{BC}/\acrshort{PICM} & \acrshort{PQ} &
\acrshort{PGD} & \acrshort{PMF} & \acrshort{REF} & \acrshort{BC}/\acrshort{PICM}
& \acrshort{PQ} & \acrshort{PGD} & \acrshort{PMF}\\
\midrule

learning\_rate                                 & 0.001 & 0.001 & 0.01  & 0.001 &
0.001 & 0.1            & 0.0001 & 0.01   & 0.1            & 0.001\\
      lr\_decay                                & step  & step  & -     & step  &
step  & step           & step   & step   & step           & step\\
   lr\_interval                                & 7k    & 7k    & -     & 7k    &
7k    & 60k            & 30k    & 30k 	  & 30k            & 30k\\
      lr\_scale                                & 0.2   & 0.2   & -     & 0.2   &
0.2   & 0.2            & 0.2    & 0.2 	  & 0.2            & 0.2\\  
      momentum                                 & -     & -     & -     & -     &
-     & 0.9            & -      & -      & 0.95           & - \\
     optimizer                                 & Adam  & Adam  & Adam  & Adam  &
Adam  & \acrshort{SGD} & Adam   & Adam   & \acrshort{SGD} & Adam\\
  weight\_decay                                & 0     & 0     & 0     & 0     &
0     & 0.0001         & 0.0001 & 0.0001 & 0.0001         & 0.0001\\
   $\rho$ (ours) or reg\_rate (\acrshort{PQ})  & -     & -     & 0.001 & 1.2   &
1.2   & -              & -      & 0.0001 & 1.01           & 1.02\\

\midrule
     & \multicolumn{5}{c|}{\cifar{-10} with \svgg{-16}} &
\multicolumn{5}{c}{\cifar{-10} with \sresnet{-18}}\\
     & \acrshort{REF} & \acrshort{BC}/\acrshort{PICM} & \acrshort{PQ} &
\acrshort{PGD} & \acrshort{PMF} & \acrshort{REF} & \acrshort{BC}/\acrshort{PICM}
& \acrshort{PQ} & \acrshort{PGD} & \acrshort{PMF}\\
\midrule
 learning\_rate                                & 0.1            & 0.0001 & 0.01 
 & 0.0001 & 0.001  & 0.1            & 0.0001 & 0.01   & 0.1            &
0.001\\
      lr\_decay                                & step           & step   & -    
 & step   & step   & step           & step   & -      & step           & step\\
   lr\_interval                                & 30k            & 30k    & -    
 & 30k    & 30k    & 30k            & 30k    & -      & 30k            & 30k\\
      lr\_scale                                & 0.2            & 0.2    & -    
 & 0.2    & 0.2    & 0.2            & 0.2    & -      & 0.2            & 0.2\\
      momentum                                 & 0.9            & -      & -    
 & -      & -      & 0.9            & -      & -      & 0.9            & -\\
     optimizer                                 & \acrshort{SGD} & Adam   & Adam 
 & Adam   & Adam   & \acrshort{SGD} & Adam   & Adam   & \acrshort{SGD} & Adam\\
  weight\_decay                                & 0.0005         & 0.0001 &
0.0001 & 0.0001 & 0.0001 & 0.0005         & 0.0001 & 0.0001 & 0.0001         &
0.0001\\
   $\rho$ (ours) or reg\_rate (\acrshort{PQ})  & -              & -      &
0.0001 & 1.05   & 1.05   & -              & -      & 0.0001 & 1.01           &
1.02\\

\midrule
     & \multicolumn{5}{c|}{\cifar{-100} with \svgg{-16}} &
\multicolumn{5}{c}{\cifar{-100} with \sresnet{-18}}\\
     & \acrshort{REF} & \acrshort{BC}/\acrshort{PICM} & \acrshort{PQ} &
\acrshort{PGD} & \acrshort{PMF} & \acrshort{REF} & \acrshort{BC}/\acrshort{PICM}
& \acrshort{PQ} & \acrshort{PGD} & \acrshort{PMF}\\
\midrule
 learning\_rate                               & 0.1                  & 0.01     
     & 0.01               & 0.0001               & 0.0001               & 0.1   
              & 0.0001               & 0.01               & 0.1                 
& 0.001\\
      lr\_decay                               & step                 &
multi-step     & -                  & step                 & step               
 & step                 & step                 & step                 & step    
            & multi-step\\
   \multirow{2}{*}{lr\_interval}              & \multirow{2}{*}{30k} & 20k -
80k,     & \multirow{2}{*}{-} & \multirow{2}{*}{30k} & \multirow{2}{*}{30k} &
\multirow{2}{*}{30k} & \multirow{2}{*}{30k} & \multirow{2}{*}{30k} &
\multirow{2}{*}{30k} & 30k - 80k, \\
                                              &                      & every 10k
     &                    &                      &                      &       
              &                      &                      &                   
  & every 10k\\
      lr\_scale                               & 0.2                  & 0.5      
     & -                  & 0.2                  & 0.2                  & 0.1   
              & 0.2                  & 0.2                  & 0.2               
  & 0.5\\
      momentum                                & 0.9                  & 0.9      
     & -                  & -                    & -                    & 0.9   
              & -                    & -                    & 0.95              
  & 0.95\\
     optimizer                                & \acrshort{SGD}       &
\acrshort{SGD} & Adam               & Adam                 & Adam               
 & \acrshort{SGD}       & Adam                 & Adam                 &
\acrshort{SGD}       & \acrshort{SGD}\\
  weight\_decay                               & 0.0005               & 0.0001   
     & 0.0001             & 0.0001               & 0.0001               & 0.0005
              & 0.0001               & 0.0001               & 0.0001            
  & 0.0001\\
   $\rho$ (ours) or reg\_rate (\acrshort{PQ}) & -                    & -        
     & 0.0001             & 1.01                 & 1.05                 & -     
              & -                    & 0.0001               & 1.01              
  & 1.05\\

\bottomrule
\end{tabular}
\vspace{1ex}
    \caption{\em Hyperparameter settings used for the experiments. 
	Here, if {\em lr\_decay == step}, then the learning rate is multiplied by {\em
lr\_scale} for every {\em lr\_interval} iterations.
	On the other hand, if {\em lr\_decay == multi-step}, the learning rate is
multiplied by {\em lr\_scale} whenever the iteration count reaches any of the
milestones specified by {\em lr\_interval}. 
	Here, $\rho$ denotes the growth rate of $\beta$ (refer Algorithm~\myref{1}) and
$\beta$ is multiplied by $\rho$ every 100 iterations.}
    \label{tab:hyper}
\end{table*}

\SKIP{
\NOTE{Not sure if this curves are useful, therefore removed!}
\begin{figure*}[t]
    \centering
    \begin{subfigure}{0.25\linewidth}
    \includegraphics[width=0.99\linewidth]
{_CIFAR10_VGG16_final_epoch_train_loss.pdf}
    \end{subfigure}%
    \begin{subfigure}{0.25\linewidth}
    \includegraphics[width=0.99\linewidth]
{_CIFAR10_VGG16_final_epoch_val_acc1.pdf}
    \end{subfigure}%
    \begin{subfigure}{0.25\linewidth}
    \includegraphics[width=0.99\linewidth]
{_CIFAR100_VGG16_final_epoch_train_loss.pdf}
    \end{subfigure}%
    \begin{subfigure}{0.25\linewidth}
    \includegraphics[width=0.99\linewidth]
{_CIFAR100_VGG16_final_epoch_val_acc1.pdf}
    \end{subfigure}
    \vspace{-2ex}
    \caption{\em Training curves for \cifar{-10} (first two) and \cifar{-100}
(last two) with \svgg{-16} (corresponding \sresnet{-18} plots are in
Fig.~\myref{2}). 
    Similar to the main paper, while \acrshort{BC} and \acrshort{PICM} are
extremely noisy, \acrshort{PMF} training curves are fairly smooth and closely
resembles the high-precision reference network.
    The validation accuracy plot for \cifar{-100} for \acrshort{PMF} starts to
decrease  after $180$ epochs (while training loss oscillates around a small
value), this could be interpreted as overfitting to the training set.}
    \label{fig:morecurves-c}
\end{figure*}

\begin{figure}[t]
    \centering
    \begin{subfigure}{0.5\linewidth}
    \includegraphics[width=0.99\linewidth]
{out_TINYIMAGENET200_RESNET18_final_epoch_train_loss.pdf}
    \end{subfigure}%
    \begin{subfigure}{0.5\linewidth}
    \includegraphics[width=0.99\linewidth]
{out_TINYIMAGENET200_RESNET18_final_epoch_val_acc1.pdf}
    \end{subfigure}
    \vspace{-2ex}
    \caption{\em Training curves for \tinyimagenet{} with \sresnet{-18}. 
    Compared to \cifar{-10/100} plots in~\figref{fig:morecurves-c}, this plot is
less noisy but the behaviour is roughly the same.
    \acrshort{PMF} loss curve closely follows the loss curve of high-precision
reference network and \acrshort{PMF} even surpasses \acrshort{REF} in validation
accuracy for epochs between $20$ -- $80$. 
    }
    \label{fig:morecurves-tiny}
\end{figure}
}
}

\ifarxiv
\appendices

\fi
\fi

{\small
\bibliographystyle{ieee_fullname}
\bibliography{pmf}
}

\end{document}
